\newtheorem{theorem}{Theorem}
\newtheorem{lemma}{Lemma}
\newtheorem{remark}{Remark}
\newtheorem{assumption}{Assumption}
\title{Split Conformal Prediction in the Function Space with Neural Operators}
\author {
    % Authors
    David Millard\textsuperscript{\rm 1},
    Lars Lindemann\textsuperscript{\rm 2},
    Ali Baheri\textsuperscript{\rm 1}
}
\begin{document}

\maketitle
\begin{abstract}
\begin{quote}
Uncertainty quantification for neural operators remains an open problem in the infinite-dimensional setting due to the lack of finite-sample coverage guarantees over functional outputs. While conformal prediction offers finite-sample guarantees in finite-dimensional spaces, it does not directly extend to function-valued outputs. Existing approaches (Gaussian processes, Bayesian neural networks, and quantile-based operators) require strong distributional assumptions or yield conservative coverage. This work extends split conformal prediction to function spaces following a two step method. We first establish finite-sample coverage guarantees in a finite-dimensional space using a discretization map in the output function space. Then these guarantees are lifted to the function-space by considering the asymptotic convergence as the discretization is refined. To characterize the effect of resolution, we decompose the conformal radius into discretization, calibration, and misspecification components. This decomposition motivates a regression-based correction to transfer calibration across resolutions. Additionally, we propose two diagnostic metrics (conformal ensemble score and internal agreement) to quantify forecast degradation in autoregressive settings. Empirical results show that our method maintains calibrated coverage with less variation under resolution shifts and achieves better coverage in super-resolution tasks.
\end{quote}
\end{abstract}

\section{Introduction}
Conformal prediction (CP) provides a framework to obtain prediction sets with distribution-free coverage guarantees in finite-sample settings. CP guarantees that ground-truth outcomes lie within predicted sets with a probability $1-\alpha$. While CP has been extensively studied for finite-dimensional prediction tasks involving scalar or vector outputs, comparatively little work has investigated how its finite-sample guarantees extend to function-valued predictions. This gap poses unresolved computational and theoretical challenges \cite{10.3150/21-BEJ1447}. Existing approaches to functional uncertainty quantification (UQ), such as Gaussian processes \cite{SCHULZ20181}, Bayesian neural networks \cite{Goan2020}, or quantile neural operators \cite{ma2024calibrateduncertaintyquantificationoperator}, rely on restrictive distributional assumptions or provide probably approximately correct (PAC) bounds \cite{haussler2018probably}. While PAC-style bounds can offer desirable conditional guarantees, they often require assumptions on model class complexity or generalization behavior. CP, by contrast, guarantees marginal coverage for any predictor.

Addressing the challenges of CP in function spaces is especially relevant for neural operators. This class of models approximates mappings between infinite-dimensional input–output pairs. Architectures such as Fourier neural operators (FNOs) \cite{li2021fourierneuraloperatorparametric} and DeepONets \cite{Lu_2021} offer the potential for discretization-invariant predictions, although real-world performance depends on both the resolution (the number of points sampled) and grid scheme (the method we use to sample those points) \cite{BAHMANI2025118113}. Recent work have extended neural operators to generative settings \cite{KIRANYAZ2021294,Liu_2025_CVPR,NIPS2016_d947bf06}, enabling stochastic sampling of solutions. However, their suitability for CP remains unexplored. More broadly, little prior work has both (i) formally extended CP to the infinite-dimensional function spaces and (ii) studied this in relation to neural operators. 

To bridge these gaps, we develop a conformal prediction framework tailored to function-valued outputs and neural operators. Our approach combines theoretical extensions with practical tools for uncertainty quantification in infinite-dimensional settings. Our main contributions are as follows:
\begin{enumerate}
    \item An extension of split conformal prediction to function spaces. This method reduces the variation, w.r.t the standard $L^2$ norm, of the conformal radius (the $1-\alpha$ quantile on the residual norm) dependent on grid geometry.
    \item A heuristic model of the resolution-dependent conformal radius. This method enables for the improvement in downstream task such as super-resolution.
    \item Two diagnostic metrics, conformal ensemble score and internal agreement. These metrics increase the reliability of ensemble forecasting by building on the components of conformal prediction.
\end{enumerate}

\section{Related Work}

\subsubsection{Uncertainty Quantification in Operator Learning} A key challenge in operator learning is producing reliable uncertainty estimates. Prior methods often relied on distributional assumptions like approximate Gaussian processes~\cite{akhare2023diffhybriduquncertaintyquantificationdifferentiable,zou2023uncertaintyquantificationnoisyinputsoutputs}, pointwise variance estimates~\cite{guo2023ibuqinformationbottleneckbased}, or used loss-based formulations that lacked formal coverage guarantees~\cite{Lara_Benitez_2024}. While a significant advance by Ma et al.~\cite{ma2024calibrateduncertaintyquantificationoperator} provides probably approximately correct (PAC) guarantees for simultaneous, pointwise coverage on a discretized grid, this does not ensure coverage for the function in the continuous domain. Similarly, prior work hierarchically applies split conformal prediction at varying scales and computes the union of their bounds \cite{baheri2025multi}. In contrast, our approach uses split conformal prediction to construct a single uncertainty set over the entire function space. This yields a domain-wide, distribution-free guarantee that is robust to grid discretization.

\subsubsection{Uncertainty Quantification in Ensemble Forecasting} 
Ensemble forecasting is a standard technique for UQ, especially in weather modeling, using initial condition perturbation or deep ensembles~\cite{tran2020novel,scoccimarro1998transients}. This is often an empirical approach, lacking guarantees on the resulting distribution. To approximate the posterior of such models, methods like Monte Carlo (MC) Dropout~\cite{folgoc2021mc} and MC-Sampling~\cite{shapiro2003monte} are commonly used but do not inherently provide a guaranteed coverage probability.
The quality of these probabilistic forecasts is typically evaluated using metrics like the continuous ranked probability score (CRPS)~\cite{Pic_2023,bulte2025uncertainty}. However, a good CRPS does not translate to a formal guarantee for any single forecast. While post-hoc techniques like temperature scaling can improve upon this lack of guarantees~\cite{kull2019beyond}, the literature has not sufficiently investigated forecasting quality via  CP~\cite{qian2023uncertainty,durasov2021masksembles,rahaman2021uncertainty}. To this end, we integrate ensemble methods with conformal calibration to create a diagnostic tool that indicates when an autoregressive ensemble forecast has degraded beyond a predefined reliability threshold of $1-\alpha$.

\section{Preliminaries and Problem Formulation}

\subsubsection{Conformal Prediction} CP provides a distribution-free framework for constructing prediction sets with finite-sample coverage guarantees. Given i.i.d. training data \(\{(x_i, y_i)\}_{i=1}^n \sim \mathcal{D}\)  and a predictive model \(\hat{f}\colon \mathcal{X} \to \mathcal{Y}\), the goal is to construct a prediction set \(\Gamma_\alpha(x)\) such that:
\begin{equation}
\mathbb{P}_{\{(x_i, y_i)\}_{i=1}^n,\, (x, y) \sim \mathcal{D}} \left[ y \in \Gamma_\alpha(x) \right] \geq 1 - \alpha.
\end{equation}
Split CP begins by partitioning the data into two disjoint sets: one for model training \(\mathcal{D}_{\text{train}}\) and another for calibration \(\mathcal{D}_{\text{cal}}\). A model \(\hat{f}\) is trained on \(\mathcal{D}_{\text{train}}\), and nonconformity scores are computed on \(\mathcal{D}_{\text{cal}} = \{(x_i, y_i)\}_{i=1}^m\) via $s_i = \mathcal{A}(x_i, y_i) = \| \hat{f}(x_i) - y_i \|$.
The quantile \(\tau_\alpha\) is then defined as the  \(\lceil(1-\alpha)(n+1)\rceil\)-th value of the ordered scores from the calibration set, and the prediction set is given by:
\begin{equation}
\Gamma_\alpha(x) = \left\{ y : s(\hat{f}(x), y) \leq \tau_\alpha \right\},
\end{equation}
where \(s(\hat{f}(x), y)\) denotes the nonconformity score between the prediction \(\hat{f}(x)\) and candidate output \(y\). In our case, this is the relative weighted \(L^2\) error. This procedure guarantees that for a new test point, the nonconformity score \(s(\hat{f}(x), y)\) will not exceed this threshold with probability at least \(1-\alpha\): \(\mathbb{P}_{\{(x_i, y_i)\}_{i=1}^n,\, (x, y) \sim \mathcal{D}}(s(\hat{f}(x), y) \le \tau_\alpha) \ge 1-\alpha\).

\subsubsection{Neural Operators} 
Neural operators generalize classical neural networks to learn mappings between infinite-dimensional function spaces. Formally, given a (possibly nonlinear) operator \(\mathcal{G} \colon \mathcal{X} \to \mathcal{Y}\) acting on Banach spaces \(\mathcal{X}, \mathcal{Y}\), the goal is to learn a parametric approximation \(\mathcal{G}_\theta\) such that
$
\mathcal{G}_\theta : f \mapsto u \approx \mathcal{G}(f),
$
where \(f \in \mathcal{X}\) is typically a coefficient or source term in a PDE, and \(u \in \mathcal{Y}\) is the corresponding solution. Unlike standard deep learning models, which learn pointwise mappings, operator learning learns entire function-to-function maps, enabling inference on new input functions and resolutions. A generic neural operator maps \(f \mapsto u\) via a composition of a lifting map, multiple integral transformations, and a final projection back to the input domain. We adopt the standard Fourier Neural Operator (FNO) architecture in this work.

\subsection{Problem Formulation}
Given a new input function $f \in \mathcal{X}$, our primary objective is to construct a set-valued predictor $\Gamma_\alpha(f) \subset \mathcal{Y}$ that contains the true, unknown solution $u = \mathcal{G}(f)$ with a user-specified probability $1-\alpha$. We seek to satisfy the classical marginal coverage guarantee of conformal prediction 
$
    \mathbb{P}\left( u \in \Gamma_\alpha(f) \right) \geq 1 - \alpha,
$
where the probability is taken over the draw of the i.i.d. training and calibration data, as well as the unseen test pair $(f, u)$. Achieving this goal in the infinite-dimensional setting of neural operators introduces three core challenges that this paper addresses:
\begin{enumerate}
    \item How can a scalar nonconformity score be defined to meaningfully capture the distance between a predicted function, $\hat{u}$, and the ground truth, $u$? How can this score be designed to distinguish the neural operator's intrinsic prediction error from the error arising from discretizing the underlying continuous function domain?
    \item How does the choice of grid scheme affect the calibrated uncertainty bounds? Furthermore, how can we decompose these bounds to characterize their constituent sources of error?
    \item How can we provide meaningful UQ for autoregressive forecasting tasks where error accumulation violates standard conformal assumptions? Can we adapt metrics from ensemble learning to develop more informative diagnostics that quantify forecast degradation over time?
\end{enumerate}

\section{Theoretical Foundation}\label{sec:try}

To formally extend CP to the function-space we follow a two step method. We first establish finite-sample coverage guarantees in a finite-dimensional, discretized space via the standard split CP framework. Then these guarantees are lifted to the function-space by considering the asymptotic convergence as the discretization is refined.

\subsection{CP in Discretized Function Spaces}

Let \(\{(f_i, u_i)\}_{i=1}^{n+1}\) be an i.i.d. sample from a data-generating distribution on an input-output function space \(\mathcal{X} \times \mathcal{Y}\). We use a discretization operator \( P_d: \mathcal{Y} \to \mathbb{R}^d \) that maps each function \( u_i \in \mathcal{Y} \) to its evaluations on a fixed grid over the domain \(\Omega \subset \mathbb{R}^d\). Given a neural operator \(\mathcal{G}_\theta\) trained on set of points collected with $P_d$, we define a nonconformity score for each sample \((f_i, u_i)\) in a separate calibration set as the distance between the discretized prediction and the discretized ground truth: $s_i = d(P_d(\mathcal{G}_\theta(f_i)), P_d(u_i))$, where $d(\cdot, \cdot)$ can be any vector norm.
Using split CP, we compute the threshold \(\tau_\alpha\). This provides a standard, distribution-free coverage guarantee in the discretized space:  \(\mathbb{P}(s_{n+1} \le \tau_\alpha) \ge 1-\alpha\) where $\mathbb{P}(\cdot)$ captures the randomness in  \(\{(f_i, u_i)\}_{i=1}^{n+1}\). 

\begin{figure}[t]
    \centering
    \includegraphics[width=0.9\linewidth]{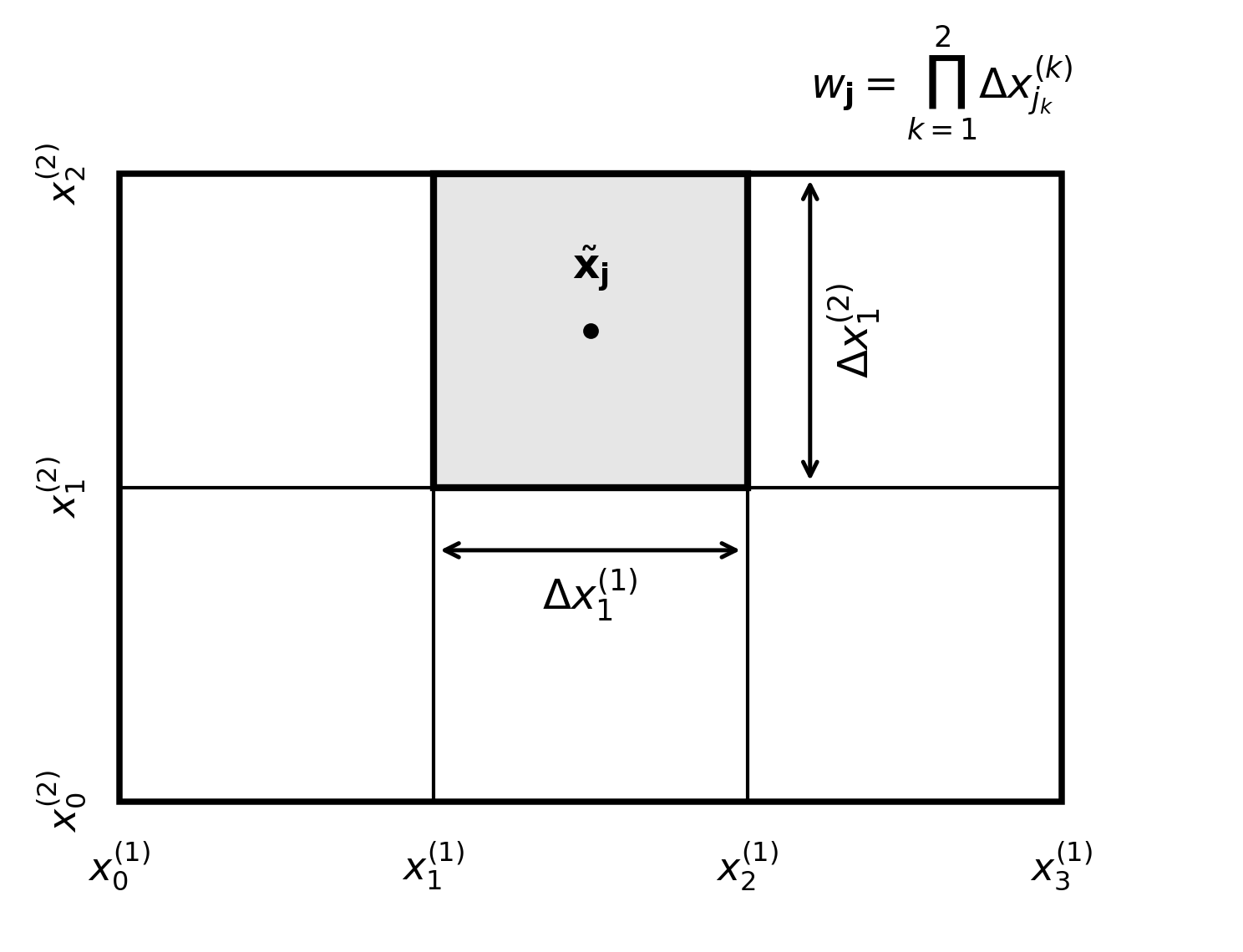}
    \caption{Cell-centered quadrature on a uniform Cartesian grid. 
    The highlighted cell $C_{\mathbf{j}}$ is indexed by $\mathbf{j}=(j_1,j_2)$, 
    with side lengths $\Delta x^{(1)}_{j_1}$ and $\Delta x^{(2)}_{j_2}$, 
    center $\tilde{\mathbf{x}}_{\mathbf{j}}$, and quadrature weight 
    $w_{\mathbf{j}}=\prod_{k=1}^2 \Delta x^{(k)}_{j_k}$.}
    \label{fig:quadrapture}
\end{figure}

\subsection{From Discrete to Continuous Guarantees}

The critical step is to ensure this guarantee translates back to the infinite-dimensional function space \(\mathcal{Y}\). This first requires a formal link between the discrete and continuous geometries. To achieve this, we define the distance metric \(d(\cdot, \cdot)\) using a quadrature-weighted \(L^2\) norm. First, consider a continuous function \(u \in \mathcal{Y}\). We discretize this function using $P_d$ on a structured Cartesian grid over $\Omega \subset \mathbb{R}^d$ to obtain a discrete vector, $h = P_d(u)$, with coordinate vectors $x^{(k)} = \{x^{(k)}_{j_k}\}_{j_k=0}^{N_k}$ along each dimension $k = 1, \dots, d$. For each multi-index $\mathbf{j} = (j_1, \dots, j_d) \in \mathcal{J}$, where $\mathcal{J} = \prod_{k=1}^d \{0, \dots, N_k-1\}$, we define the quadrature weight (i.e., the cell's volume) as
$
    w_{\mathbf{j}} = \prod_{k=1}^d \Delta x^{(k)}_{j_k},
$
where $\Delta x^{(k)}_{j_k} = x^{(k)}_{j_k+1} - x^{(k)}_{j_k}$ denotes the grid spacing in the $k$-th coordinate direction. Therefore the norm is defined as
$
    \|h\|_{w,2}^2 = \sum_{\mathbf{j} \in \mathcal{J}} w_{\mathbf{j}} \, h(\mathbf{x}_{\mathbf{j}})^2,
$
where $\mathbf{x}_{\mathbf{j}} = (x^{(1)}_{j_1}, \dots, x^{(d)}_{j_d})$ denotes the grid point corresponding to multi-index $\mathbf{j}$ over the $d$-dimensional domain. This process is detailed in Figure~\ref{fig:quadrapture}
 By construction, this is a Riemann sum for the integral of \(u^2\) over the domain \(\Omega\). Consequently, it provably converges to the continuous \(L^2(\Omega)\) norm as the discretization is refined (see Theorem~4 in the appendix for a formal proof). Next, to connect the discrete and continuous spaces, the projection \(P_d\) must preserve the geometry of the function space. That is, the distance between two functions in the continuous space must be comparable to the distance between their discretized representations. Next, we assume that the target function \(u \in \mathcal{Y}\) belongs to a Sobolev space \(H^s(\Omega)\) with \(s > d/2\) \cite{le2024mathematical, katende2025stability}. Since solutions generated by neural operators to partial differential equations (PDEs) are typically smooth, this assumption is justified \cite{furuya2024quantitative}. This motivates the following:
\begin{assumption}[Bilipschitz Discretization]
\label{assum:bilipschitz}
The discretization map \(P_d\) is bilipschitz, meaning there exist constants \(0 < c_1 \le 1 \le c_2\) such that for all \(u, v \in \mathcal{Y}\):
\begin{equation}
    c_{1}\|u-v\|_{\mathcal{Y}} \le \|P_d(u)-P_d(v)\|_{w,2,d} \le c_{2}\|u-v\|_{\mathcal{Y}}
\end{equation}
\end{assumption}
\noindent Here, the constants $c_1$ and $c_2$ quantify this approximation error, which for smooth functions on a grid with mesh size $\eta$ is known to scale as $1 \pm \mathcal{O}(\eta^k)$ for some $k>0$. And again, since the PDE solutions we are interested in are smooth, we expect $c_1 \to 1$ for a sufficiently fine grid, making our discrete norm a near-isometric proxy for the continuous one \cite{hicken2013summation}. 
\begin{remark}\label{rmrk:c1remark}
In practice, such inequalities hold when $P_d$ is implemented as (i) a sufficiently fine grid-based sampling of $u\in\mathcal{Y}$, or (ii) a truncated spectral expansion that retains enough terms to capture the relevant energy in $\|\cdot\|_{\mathcal{Y}}$. Moreover, when $d \gg N$, the constants $c_1$ and $c_2$ are approximatly 1, implying that $P_d$ becomes a near-isometric embedding of the subspace of interest.
\end{remark}
\noindent Under this assumption, we derive our main theoretical result.
\begin{theorem}[Functional Conformal Coverage]
\label{thm:functional_coverage}
Let \(\tau_\alpha\) be the threshold calibrated on the discrete scores \(s_i = \|P_d(\hat{u}_i) - P_d(u_i)\|_{w,2,d}\) where $\hat{u}_i$ is the predicted function. Under Assumption~\ref{assum:bilipschitz}, the functional prediction set:
\begin{equation}
    \Gamma_{\alpha}^{\text{func}}(f) = \{v \in \mathcal{Y} : \|\mathcal{G}_{\theta}(f) - v\|_{\mathcal{Y}} \le \tau_{\alpha}/c_1\},
\end{equation}
satisfies the coverage guarantee:
\begin{equation}
    \mathbb{P}\left( u_{n+1} \in \Gamma_{\alpha}^{\text{func}}(f_{n+1}) \right) \ge 1 - \alpha,
\end{equation}
where \((f_{n+1}, u_{n+1})\) is an i.i.d. test point drawn from the data-generating distribution.

\end{theorem}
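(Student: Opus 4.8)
The plan is to deduce the functional coverage guarantee from the already-established discrete split-CP guarantee by a purely deterministic event-inclusion argument; no new probabilistic machinery is needed, and the bilipschitz lower bound carries all of the geometric content. The only quantity that needs to be tracked is how a bound on the discrete nonconformity score $s_{n+1}$ translates into a bound on the continuous distance $\|\mathcal{G}_\theta(f_{n+1}) - u_{n+1}\|_{\mathcal{Y}}$.

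Concretely, I would proceed in three steps. First, invoke the discrete guarantee: since $\mathcal{G}_\theta$ is trained only on the disjoint training split, the map $(f,u)\mapsto\|P_d(\mathcal{G}_\theta(f))-P_d(u)\|_{w,2,d}$ is a fixed measurable function, so the scores $s_1,\dots,s_n,s_{n+1}$ are exchangeable and, with $\tau_\alpha$ the $\lceil(1-\alpha)(n+1)\rceil$-th order statistic of the calibration scores, $\mathbb{P}(s_{n+1}\le\tau_\alpha)\ge 1-\alpha$. Second, apply the lower inequality of Assumption~\ref{assum:bilipschitz} with $u=\mathcal{G}_\theta(f_{n+1})$ and $v=u_{n+1}$: this gives $c_1\|\mathcal{G}_\theta(f_{n+1})-u_{n+1}\|_{\mathcal{Y}}\le s_{n+1}$, hence $\|\mathcal{G}_\theta(f_{n+1})-u_{n+1}\|_{\mathcal{Y}}\le s_{n+1}/c_1$, which is well-defined because $c_1>0$. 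Third, combine the two: on the event $\{s_{n+1}\le\tau_\alpha\}$ we get $\|\mathcal{G}_\theta(f_{n+1})-u_{n+1}\|_{\mathcal{Y}}\le\tau_\alpha/c_1$, i.e.\ $u_{n+1}\in\Gamma_\alpha^{\text{func}}(f_{n+1})$. Therefore $\{s_{n+1}\le\tau_\alpha\}\subseteq\{u_{n+1}\in\Gamma_\alpha^{\text{func}}(f_{n+1})\}$, and monotonicity of probability yields $\mathbb{P}(u_{n+1}\in\Gamma_\alpha^{\text{func}}(f_{n+1}))\ge\mathbb{P}(s_{n+1}\le\tau_\alpha)\ge 1-\alpha$.

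The main obstacle is bookkeeping rather than genuine mathematics: the statement is essentially a two-line corollary once the pieces are lined up, so the care goes into (i) checking that exchangeability is legitimate — this hinges entirely on $\mathcal{G}_\theta$ being independent of the calibration and test data through the train/calibration split — and (ii) using the correct direction of the bilipschitz inequality, namely the \textbf{lower} bound, so that a small discrete score provably forces a small continuous distance. Only the lower constant $c_1$ enters the validity argument; the upper constant $c_2$ affects only the tightness (efficiency) of $\Gamma_\alpha^{\text{func}}$. Finally, the inflation factor $1/c_1$ is exactly the price of transferring the guarantee to function space, and by Remark~\ref{rmrk:c1remark} it approaches $1$ as the grid is refined, which is also where the discrete-to-continuous convergence discussion of the preceding subsection enters.
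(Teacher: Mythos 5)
Your proposal is correct and follows essentially the same route as the paper's proof: establish the discrete split-CP guarantee $\mathbb{P}(s_{n+1}\le\tau_\alpha)\ge 1-\alpha$, apply the lower bilipschitz inequality to obtain the event inclusion $\{s_{n+1}\le\tau_\alpha\}\subseteq\{\|\hat{u}_{n+1}-u_{n+1}\|_{\mathcal{Y}}\le\tau_\alpha/c_1\}$, and conclude by monotonicity of probability. Your added remarks on why exchangeability holds (the train/calibration split) and on the roles of $c_1$ versus $c_2$ are sound and, if anything, slightly more careful than the paper's own exposition.
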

\begin{proof}[Proof Sketch]
The proof proceeds by connecting the established discrete-space guarantee to the continuous function space via Assumption~\ref{assum:bilipschitz}.
From split CP, we have the guarantee \(\mathbb{P}(s_{n+1} \le \tau_\alpha) \ge 1-\alpha\) in the discrete space, where \(s_{n+1} = \|P_d(\hat{u}_{n+1}) - P_d(u_{n+1})\|_{w,2,d}\).
The left-hand side of the inequality in Assumption~\ref{assum:bilipschitz} states that \(c_1 \|\hat{u}_{n+1} - u_{n+1}\|_\mathcal{Y} \le s_{n+1}\).
Combining these, the event \(s_{n+1} \le \tau_\alpha\) implies the event \(c_1 \|\hat{u}_{n+1} - u_{n+1}\|_\mathcal{Y} \le \tau_\alpha\), which is equivalent to \(\|\hat{u}_{n+1} - u_{n+1}\|_\mathcal{Y} \le \tau_\alpha/c_1\).
Since the first event implies the second, the probability of the second event must be at least as large as the first: \(\mathbb{P}(\|\hat{u}_{n+1} - u_{n+1}\|_\mathcal{Y} \le \tau_\alpha/c_1) \ge \mathbb{P}(s_{n+1} \le \tau_\alpha) \ge 1-\alpha\). This is precisely the coverage guarantee for the functional prediction set \(\Gamma_{\alpha}^{\text{func}}(f)\).
The full proof is provided in the appendix.
\end{proof}

\subsection{A Heuristic Model for the Conformal Radius}\label{sec:super}

Similar to the classic bias-variance decomposition of prediction error \cite{brofos2019bias}, we view the conformal radius \(\tau_\alpha(d)\)--where $d$ denotes the number of evaluation points used by the discretization operator $P_d$--as arising from three dominant sources: discretization of the underlying function, finite-sample calibration, and model misspecification. To capture these effects, we propose a first-order~\cite{hastie2009elements} heuristic model that decomposes \(\tau_\alpha(d)\) as:
\begin{equation}
\tau_\alpha(d) \approx \underbrace{\varepsilon_{\text{disc}}(d)}_{\text{discretization}}
+ \underbrace{\varepsilon_{\text{cal}}}_{\text{calibration}}
+ \underbrace{\varepsilon_{\text{misspec}}(d)}_{\text{misspecification}}.
\end{equation}
Here, \( \varepsilon_{\text{disc}}(d) = \|u - P_d u\|_{w, 2} \) is the discretization error, decaying as \( \mathcal{O}(d^{-p}) \) \cite{lanthaler2024discretizationerrorfourierneural}, \( \varepsilon_{\text{cal}} = \mathcal{O}(1/\sqrt{n}) \) is the finite-sample calibration error \cite{ghosh2023probabilisticallyrobustconformalprediction}, and \( \varepsilon_{\text{misspec}}(d) \) is the model's generalization error at resolution \( d \). Due to the dependence of $\varepsilon_{\text{misspec}}$ on the predictor, developing a theoretical bound on its error is challenging and inefficient. Instead, we analyze the distribution of $\tau_\alpha$ values across resolutions, per predictors. Empirically, we find the distributions of $\tau_\alpha$ are approximately log-linear evaluated at resolutions beyond the training resolution when using a FNO, making it particularly useful for super-resolution tasks without requiring retraining or recalibration (see Figures 5 and 6 in the Appendix). To estimate the super-resolution conformal radius \(\tau_\alpha\), we fit a regression of the form
\begin{equation}
\tau(R) = \exp\left( s \cdot R + b \right),
\end{equation}
where \(\log \tau(R_i) = s \cdot R_i + b\). Although no formal coverage guarantee exists for the extrapolated value \(\tau_\alpha\), we find that it yields substantial improvements in coverage accuracy, detailed in Table~\ref{tab:darcy_poisson_coverage}.

\subsection{Time-Series Forecasting}
Until now, we considered spatial mappings of the form \( f \mapsto u \), where split conformal prediction guarantees hold under exchangeability. We now extend this perspective to temporal mappings, where a neural operator predicts the evolution over time $t$. 
Formally, let \( u_t \) denote the true system state at time \( t \) and \( \hat{f}_t \) the operator’s forecasted input at time \( t \). Because each forecasted state \(\hat{f}_t\) depends on previous predictions, the sequence \(\{(\hat{f}_t, u_t)\}_{t=1}^T\) is no longer exchangeable. This violates the core assumption of split conformal prediction, causing a coverage gap as prediction errors compound over time. Rather than “correcting” the conformal radius to maintain a guarantee under drift \cite{cleaveland2024conformal}, we reinterpret the coverage gap itself as a diagnostic signal. 
\begin{theorem}[Drift-Aware Functional Coverage]
\label{thm:drift_functional_coverage}
Let $\tau_\alpha$ be the conformal threshold computed on discrete nonconformity scores 
$s_i = \| P_d(\hat{u}_i) - P_d(u_i) \|_{w,2,d}$ under a calibration distribution $\mathbb{P}_{\mathrm{cal}}$,
where $P_d: \mathcal{Y} \to \mathbb{R}^d$ is bilipschitz as in Assumption~\ref{assum:bilipschitz}.
At forecast steps $t = 1, \dots, T$, let the data distribution drift to $\mathbb{P}_t$.
Define the functional prediction set:
$
\Gamma_{\alpha}^{\text{func}}(f_t) = \big\{ v \in \mathcal{Y} : \| \hat{u}_t - v \|_{\mathcal{Y}} \le \tau_\alpha / c_1 \big\},
$
where $c_1$ is the constant from Assumption~\ref{assum:bilipschitz}.
Then, for each $t$,
\begin{equation}
\mathbb{P}_t \left( u_t \in \Gamma_{\alpha}^{\text{func}}(f_t) \right) \ge 1 - \alpha - d_{\mathrm{TV}}(\mathbb{P}_{\mathrm{cal}}, \mathbb{P}_t),
\end{equation}
where $d_{\mathrm{TV}}$ is the total variation distance between the calibration and forecast-time distributions. 
\end{theorem}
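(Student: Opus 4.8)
The plan is to treat this as a distribution-shift perturbation of the in-distribution guarantee from Theorem~\ref{thm:functional_coverage}, with the size of the perturbation controlled by the total variation distance. The one structural fact I would lean on is that the lower bilipschitz bound in Assumption~\ref{assum:bilipschitz} is a \emph{deterministic, distribution-free} inequality: for every pair $u,v\in\mathcal{Y}$ we have $c_1\|u-v\|_{\mathcal{Y}}\le\|P_d(u)-P_d(v)\|_{w,2,d}$, where $c_1$ is a property of the map $P_d$ alone and does not depend on the data law. Writing $\hat u_t=\mathcal{G}_\theta(f_t)$ and $s_t=\|P_d(\hat u_t)-P_d(u_t)\|_{w,2,d}$, this gives the deterministic set inclusion
\[
\{s_t\le\tau_\alpha\}\ \subseteq\ \{\,c_1\|\hat u_t-u_t\|_{\mathcal{Y}}\le\tau_\alpha\,\}\ =\ \{\,u_t\in\Gamma_\alpha^{\mathrm{func}}(f_t)\,\},
\]
valid pointwise on the sample space, regardless of the distribution of $(f_t,u_t)$.

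First I would record the in-distribution fact: if $\tau_\alpha$ is held fixed (conditioning on the calibration set) and the pair $(f_t,u_t)$ is instead an exchangeable fresh draw from $\mathbb{P}_{\mathrm{cal}}$, the split-CP quantile construction gives $\mathbb{P}_{\mathrm{cal}}(s_t\le\tau_\alpha)\ge 1-\alpha$, hence by the inclusion above $\mathbb{P}_{\mathrm{cal}}(u_t\in\Gamma_\alpha^{\mathrm{func}}(f_t))\ge1-\alpha$, which is exactly Theorem~\ref{thm:functional_coverage}. Second, I would invoke the variational characterization of total variation: for any measurable event $A$ and any two laws $\mathbb{P},\mathbb{Q}$ on the same space, $\mathbb{Q}(A)\ge\mathbb{P}(A)-d_{\mathrm{TV}}(\mathbb{P},\mathbb{Q})$. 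Applying this with $A=\{u_t\in\Gamma_\alpha^{\mathrm{func}}(f_t)\}$ (a measurable subset of $\mathcal{X}\times\mathcal{Y}$, being the preimage of a half-line under $(f,u)\mapsto s$), $\mathbb{P}=\mathbb{P}_{\mathrm{cal}}$, and $\mathbb{Q}=\mathbb{P}_t$ yields
\[
\mathbb{P}_t\!\left(u_t\in\Gamma_\alpha^{\mathrm{func}}(f_t)\right)\ \ge\ \mathbb{P}_{\mathrm{cal}}\!\left(u_t\in\Gamma_\alpha^{\mathrm{func}}(f_t)\right)-d_{\mathrm{TV}}(\mathbb{P}_{\mathrm{cal}},\mathbb{P}_t)\ \ge\ 1-\alpha-d_{\mathrm{TV}}(\mathbb{P}_{\mathrm{cal}},\mathbb{P}_t).
\]
Finally I would remove the conditioning on the calibration data: since $d_{\mathrm{TV}}(\mathbb{P}_{\mathrm{cal}},\mathbb{P}_t)$ is a deterministic constant, taking expectation over the calibration draw (equivalently, running the same argument on the product laws $\mathbb{P}_{\mathrm{cal}}^{\otimes n}\otimes\mathbb{P}_{\mathrm{cal}}$ versus $\mathbb{P}_{\mathrm{cal}}^{\otimes n}\otimes\mathbb{P}_t$ and using that the TV distance between product measures sharing a common factor equals the TV distance between the differing factors) preserves the inequality, giving the claim for each $t$.

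The routine-but-necessary care is measure-theoretic: I would fix $\mathcal{Y}$ to be a separable Banach space so that norm balls are Borel and the maps $P_d$, $\mathcal{G}_\theta$, and $v\mapsto\|v\|_{\mathcal{Y}}$ are measurable, legitimizing the events above. The only genuine subtlety — and what makes the statement true as written rather than carrying resolution-dependent slack — is that $c_1$ is unchanged under the drift; if the discretization geometry itself were allowed to vary with $t$, the clean TV correction would pick up an additional term comparing the two bilipschitz constants. I do not expect a hard obstacle here: the content is entirely in packaging Theorem~\ref{thm:functional_coverage} together with the elementary TV inequality. The main sanity checks I would perform are that the bound is vacuous once $d_{\mathrm{TV}}(\mathbb{P}_{\mathrm{cal}},\mathbb{P}_t)\ge 1-\alpha$ and that it recovers Theorem~\ref{thm:functional_coverage} exactly when $\mathbb{P}_t=\mathbb{P}_{\mathrm{cal}}$.
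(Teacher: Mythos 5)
Your proposal is correct and follows essentially the same route as the paper: a total-variation perturbation of the split-CP guarantee combined with the deterministic bilipschitz inclusion $\{s_t\le\tau_\alpha\}\subseteq\{u_t\in\Gamma_\alpha^{\mathrm{func}}(f_t)\}$. The only differences are cosmetic — you apply the TV inequality to the functional event after lifting rather than to the discrete score event before lifting (immaterial, since the inclusion is pointwise), and you derive the bound $\mathbb{Q}(A)\ge\mathbb{P}(A)-d_{\mathrm{TV}}(\mathbb{P},\mathbb{Q})$ and the product-measure marginalization over the calibration draw explicitly, where the paper cites Barber et al.; your treatment of the latter is in fact somewhat more careful than the paper's.
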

\noindent By applying a fixed, conformally-calibrated threshold at each forecast step, a violation of the conformal bound serves as an interpretable signal that the accumulated model drift, quantified by \( d_{\mathrm{TV}} \), has become significant enough to degrade coverage below the desired level.

\section{Methodology and Implementation}
This section details our methodology for functional UQ, simultaneous pointwise UQ, and autoregression diagnostic metrics. We construct calibrated prediction sets for three settings: sampling-based epistemic models, quantile-based predictions, and autoregressive stochastic forecasts.

\subsection{Conformal Calibration}
Following supervised training, we perform conformal calibration on a held‑out set of input–output function pairs \(\{(f_i, u_i)\}_{i=1}^n\). Each ground‑truth function \(u_i^{(d)} = P_d(u_i)\) and prediction \(\hat{u}_i^{(d)} = P_d(\mathcal{G}_\theta(f_i))\) are discretized via the operator \(P_d\). The nonconformity score for the \(i\)-th calibration sample is the relative weighted \(L^2\) error 
$
s_i = \frac{\| \hat{u}_i - u_i \|_{w,2}}{\| \hat{u}_i \|_{w,2}},
$
where \(\| \cdot \|_{w,2}\) is the quadrature‑weighted norm. We use the relative form so the score is scale‑invariant across functions of different magnitudes and therefore more meaningful. Given we are working with fine resolutions we assume \(c_1=1\), following Remark~\ref{rmrk:c1remark}. Using this score, we compute the threshold \(\tau_\alpha\), and the prediction set for any new input function \(f\) is:
\begin{equation}
\Gamma_\alpha(f) = \left\{ u : \frac{\| \hat{u} - u \|_{w,2}}{\| \hat{u} \|_{w,2}} \le \tau_\alpha \right\}.
\end{equation}

\subsubsection{Monte Carlo Bounding}
Generative methods characterize uncertainty by producing multiple realizations of the solution. Popular examples include MC-dropout, variational autoencoders, and diffusion models, which generate an ensemble of \(n\) candidate realizations \(\hat{u}^{(j)}(x) \sim S_\theta(x)\). The spread of these samples is then used to quantify variability. To construct pointwise prediction intervals, we compute lower and upper bounds at each spatial location \(j\) by taking the minimum and maximum over the ensemble, forming a conservative bounding envelope
$\hat{u}^{\min}(x) = \min_{1 \le j \le n} \hat{u}^{(j)}(x)$, and $\hat{u}^{\max}(x) = \max_{1 \le j \le n} \hat{u}^{(j)}(x)$.
Under this formulation, we obtain conservative estimators of the functional upper and lower bounds directly at inference time. An alternative approach would involve using the sample standard deviation to form prediction bands; however, we find that this often leads to undercoverage. To incorporate the conformal threshold, we modify the previous procedure by conditioning the interval construction on the calibrated threshold \(\tau_\alpha\), such that $\hat{u}^{\min}(x) = \min_{1 \le j \le n} \hat{u}^{(j)}(x) | \hat{u}(x) \leq \tau_\alpha$, and $\hat{u}^{\max}(x) = \max_{1 \le j \le n} \hat{u}^{(j)}(x) | \hat{u}(x) \leq \tau_\alpha$.

\begin{table}[!b]
\caption{Coverage Summary for Darcy 1D. Both scalars are set for a significance level $\alpha = 0.1$. Our method calibrates the overly conservative baselines to achieve our desired significance level.}
\centering
\small
\renewcommand{\arraystretch}{1.2}
\setlength{\tabcolsep}{5pt}
\begin{tabular}{llccc}
\toprule
 & \textbf{Technique} & \textbf{Scalar} & \textbf{Func.} & \textbf{Point.} \\
\midrule
\multirow{2}{*}{\textbf{Uncalibrated}} 
& MC Dropout & -- & -- & 1.000 \\
& Variational & -- & -- & 1.000 \\
\midrule
\multirow{2}{*}{\textbf{Calibrated}} 
& MC Dropout & 0.0044 & 0.9003 & \textbf{0.9200} \\
& Variational & 0.0035 & \textbf{0.9000} & 0.8807 \\
\bottomrule
\end{tabular}
\label{tab:darcy_coverage}
\end{table}

\begin{figure*}[ht]
    \centering
    \small
    \setlength{\tabcolsep}{4pt}
    \renewcommand{\arraystretch}{1.0}

    \begin{tabular}{@{}c@{}c@{}c@{}c@{}}
        % First row (MC Dropout)
        \raisebox{19mm}{\rotatebox[origin=c]{90}{\textbf{MC Dropout}}} &
        \begin{minipage}[b]{0.31\textwidth}
            \centering
            \caption*{\hspace{2mm}(a) Prediction vs Ground Truth}
            \includegraphics[width=0.9\textwidth]{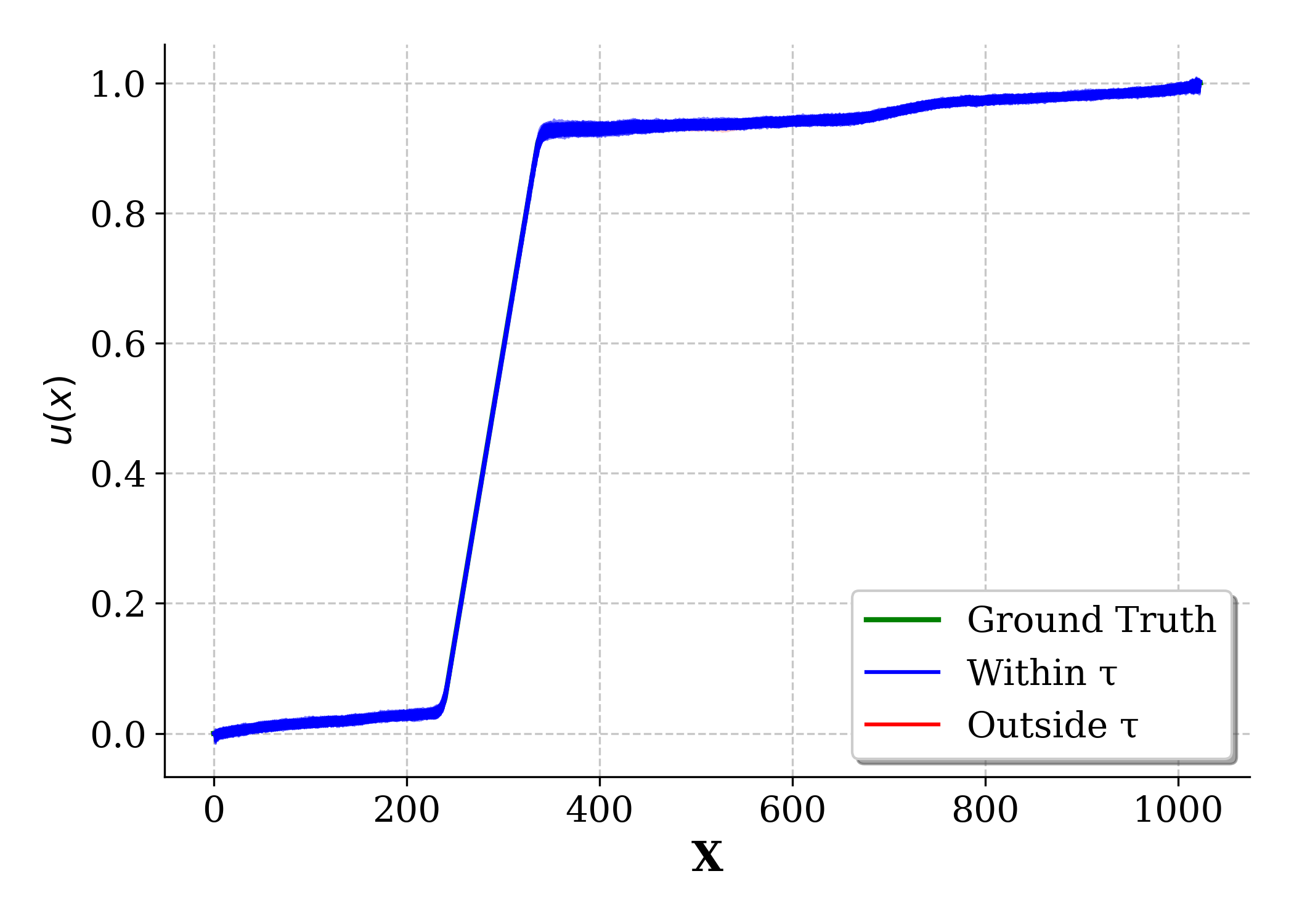}
        \end{minipage} &
        \begin{minipage}[b]{0.31\textwidth}
            \centering
            \caption*{\hspace{2mm}(b) Prediction Interval}
            \includegraphics[width=0.9\textwidth]{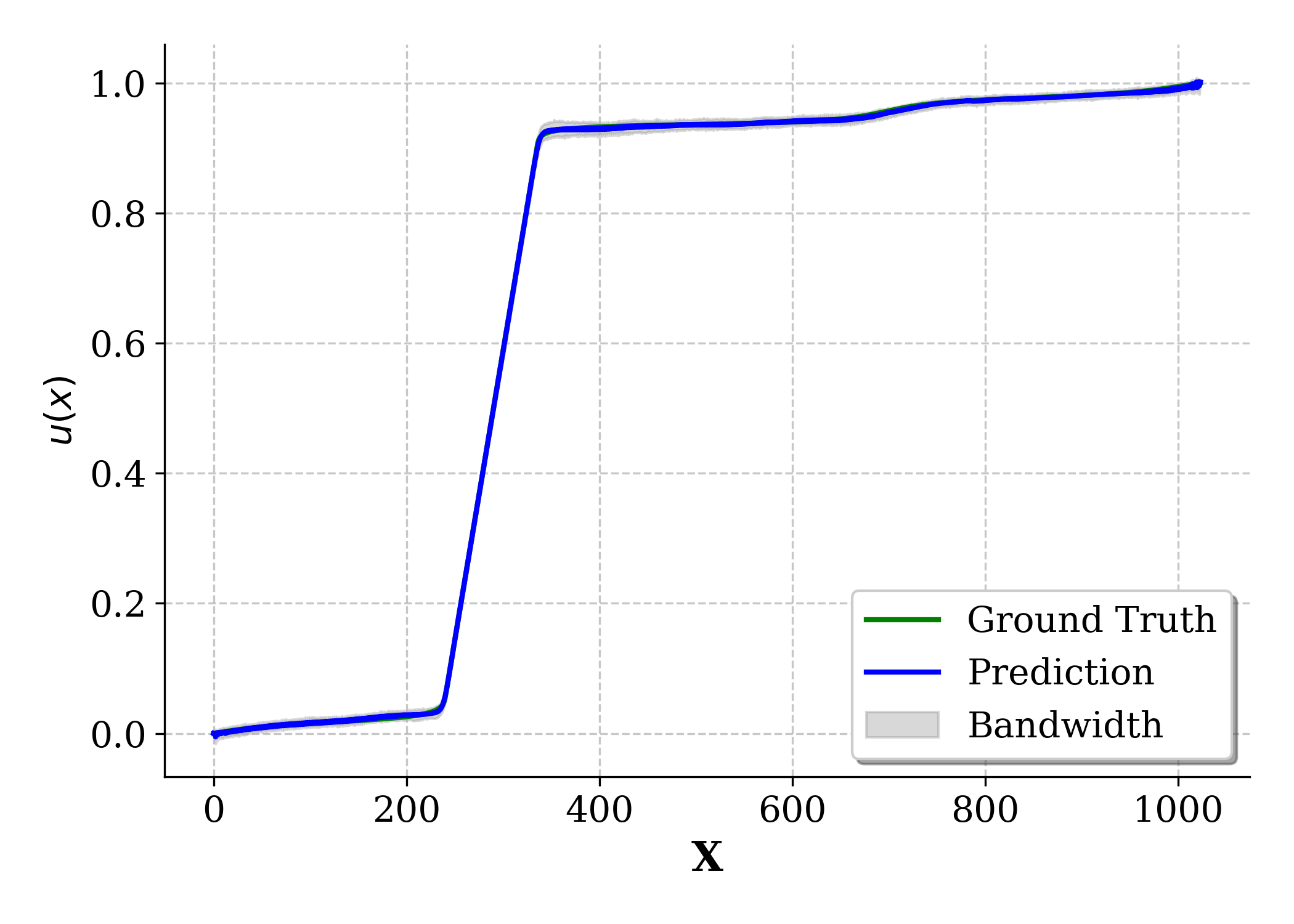}
        \end{minipage} &
        \begin{minipage}[b]{0.31\textwidth}
            \centering
            \caption*{\hspace{5mm}(c) Std. vs Prediction Error}
            \includegraphics[width=0.9\textwidth]{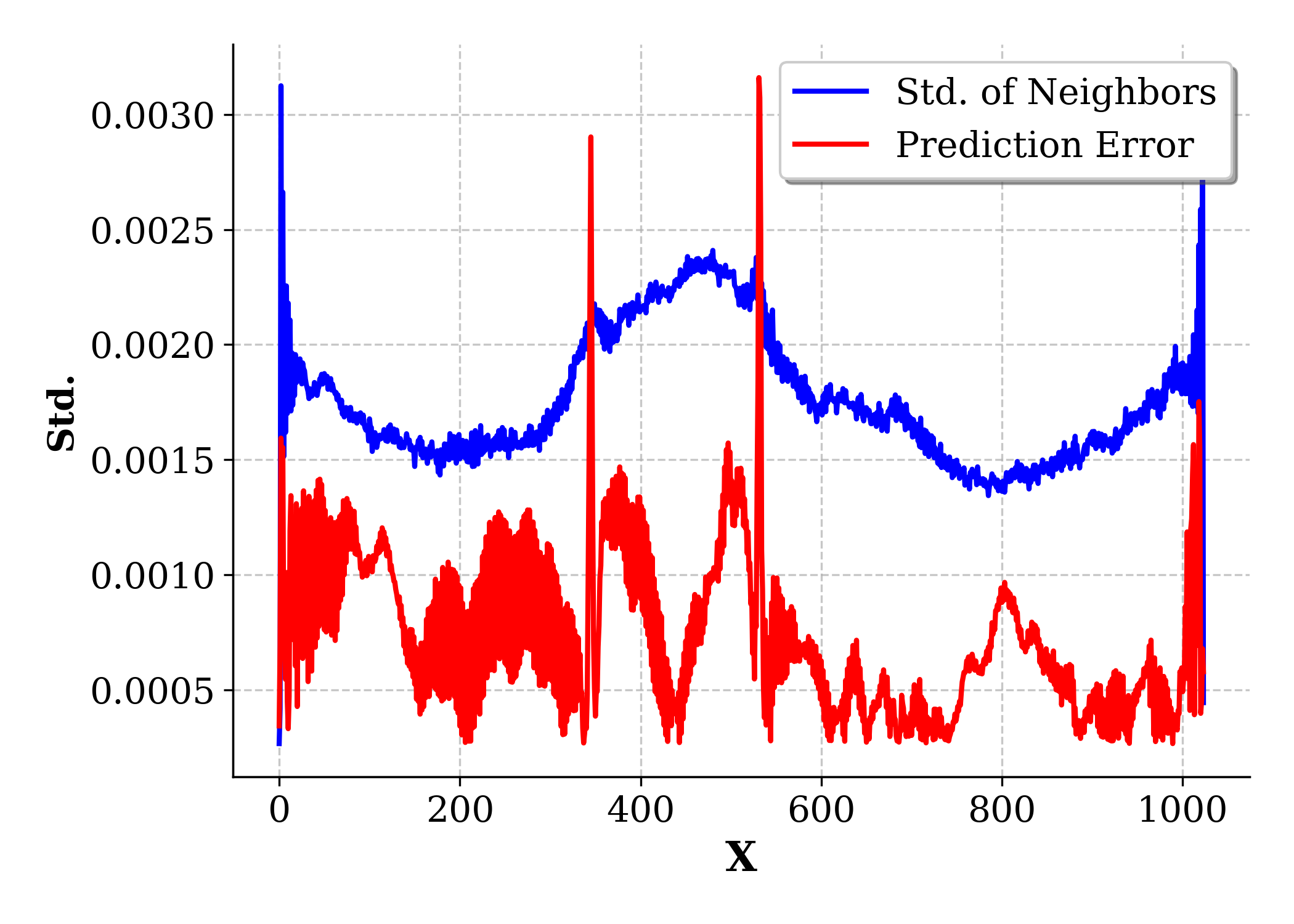}
        \end{minipage} \\[2mm]

        % Second row (VAE)
        \raisebox{19mm}{\rotatebox[origin=c]{90}{\textbf{Variational}}} &
        \begin{minipage}[b]{0.31\textwidth}
            \centering
            \includegraphics[width=0.9\textwidth]{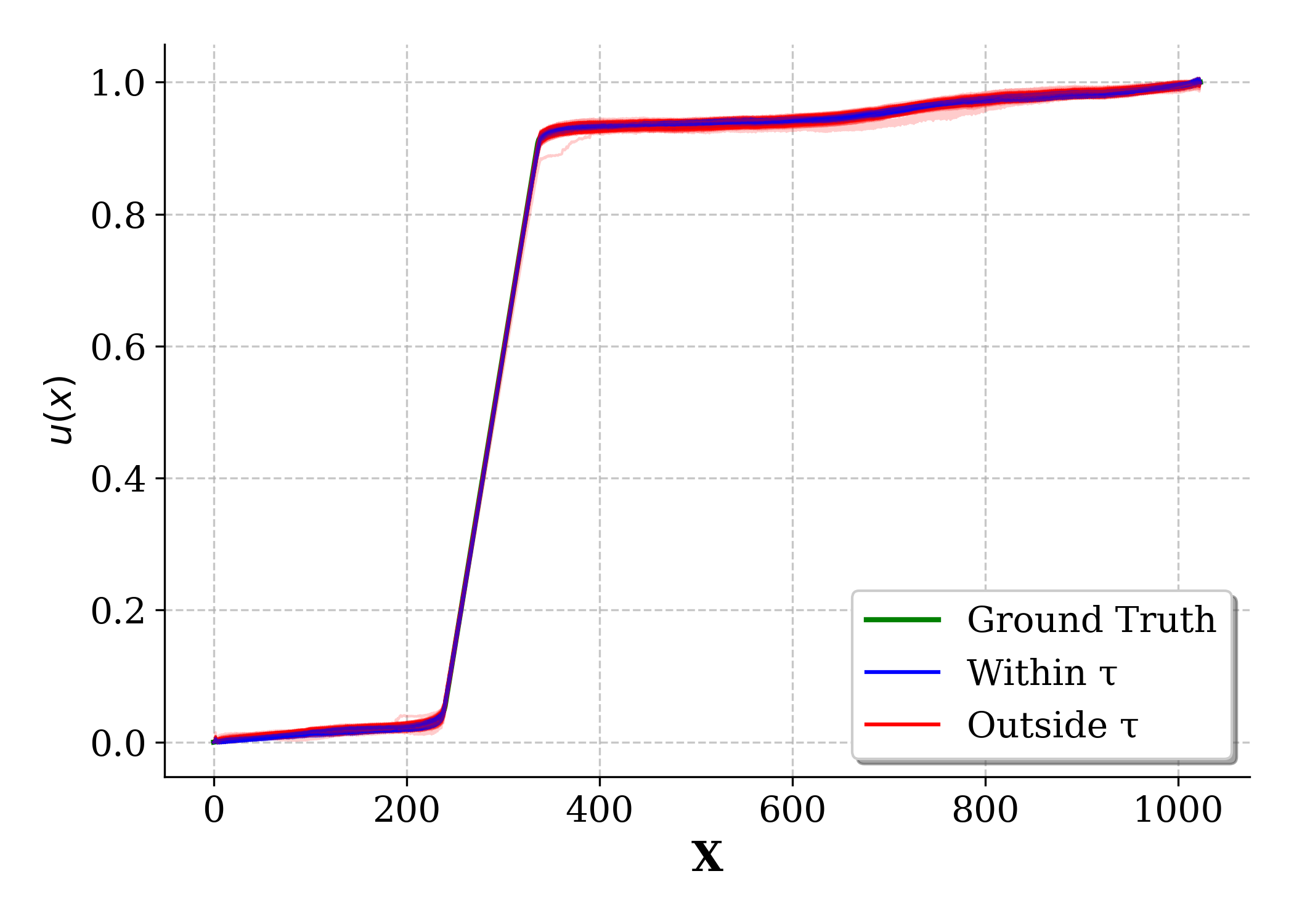}
        \end{minipage} &
        \begin{minipage}[b]{0.31\textwidth}
            \centering
            \includegraphics[width=0.9\textwidth]{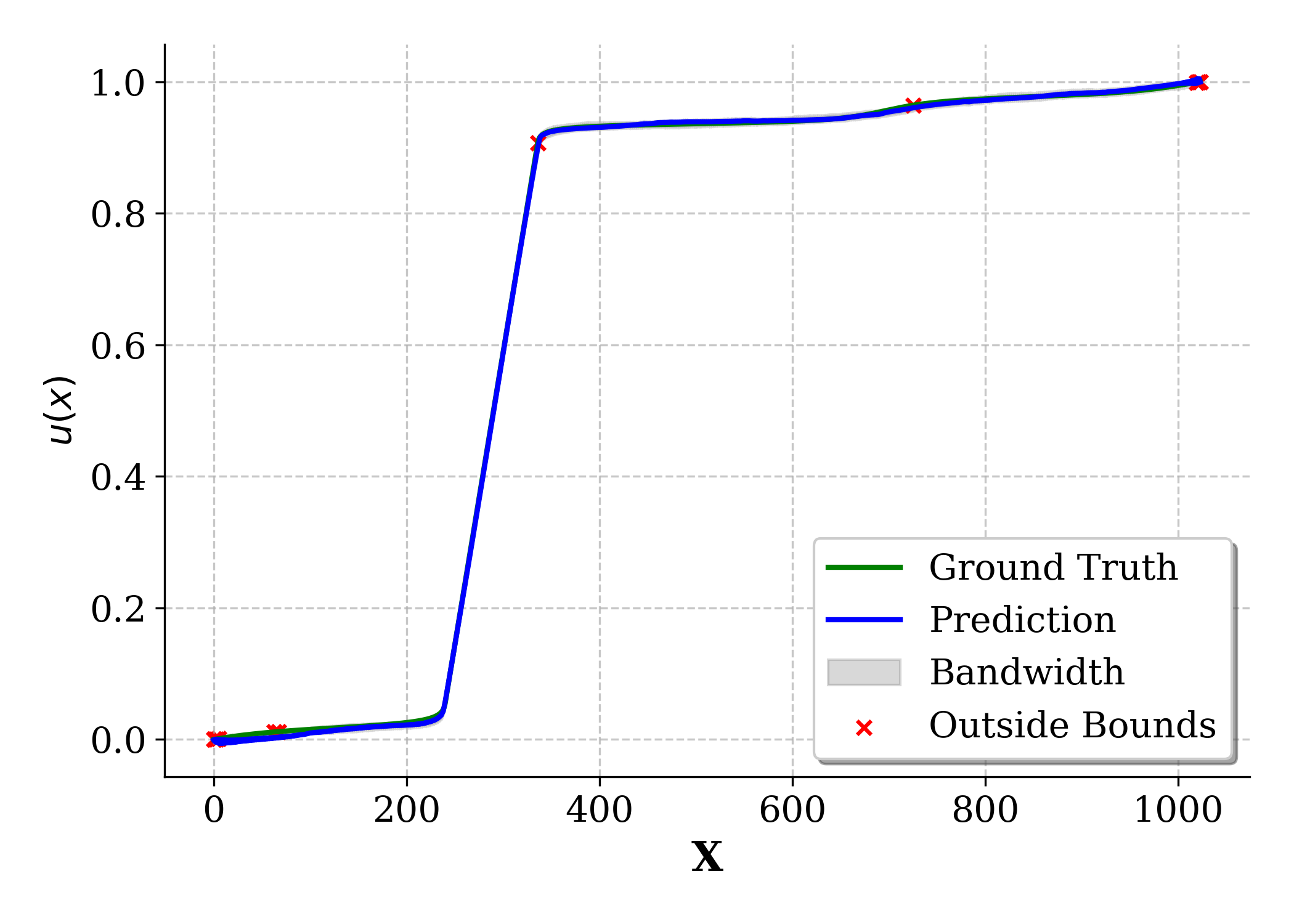}
        \end{minipage} &
        \begin{minipage}[b]{0.31\textwidth}
            \centering
            \includegraphics[width=0.9\textwidth]{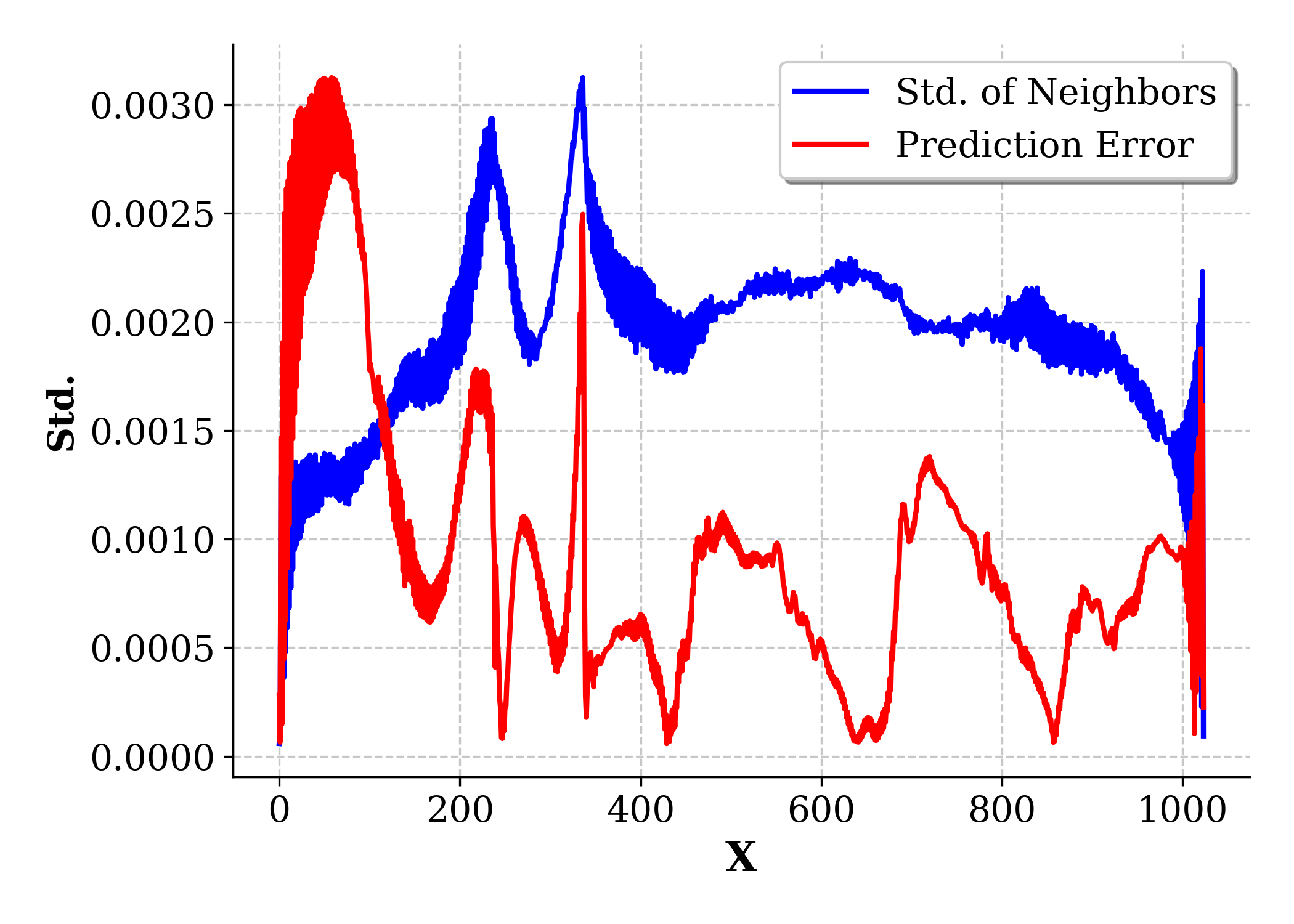}
        \end{minipage}
    \end{tabular}

    \caption{Darcy 1D: MC Bounds, with $\tau_\alpha$ calibration at level $\alpha = 0.1$. Shown is a sample drawn and evaluated on a FNO with MC-Dropout and a variational-FNO. \textbf{(a)} Samples drawn for one specific instance, with samples within $\tau_\alpha$ shown in blue and those outside $\tau_\alpha$ in red. \textbf{(b)} Bounds inferred from the minimum and maximum of the samples within $\tau_\alpha$. \textbf{(c)} Bandwidth versus true error.}
    \label{fig:darcy_ex}
\end{figure*}

\subsubsection{Quantile Bounds Adjustment}
Alternatively, models may learn a direct heuristic of the error field, thereby producing explicit estimates of the prediction interval. In this work, we consider the triplet formulation \((\hat{u}^{\text{lo}}, \hat{u}^{\text{mid}}, \hat{u}^{\text{hi}})\), where the upper and lower bounds are learned via quantile regression using the pinball loss \cite{huang2013support}. To align the resulting interval \([\hat{u}^{\text{lo}}, \hat{u}^{\text{hi}}]\) with the target coverage level, we perform a post hoc adjustment. For a given bound \(\hat{u}\) (either lower or upper), we define its offset from the central prediction as \(\delta = \hat{u} - \hat{u}^{\text{mid}}\), and compute the initial error:
\begin{equation}
r_{\text{initial}} = \frac{\|\hat{u} - \hat{u}^{\text{mid}}\|}{\|\hat{u}^{\text{mid}}\|} = \frac{\|\delta\|}{\|\hat{u}^{\text{mid}}\|}.
\end{equation}
Next, we derive a scaling factor $s$ as the ratio of the target relative error $\tau_\alpha$ to the initial error:
$\mathbf{s} = \tau_{\alpha} / r_{\text{initial}}$.
Finally, adjusted bound $\hat{u}^{\text{adj}}$ is then found by scaling the original offset:
\begin{equation}
\hat{u}^{\text{adj}} = \hat{u}^{\text{mid}} + s \cdot \delta = \hat{u}^{\text{mid}} + \left( \frac{\tau_\alpha}{r_{\text{initial}}} \right) \cdot (\hat{u} - \hat{u}^{\text{mid}}).
\end{equation}
This procedure, applied to both $\hat{u}^{\text{lo}}$ and $\hat{u}^{\text{hi}}$, yields a recalibrated interval $[\hat{u}^{\text{min}}, \hat{u}^{\text{max}}]$ whose bounds are exactly $\tau_\alpha$ distant from the center in relative terms.

\begin{figure*}[ht]
    \centering
    \small
    \begin{minipage}[b]{0.32\textwidth}
        \centering
        \includegraphics[width=0.9\textwidth]{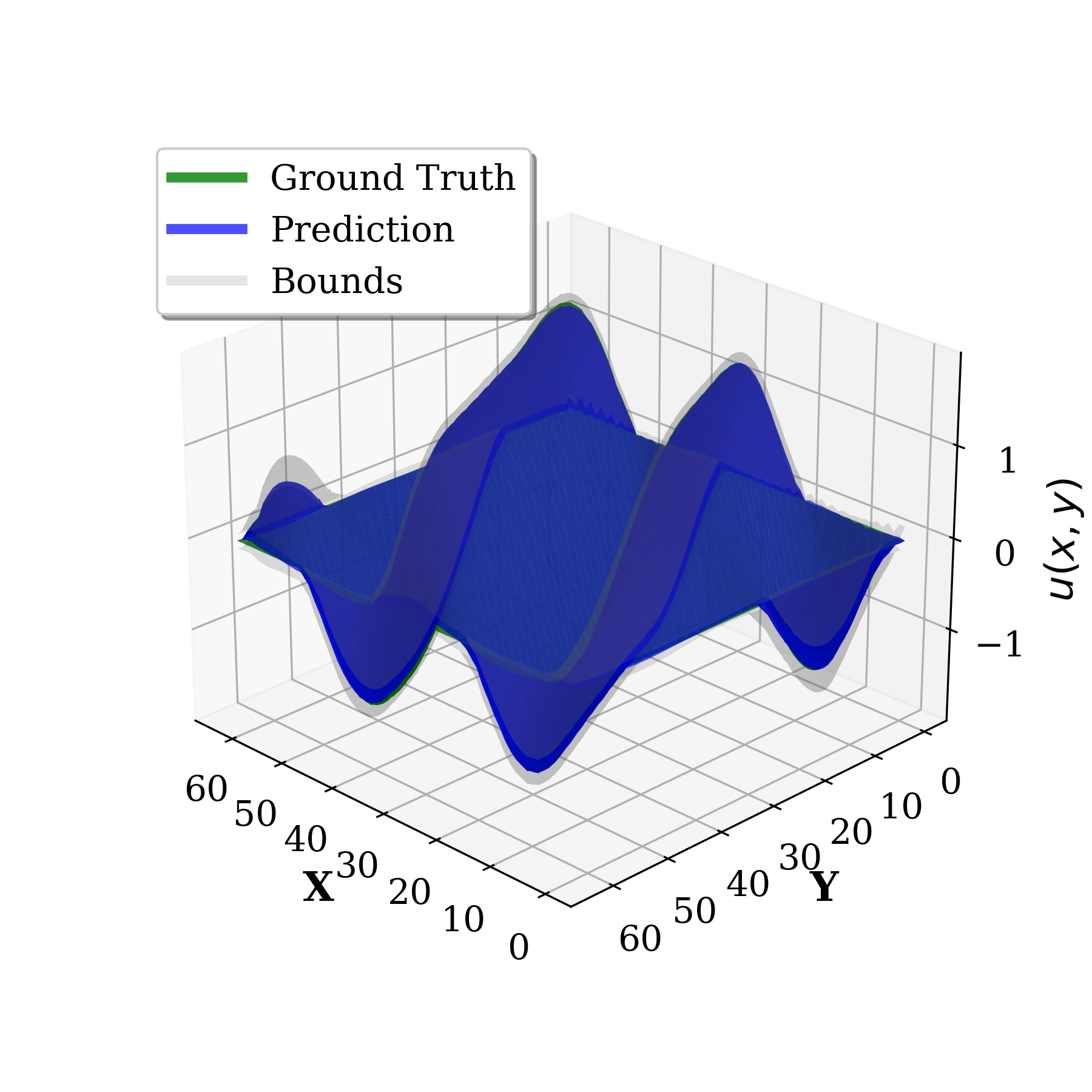}
        \caption*{\hspace{2mm}(a) Prediction vs Ground Truth}
    \end{minipage}
    \hfill
    \begin{minipage}[b]{0.32\textwidth}
        \centering
        \includegraphics[width=0.9\textwidth]{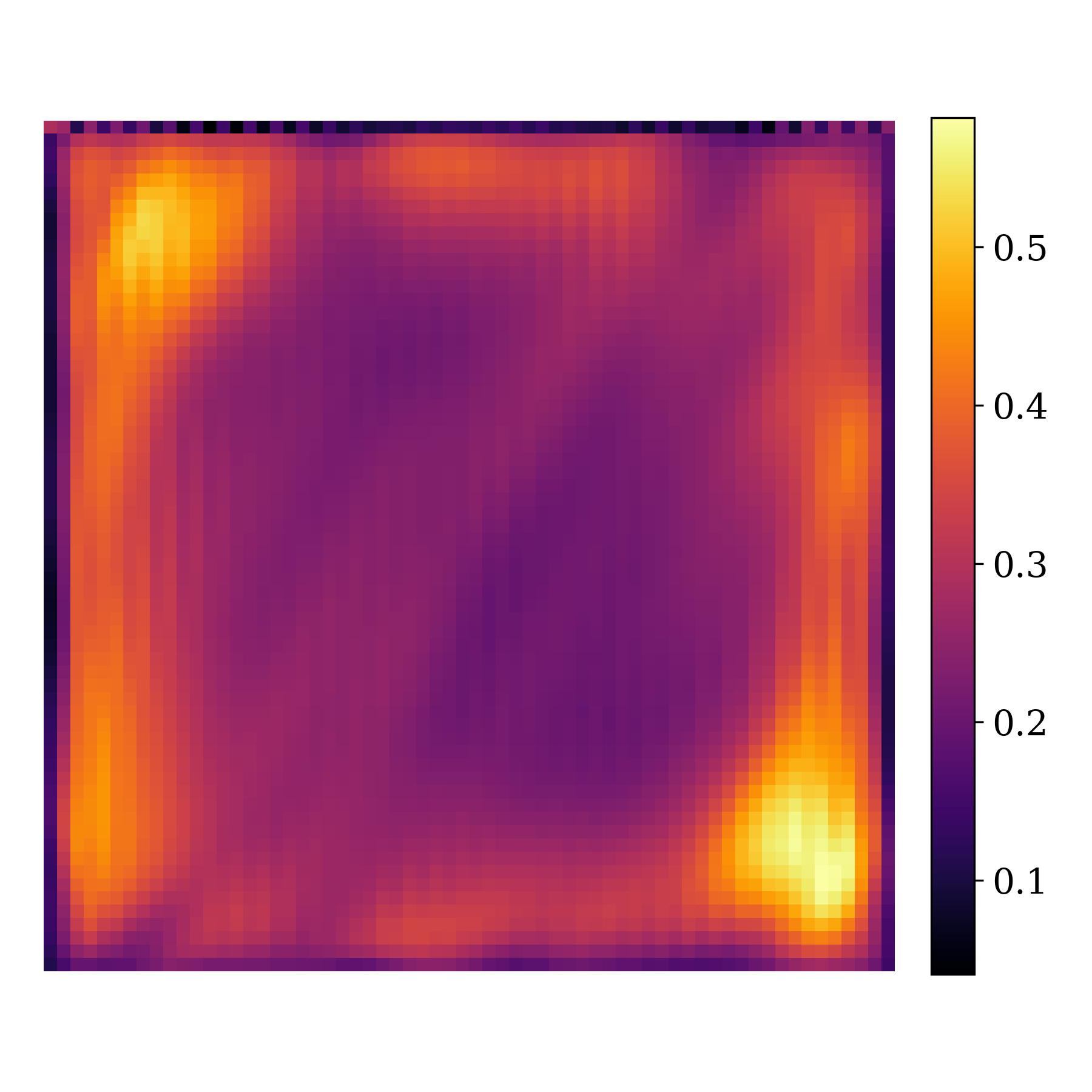}
        \caption*{\hspace{-6mm}(b) Std. of Neighbors}
    \end{minipage}
    \hfill
    \begin{minipage}[b]{0.32\textwidth}
        \centering
        \includegraphics[width=0.9\textwidth]{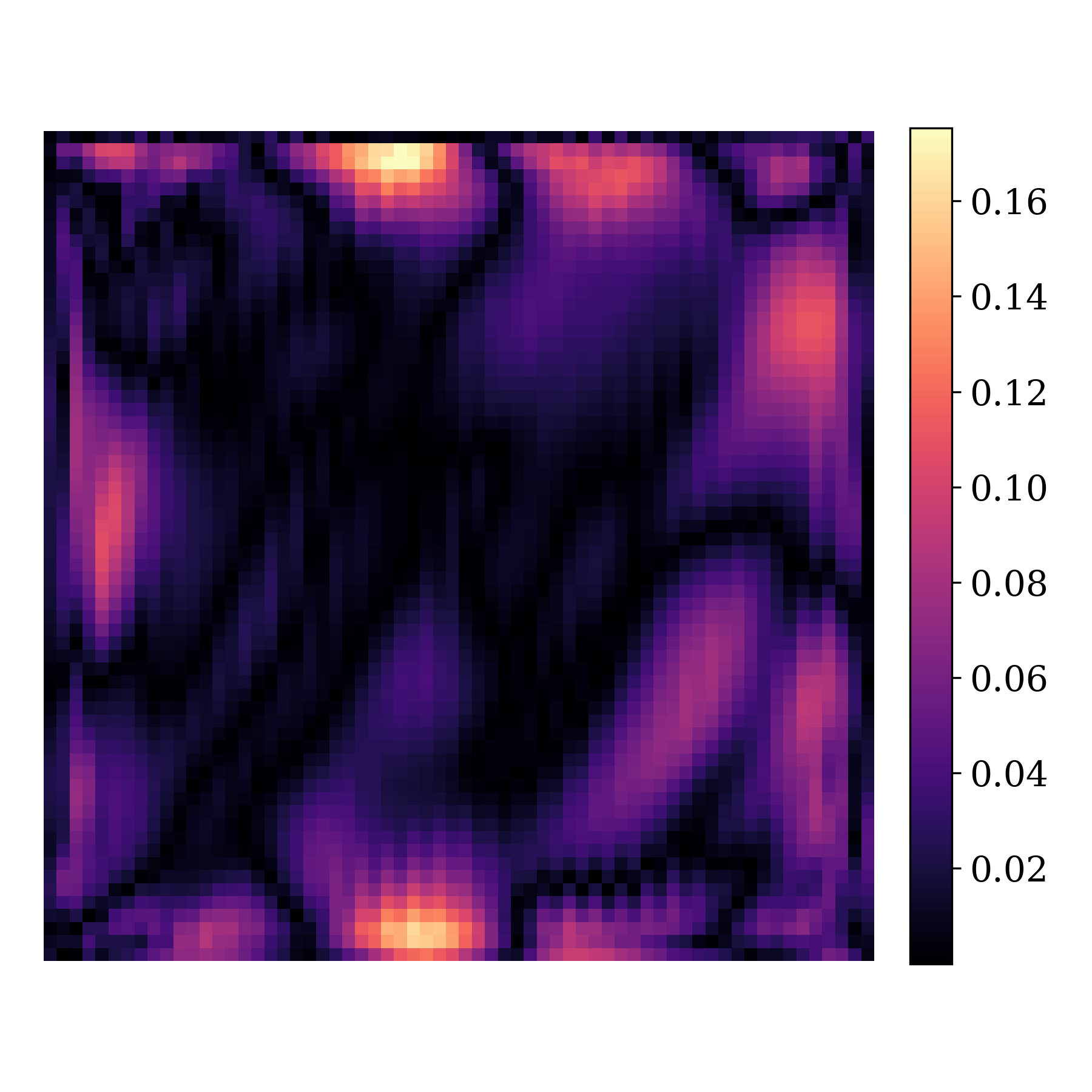}
        \caption*{\hspace{-6mm}(c) Prediction Error}
    \end{minipage}
    \caption{Poisson 2D: Adjusted Quantiles, with $\tau_\alpha$ calibration at level $\alpha = 0.1$. \textbf{(a)} Mean prediction against the ground truth, with upper and lower bounds scaled to the $\tau_\alpha$ distance. \textbf{(b)} Bandwidth of the scaled bounds. \textbf{(c)} Prediction error between the mean prediction and the ground truth.}
    \label{fig:poisson_ex}
\end{figure*}

\subsubsection{Stochastic Forecast Bounding}
For time-series models that generate stochastic trajectories, we first obtain the conformal radius $\tau_\alpha$, calibrated on the one initial one step distribution. We then apply this radius across all forecast steps to monitor for performance degradation. At each time step $t$, the model produces an ensemble of forecasted solutions, $\{ \hat{u}^{(j)}_t \}_{j=1}^n$. The forecast's quality is then evaluated using a the following metrics. We first define the calibration error as the normalized distance between the ensemble mean, $\bar{u}_t$, and the ground-truth solution, $u_t$. This term is directly comparable to $\tau_\alpha$ and provides a binary assessment of whether the forecast remains within its initial bounds. Next, we compute the ensemble spread to quantify forecast sharpness by measuring the average normalized distance of individual ensemble members from their mean:
\begin{equation}
\text{ES}_t = \frac{1}{n} \sum_{j=1}^{n} \frac{\| \hat{u}^{(j)}_t - \bar{u}_t \|}{\|\bar{u}_t\|}, \quad \text{for} \quad \bar{u}_t := \frac{1}{n} \sum_{j=1}^n \hat{u}^{(j)}_t.
\end{equation}
These two components are summed to form the conformal ensemble score (CES), a metric analogous to the continuous ranked probability score (CRPS) \cite{Pic_2023}. Finally, to measure the ensemble's internal consensus, we define the conformal prediction set $\Gamma_\alpha^{(t)}$ as:
\begin{equation}
\Gamma_\alpha^{(t)} = \left\{ \hat{u}^{(j)}_t : \frac{\| \hat{u}^{(j)}_t - \bar{u}_t \|}{\|\bar{u}_t\|} \leq \tau_\alpha \right\}.
\end{equation}
The proportion of members within $\Gamma_\alpha^{(t)}$ serves as our internal agreement (IA) metric. A drop in IA over time reflects a loss of forecast confidence and growing internal divergence. While no formal coverage guarantee holds at individual time steps $t> 1$ due to non-exchangeability in the time-series sequence, this approach provides an estimate of the temporal degradation.

\begin{table}[!b]
\centering
\small
\renewcommand{\arraystretch}{1.0}
\setlength{\tabcolsep}{5pt}
\caption{Coverage Summary for Poisson 2D. Both scalars are set for a significance level $\alpha = 0.1$. The functional and pointwise coverage reaches slightly above our target coverage of 90\%.}
\begin{tabular}{llccc}
\toprule
& \textbf{Method} & \textbf{Scalar} & \textbf{Func.} & \textbf{Point.} \\
\midrule
\multirow{1}{*}{\textbf{Uncalibrated}} 
& Pinball Only & --      & --       & 0.6911 \\
\midrule
\multirow{2}{*}{\textbf{Calibrated}} 
& Risk  Controlling   & 0.1480  & --  & \textbf{0.9101} \\
& Our Bounds  & 0.1486  & \textbf{0.9010} & 0.9108 \\
\bottomrule
\end{tabular}
\label{tab:poisson_coverage}
\end{table}

\subsubsection{Evaluation}
We assess calibration at three levels. First, pointwise coverage measures the proportion of spatial points within the conformal bounds, 
\( C_e = \frac{1}{N} \sum_{i=1}^{N} \mathbf{I}[ \hat{u}^{\min}_i(x) \le u_i(x) \le \hat{u}^{\max}_i(x) ] \). 
Second, functional coverage evaluates whether entire functions satisfy the calibrated radius, 
\( C_f = \frac{1}{K} \sum_{j=1}^{K} \mathbf{I}\left[ \frac{\|\hat{u}_j - u_j\|_{w,2}}{\|\hat{u}_j\|_{w,2}} \le \tau_\alpha \right] \). 
Finally, for long-horizon forecasts, we track diagnostic metrics (CES and IA) and qualitatively interpret their signals.

\section{Numerical Results \& Discussion}
To validate our framework, we evaluate the resulting prediction sets on progressively more challenging PDE benchmarks: Monte Carlo bounding for Darcy flow, post hoc quantile adjustment for the Poisson equation, and bounded ensemble forecasting for autoregressive Navier–Stokes dynamics. In addition, we conduct two ablation studies to assess the framework’s robustness to grid discretization and resolution misspecification.

\subsection{Darcy 1D: Monte Carlo Bounds}
We first evaluate ability of MC methods to produce calibrated bounds for generative UQ in 1D Darcy flow using variational and MC-dropout neural operators. Shown in Table~\ref{tab:darcy_coverage}, the uncalibrated approach yields overly consecrative coverage, whereas our calibration procedure successfully produces uncertainty sets that meet the target coverage. Additionally, we highlight a trade-off between methods: MC-Dropout achieves robust coverage with a larger conformal radius, while the variational method yields a tighter bound at the cost of slight under-coverage, a finding visually confirmed in Figure~\ref{fig:darcy_ex}. Notably, the standard MC approach lacks functional coverage, as there is no global scalar metric to compare entire functions; therefore it can only provide pointwise uncertainty estimates. While calibration is effective in this setting, the computational overhead of sampling-based methods presents a significant bottleneck for deployment in high-resolution, complex PDEs.

\begin{figure}[!t]
    \centering
    \small
    \includegraphics[width=0.9\linewidth]{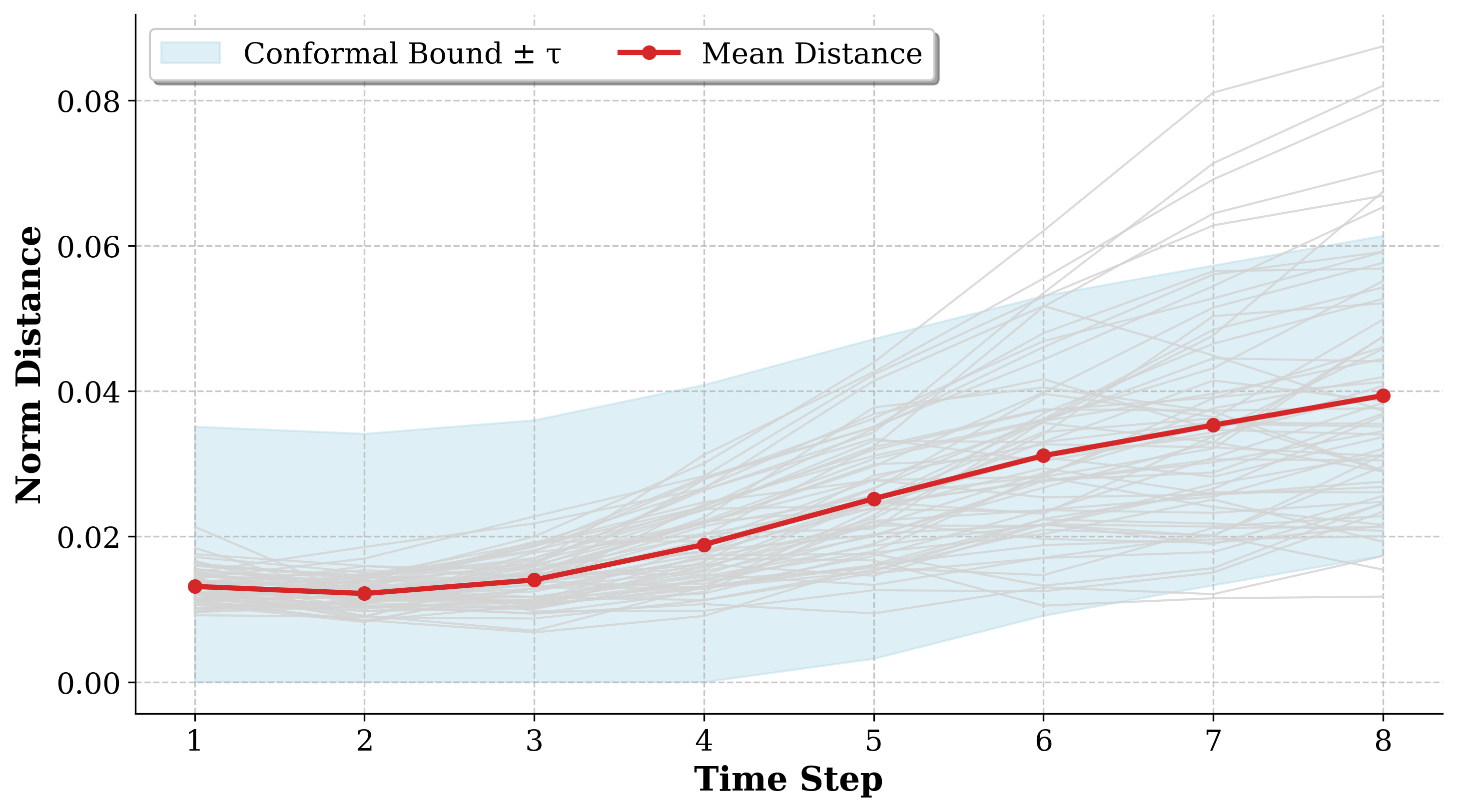}
    \caption{Navier-Stokes 2D: Bounded Ensemble. We focus on the mean-to-target distance (red), individual member distances (grey), and fixed conformal bound $\tau_\alpha$ (blue band) of a single instance from the evaluation set.}
    \label{fig:time_ensemble}
\end{figure}

\begin{table*}[!t]
\centering
\small
\renewcommand{\arraystretch}{1.0}
\setlength{\tabcolsep}{8pt}
\caption{Evolution of key metrics for the autoregressive forecast, evaluated at a significance level of $\alpha=0.1$ with a conformal threshold of $\tau=0.023$. The mean distance serves as the calibration error, the within $\tau_\alpha$ column gives a binary check of whether this error exceeds the threshold, and the ensemble spread measures forecast sharpness. Our CES score combines calibration error and sharpness to evaluate overall quality, while our IA metric quantifies the forecast's consensus.}
\label{tab:autoregressive_ces}
\begin{tabular}{lcccccccc}
\toprule
\textbf{Metric} & \textbf{$t=1$} & \textbf{$t=2$} & \textbf{$t=3$} & \textbf{$t=4$} & \textbf{$t=5$} & \textbf{$t=6$} & \textbf{$t=7$} & \textbf{$t=8$} \\
\midrule
\textbf{Within} $\tau_\alpha$ & Yes & Yes & Yes & Yes & Yes & \textbf{No} & \textbf{No} & \textbf{No} \\
\textbf{Mean Distance} & 0.0136 & 0.0126 & 0.0143 & 0.0170 & 0.0209 & 0.0260 & 0.0326 & 0.0391 \\
\textbf{Ensemble Spread} & 0.0132 & 0.0131 & 0.0152 & 0.0194 & 0.0252 & 0.0322 & 0.0402 & 0.0475 \\
\midrule
\textbf{IA (Ours)} & 98.0\% & 98.0\% & 92.2\% & 69.0\% & 36.3\% & 20.0\% & 13.8\% & 10.7\% \\
\textbf{CES (Ours)} & 0.0269 & 0.0257 & 0.0296 & 0.0364 & 0.0462 & 0.0583 & 0.0728 & 0.0867 \\
\textbf{CRPS} & 0.0082 & 0.0077 & 0.0096 & 0.0116 & 0.0142 & 0.0175 & 0.0216 & 0.0255 \\
\bottomrule
\end{tabular}
\end{table*}

\subsection{Poisson 2D: Adjusted Quantiles}
Next, we evaluate a more computationally efficient approach on the 2D Poisson equation using a neural operator that directly outputs both the solution and an error heuristic. As shown in Table~\ref{tab:poisson_coverage}, our method achieves pointwise coverage levels comparable to the risk-controlling PAC bounds formulated by \cite{ma2024calibrateduncertaintyquantificationoperator}. A key strength of our approach is its flexibility: since calibration is performed post hoc using a functional norm, the method accommodates both pointwise and functional uncertainty quantification, which the risk controlling bounds cannot do. Figure~\ref{fig:poisson_ex} illustrates that the learned error heuristic closely tracks the true error, though the resulting prediction intervals tend to be wider in many regions—a characteristic of models trained with the pinball loss. Although, the reliance on the pinball loss introduces a potential failure mode if it fails to accurately capture the true error distribution, a limitation noted in prior work.

\subsection{Navier-Stokes 2D: Bounded Ensemble Forecasting}
As the underlying dynamics become increasingly complex, learned error heuristics may fail, restricting us to quantifying distributional drift instead. For our final case, we consider stochastic autoregressive forecasting of the 2D Navier–Stokes vorticity field. As shown in Table~\ref{tab:autoregressive_ces}, forecast quality progressively degrades over time, evidenced by increasing CES and CRPS values and a sharp decline in IA across the forecast horizon. A key advantage of our framework is its ability to relate forecast degradation to the significance level. The \enquote{Within \(\tau_\alpha\)} column provides a binary indicator of calibration validity, flagging violations of the predefined safety threshold. Furthermore, because all components are expressed in the same relative units as \(\tau_\alpha\), the magnitudes of continuous diagnostic values can be directly interpreted with respect to the calibrated conformal radius. Finally, by jointly analyzing IA and CES, we observe that even when the ensemble mean remains within calibrated bounds, individual members may drift beyond a reasonable significance level, indicating loss of forecast reliability.

\subsection{Ablation Studies}
Finally, we conduct two ablation studies on the sensitivity to grid geometry and misspecification of resolution.

\begin{table}[!b]
\centering
\small
\renewcommand{\arraystretch}{1.0}
\setlength{\tabcolsep}{4pt} 
\caption{Comparison of calibrated thresholds $\tau_\alpha$ across grid geometries for $\alpha = 0.1$. The weighted norm yields more consistent $\tau_\alpha$ values with less variation under grid scheme.}
\begin{tabular}{lcc}
\toprule
\textbf{Grid Type} & \textbf{Relative Norm} & \textbf{Weighted Norm} \\
\midrule
Uniform & 0.02810 & 0.02810 \\
Clustered Center & 0.02866 & 0.02821 \\
Clustered Boundary & 0.03280 & 0.03079 \\
\midrule
Std. of Thresholds & 0.00257 & \textbf{0.00148} \\
Coeff. of Variation & 8.6\% & \textbf{5.1\%} \\
\bottomrule
\end{tabular}
\label{tab:tau_side_by_side}
\end{table}

\subsubsection{Conditional Coverage}
Assessing the effect of grid geometry on calibration is essential for demonstrating the necessity of our weighted norm. Using the 2D Poisson problem, we generate data on three distinct grid types: uniform, center-clustered, and boundary-clustered. For each grid, we compute \(\tau_\alpha\) using both the standard relative \(L^2\) norm and the weighted relative \(L^2\) norm. As shown in Table~\ref{tab:tau_side_by_side}, the weighted norm yields \(\tau_\alpha\) values with lower relative variation across grid types. In practice, neural operators often develop biases toward the discretization used during training. Our method mitigates this sensitivity by incorporating grid geometry (via cell areas), allowing the nonconformity score to more faithfully approximate the continuous-space error and thereby produce more reliable uncertainty sets.

\begin{table}[t!]
\centering
\small
\renewcommand{\arraystretch}{1.0}
\setlength{\tabcolsep}{8pt}
\caption{Coverage summary for Darcy 1D and Poisson 2D super-resolutions, calibrated at significance level $\alpha = 0.1$.}
\begin{tabular}{llcc}
\toprule
 & \textbf{Instance} & \textbf{Scalar} & \textbf{Coverage} \\
\midrule
\multirow{2}{*}{\textbf{Unadjusted}}
 & Darcy   & 0.00771 & 0.878 \\
 & Poisson & 0.09659 & 0.199 \\
\midrule
\multirow{2}{*}{\textbf{Adjusted}}
 & Darcy   & 0.00811 & \textbf{0.909} \\
 & Poisson & 0.10767 & \textbf{0.749} \\
\bottomrule
\end{tabular}
\label{tab:darcy_poisson_coverage}
\end{table}

\subsubsection{Super-Resolution Coverage}\label{sec:adj}
Next, we evaluate the super-resolution task introduced in the Theoretical Foundation section. Table~\ref{tab:darcy_poisson_coverage} shows that directly applying a conformal threshold \(\tau_\alpha\) calibrated at a low resolution—chosen to be slightly above the training resolution to exploit the observed log-linear convergence—results in substantial undercoverage at higher resolutions, with the Poisson case falling nearly 70\% below the target level. After adjustment, coverage for the Darcy case is fully restored to the desired 90\%, while the Poisson case, though still imperfect, improves markedly from 19.9\% to 74.9\%. These results indicate that the proposed adjustment substantially improves reliability, but does not fully recover target coverage under large resolution shifts, suggesting that the heuristic fails to capture the full complexity of the underlying error behavior. Furthermore, estimating the regression slope requires careful selection of calibration points, and thus relies on prior knowledge of the resolution-dependent behavior of \(\tau_\alpha(d)\).

\section{Conclusion}
This work extends split CP to function spaces using a quadrature-weighted norm, enabling calibrated uncertainty quantification for neural operators across varying grid resolutions. The proposed method improves stability under discretization shifts, achieves near target coverage in super-resolution tasks via a heuristic model of \(\tau_\alpha\), and enables reliability diagnostics in autoregressive forecasting through CES and IA. Empirical results demonstrate improved functional and pointwise coverage compared to baseline approaches, with particular gains under resolution mismatch and distributional drift.

\bibliography{ref}

\appendix
\onecolumn

\section{Code Availability}
Our repository is available under the Apache License version 2.0. Our checkpoints and output (including plots, tables, and animations) are too large to be included in the repository, as the total size is approximately 5 GB for the checkpoints and 39 GB for the output. However, they can be obtained upon request from the corresponding author.

\section{Detailed Proof of Theorem 1}

\begin{proof}
Let $\{(f_i, u_i)\}_{i=1}^{n+1}$ be an i.i.d. sample from the data-generating distribution on $\mathcal{X} \times \mathcal{Y}$. Let $\hat{u}_i = \mathcal{G}_\theta(f_i)$ be the model prediction for input $f_i$. Our goal is to construct a prediction set for $u_{n+1}$ in the function space $\mathcal{Y}$ with a coverage guarantee. The proof proceeds in five steps.

\noindent\textbf{Step 1: Calibration in Discretized Space.}
    We define a nonconformity score on the discretized space $\mathbb{R}^d$ using the projection operator $P: \mathcal{Y} \rightarrow \mathbb{R}^d$ and the quadrature-weighted norm $||\cdot||_{w,2,d}$. For each of the first $n$ samples in the calibration set, we compute the score:
    \begin{equation}
        s_i = ||P(\hat{u}_i) - P(u_i)||_{w,2,d}.
    \end{equation}
    We then compute the conformal threshold $\tau_\alpha$ as the $\lceil(1-\alpha)(n+1)\rceil / (n+1)$-th empirical quantile of the scores $\{s_1, \dots, s_n\}$. By the standard theory of split conformal prediction, the score for the test point $s_{n+1}$ satisfies:
    \begin{equation}
        \mathbb{P}(s_{n+1} \le \tau_\alpha) \ge 1 - \alpha.
    \end{equation}
    This provides a finite-sample, distribution-free guarantee in the discretized space.

\noindent\textbf{Step 2: Relating Discrete and Continuous Nonconformity.}
    The core of our functional guarantee relies on relating the discrete score $s_i$ to a true nonconformity score in the continuous function space, $s_i^{\text{cont}} = ||\hat{u}_i - u_i||_\mathcal{Y}$. This relationship is formalized by the bilipschitz assumption (Assumption 1), which states that there exist constants $c_1, c_2 > 0$ such that for any $u, v \in \mathcal{Y}$:
    \begin{equation}
        c_1 ||u - v||_\mathcal{Y} \le ||P(u) - P(v)||_{w,2,d} \le c_2 ||u - v||_\mathcal{Y}.
    \end{equation}
    Applying this to our nonconformity scores, we have:
    \begin{equation}
        c_1 s_i^{\text{cont}} \le s_i \le c_2 s_i^{\text{cont}} \quad \forall i=1, \dots, n+1.
    \end{equation}

\noindent\textbf{Step 3: Deriving the Functional Coverage Guarantee.}
    We can now translate the coverage guarantee from the discrete space to the continuous space. The event $s_{n+1} \le \tau_\alpha$ is the basis of our $1-\alpha$ guarantee. From the bilipschitz inequality, this event implies a condition on the continuous score: if $s_{n+1} \le \tau_\alpha$, then $c_1 s_{n+1}^{\text{cont}} \le s_{n+1} \le \tau_\alpha$, which implies $s_{n+1}^{\text{cont}} \le \tau_\alpha / c_1$.
    Because the event $\{s_{n+1} \le \tau_\alpha\}$ implies the event $\{s_{n+1}^{\text{cont}} \le \tau_\alpha / c_1\}$, the probability of the latter must be at least as large as the probability of the former:
    \begin{equation}
        \mathbb{P}(s_{n+1}^{\text{cont}} \le \tau_\alpha / c_1) \ge \mathbb{P}(s_{n+1} \le \tau_\alpha) \ge 1 - \alpha.
    \end{equation}

\noindent\textbf{Step 4: Defining the Functional Prediction Set.}
    This result allows us to define a prediction set directly in the function space $\mathcal{Y}$. We define the set $\Gamma_\alpha^{\text{func}}(f_{n+1})$ as a ball in $\mathcal{Y}$ with radius $\tau_\alpha / c_1$:
    \begin{equation}
        \Gamma_\alpha^{\text{func}}(f_{n+1}) = \{v \in \mathcal{Y} : ||\hat{u}_{n+1} - v||_\mathcal{Y} \le \tau_\alpha / c_1\}.
    \end{equation}
    From Step 3, the probability that the true function $u_{n+1}$ lies in this set is:
    \begin{equation}
        \mathbb{P}(u_{n+1} \in \Gamma_\alpha^{\text{func}}(f_{n+1})) = \mathbb{P}(s_{n+1}^{\text{cont}} \le \tau_\alpha / c_1) \ge 1 - \alpha.
    \end{equation}
    This provides a valid finite-sample coverage guarantee for a prediction set in the function space $\mathcal{Y}$.

\noindent\textbf{Step 5: The Asymptotic Argument.}
    The "asymptotic" nature of the guarantee refers to the fact that the radius of the guaranteed set, $\tau_\alpha / c_1$, converges to the calibrated discrete radius $\tau_\alpha$ as the discretization becomes arbitrarily fine. As the discretization is refined (i.e., $d \rightarrow \infty$), the quadrature-weighted norm converges to the continuous $L^2$ norm, and the projection $P$ becomes a near-isometry for the class of functions of interest. In this limit, the distortion constant $c_1 \to 1$. Thus, for sufficiently fine discretizations, the guaranteed function-space ball has a radius that approaches the empirically calibrated threshold $\tau_\alpha$.
\end{proof}

\begin{remark}[Asymptotic Convergence of the Prediction Set Radius.]
The practical utility of the guarantee depends on the constant $c_1$, which quantifies the geometric distortion introduced by the discretization map $P$. The "asymptotic" aspect of our framework relates to the behavior of this constant. As the discretization is refined (i.e., $d \rightarrow \infty$), the quadrature-weighted norm converges to the continuous $L^2$ norm (as shown in Theorem 4), and the projection $P$ becomes a near-isometry for the class of functions of interest. In this limit, the distortion constant $c_1 \to 1$. Consequently, for sufficiently fine discretizations, the radius of the guaranteed function-space ball, $\tau_\alpha / c_1$, approaches the empirically calibrated and directly computable threshold $\tau_\alpha$. This ensures that our functional prediction set is not only theoretically valid but also practically useful, as its size is determined by a well-behaved, empirical quantity.
\end{remark}

\section{Detailed Proof of Theorem 2}
\begin{proof}
Let $\{(f_i, u_i)\}_{i=1}^{n}$ be an i.i.d. calibration set drawn from $\mathbb{P}_{\mathrm{cal}}$, and let $\hat{u}_i = \mathcal{G}_\theta(f_i)$ denote model predictions. We aim to characterize the functional coverage when the same conformal threshold $\tau_\alpha$ is applied to data at forecast time $t$ from a potentially different distribution $\mathbb{P}_t$. The proof follows three steps.

\noindent\textbf{Step 1: Coverage under Distribution Shift.}
By the standard split CP guarantee, the discrete scores $s_i = \|P_d(\hat{u}_i) - P_d(u_i)\|_{w,2,d}$ satisfy
$
\mathbb{P}_{\mathrm{cal}}\big(s \le \tau_\alpha\big) \ge 1-\alpha.
$
When this threshold is applied to samples drawn from $\mathbb{P}_t$, the drift bound of \cite{barber2023conformalpredictionexchangeability} implies
\begin{equation}
\mathbb{P}_{t}\big(s \le \tau_\alpha\big) \ge 1-\alpha - d_{\mathrm{TV}}(\mathbb{P}_{\mathrm{cal}}, \mathbb{P}_t),
\end{equation}
where $d_{\mathrm{TV}}$ is the total variation distance between $\mathbb{P}_{\mathrm{cal}}$ and $\mathbb{P}_t$.

\noindent\textbf{Step 2: Lifting to the Function Space.}
From Assumption~\ref{assum:bilipschitz}, we have $c_1 \|\hat{u}_t - u_t\|_{\mathcal{Y}} \le s$. Thus, the event $s \le \tau_\alpha$ implies
\begin{equation}
\|\hat{u}_t - u_t\|_{\mathcal{Y}} \le \tau_\alpha / c_1.
\end{equation}

\noindent\textbf{Step 3: Functional Prediction Set Coverage.}
Define the functional prediction set 
$
\Gamma_{\alpha}^{\text{func}}(f_t) = \{ v \in \mathcal{Y} : \| \hat{u}_t - v \|_{\mathcal{Y}} \le \tau_\alpha / c_1 \}.
$
By Step 2, $s \le \tau_\alpha$ implies $u_t \in \Gamma_{\alpha}^{\text{func}}(f_t)$. Combining this implication with the drift-adjusted discrete guarantee from Step 1
\begin{equation}
\mathbb{P}_t\left(u_t \in \Gamma_{\alpha}^{\text{func}}(f_t)\right) \ge 1-\alpha - d_{\mathrm{TV}}(\mathbb{P}_{\mathrm{cal}}, \mathbb{P}_t),
\end{equation}
which establishes the stated result.
\end{proof}

\section{Additional Theoretical Results}
Now we present three additional theoretical results, which further illuminate the connection between discrete and functional coverage. The first result Lemma~\ref{thrm:proj_stabiltiy} formalizes a stability property of the projection operator, ensuring that small discretization errors cannot drastically change the geometry of the output space. The second result Theorem 3 incorporates this stability into a PDE setting, where the solution operator is assumed to be stable with respect to perturbations in boundary conditions or source terms. The third result Theorem shows the convergence of the weighted norm to the continuous $L^2$ norm.

\begin{lemma}[Projection Stability]\label{thrm:proj_stabiltiy}
Let $\mathcal{Y}$ be a normed function space equipped with $\|\cdot\|_{\mathcal{Y}}$, and let $P: \mathcal{Y} \rightarrow \mathbb{R}^d$ be a discretization (or projection) map. Suppose that there exist constants $c_1, c_2 > 0$ such that for all $u, v \in \mathcal{Y}$,
\begin{equation}
c_1 \|u-v\|_{\mathcal{Y}} \leq \|P(u)-P(v)\|_{\mathbb{R}^d} \leq c_2 \|u-v\|_{\mathcal{Y}}.
\end{equation}
Then $P$ is bilipschitz on $\mathcal{Y}$, up to the constants $c_1$ and $c_2$. Consequently, any ball in $\mathcal{Y}$ is mapped to a comparable ball in $\mathbb{R}^d$, and vice versa.
\end{lemma}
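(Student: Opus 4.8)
The plan is to recognize that the statement is essentially an unpacking of the hypothesis together with a short computation on balls, so the argument is elementary and the main ``work'' is phrasing it carefully. First I would record that the hypothesized two-sided inequality is, by definition, precisely the assertion that $P$ is a bilipschitz embedding of $(\mathcal{Y},\|\cdot\|_{\mathcal{Y}})$ into $(\mathbb{R}^d,\|\cdot\|_{\mathbb{R}^d})$ with lower constant $c_1$ and upper constant $c_2$; in particular the lower bound $c_1\|u-v\|_{\mathcal{Y}}\le\|P(u)-P(v)\|_{\mathbb{R}^d}$ forces $P$ to be injective, since $P(u)=P(v)$ implies $\|u-v\|_{\mathcal{Y}}=0$ and hence $u=v$. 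Thus $P$ admits an inverse $P^{-1}\colon P(\mathcal{Y})\to\mathcal{Y}$, and the same inequality chain shows $P^{-1}$ is Lipschitz with constant $1/c_1$ while $P$ is Lipschitz with constant $c_2$; this is already the ``bilipschitz on $\mathcal{Y}$, up to the constants $c_1,c_2$'' part of the claim.

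Next I would translate this into the statement about balls. Fix $u_0\in\mathcal{Y}$, write $B_{\mathcal{Y}}(u_0,r)$ for the closed $\|\cdot\|_{\mathcal{Y}}$-ball of radius $r$ about $u_0$ and $B_d(y_0,\rho)$ for the closed Euclidean ball of radius $\rho$ about $y_0$. For the forward direction, if $u\in B_{\mathcal{Y}}(u_0,r)$ the upper bound gives $\|P(u)-P(u_0)\|_{\mathbb{R}^d}\le c_2 r$, so $P\big(B_{\mathcal{Y}}(u_0,r)\big)\subseteq B_d(P(u_0),c_2 r)$; conversely, if $y=P(u)$ lies in $B_d(P(u_0),c_1 r)$ then the lower bound yields $\|u-u_0\|_{\mathcal{Y}}\le r$, so $B_d(P(u_0),c_1 r)\cap P(\mathcal{Y})\subseteq P\big(B_{\mathcal{Y}}(u_0,r)\big)$. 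These two inclusions sandwich the image $P\big(B_{\mathcal{Y}}(u_0,r)\big)$ between concentric Euclidean balls whose radii differ only by the factor $c_2/c_1$, which is the precise meaning of ``a ball in $\mathcal{Y}$ maps to a comparable ball in $\mathbb{R}^d$.'' For the ``vice versa'' claim I would argue symmetrically with preimages: a point $u\in\mathcal{Y}$ satisfies $P(u)\in B_d(P(u_0),\rho)$ iff $\|P(u)-P(u_0)\|_{\mathbb{R}^d}\le\rho$, and the two bounds give $B_{\mathcal{Y}}(u_0,\rho/c_2)\subseteq P^{-1}\big(B_d(P(u_0),\rho)\big)\subseteq B_{\mathcal{Y}}(u_0,\rho/c_1)$, again a sandwich with distortion $c_2/c_1$.

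I do not anticipate a genuine obstacle, since the content of the lemma is definitional. The only point requiring care is that $P$ need not be surjective onto $\mathbb{R}^d$, so the forward ``comparable ball'' statement must be phrased in terms of the image $P(\mathcal{Y})$ (or as a two-sided inclusion rather than an equality of balls), and the converse direction is cleanest when stated for preimages of balls. I would also note explicitly that the argument is purely metric and uses no linearity of $P$, which is exactly why it applies verbatim to the quadrature-based discretization maps appearing in Assumption~\ref{assum:bilipschitz}, and that the resulting inclusions are sharp given the constants $c_1,c_2$.
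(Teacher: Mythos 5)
Your proposal is correct and follows essentially the same route as the paper's proof: both simply unpack the two-sided inequality to show that balls in $\mathcal{Y}$ and neighborhoods in $\mathbb{R}^d$ correspond up to the radius distortion factors $c_2$ and $1/c_1$. Your version is slightly more careful than the paper's in noting injectivity of $P$ and in phrasing the forward inclusion relative to the image $P(\mathcal{Y})$ rather than all of $\mathbb{R}^d$, but this is a refinement of the same definitional argument, not a different approach.
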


\begin{proof}
The assumed double inequality implies that $P$ preserves distances in $\mathcal{Y}$ up to multiplicative factors $c_1$ and $c_2$. Specifically,
\begin{equation}
c_1\|u-v\| y \leq\|P(u)-P(v)\|_{\mathbb{R}^d} \leq c_2\|u-v\|_y
\end{equation}
Hence, if $\|u-v\| y \leq \epsilon$, it follows that $\|P(u)-P(v)\|_{\mathbb{R}^d} \leq c_2 \epsilon$.
Conversely, $\|P(u)-P(v)\|_{\mathbb{R}^d} \leq \delta$ implies $\|u-v\| y \leq \frac{\delta}{c_1}$.
Thus, neighborhoods in $\mathcal{Y}$ map to neighborhoods in $\mathbb{R}^d$ with at most a constant change in radius, establishing bilipschitz continuity of $P$.
\end{proof}

\begin{theorem}[Conformal Coverage for PDE Solutions Under Operator Stability]\label{thr:2}
Let $\mathcal{A}: \mathcal{X} \rightarrow \mathcal{Y}$ be a forward operator that maps an input function (e.g., initial/boundary data, source terms) $f \in \mathcal{X}$ to the corresponding PDE solution $u=\mathcal{A}(f) \in \mathcal{Y}$. Assume $\mathcal{A}$ is Lipschitz stable with constant $L>0$; that is, for any $f_1, f_2 \in \mathcal{X}$,
$$
\left\|\mathcal{A}\left(f_1\right)-\mathcal{A}\left(f_2\right)\right\|_{\mathcal{Y}} \leq L\left\|f_1-f_2\right\|_{\mathcal{X}}
$$
Suppose further that the discretization map $P: \mathcal{Y} \rightarrow \mathbb{R}^d$ satisfies the bilipschitz property in the Lemma above, with constants $c_1 \leq c_2$. Let $\mathcal{G}_\theta$ be a neural-operator-based approximation of $\mathcal{A}$, and let $\widehat{\Gamma}_\alpha(\cdot) \subseteq \mathbb{R}^d$ be a (discretized) conformal set calibrated to achieve
$$
\mathbb{P}\left[P\left(u_{n+1}\right) \notin \widehat{\Gamma}_\alpha\left(f_{n+1}\right)\right] \leq \alpha
$$
under i.i.d. draws $\left\{\left(f_i, u_i\right)\right\} \sim \mathcal{D}$. Define the functional conformal set
$$
\Gamma_\alpha(f)=\left\{v \in \mathcal{Y}: P(v) \in \widehat{\Gamma}_\alpha(f)\right\}
$$
Then, for sufficiently fine discretization (i.e., adequately large $d$ ), there exists $\delta \geq 0$ such that
$$
\mathbb{P}\left[u_{n+1} \notin \Gamma_\alpha\left(f_{n+1}\right)\right] \leq \alpha+\delta
$$
where $u_{n+1}=\mathcal{A}\left(f_{n+1}\right)$. Moreover, $\delta \rightarrow 0$ as the mesh or basis in $P$ is refined and the training sample size $n$ grows.
\end{theorem}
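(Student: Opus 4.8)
The plan is to peel the functional miscoverage event down to the already-established discrete one and then account for the two approximation gaps that the phrase ``sufficiently fine discretization'' is hiding. First I would use the definition of the lifted set directly: since $\Gamma_\alpha(f)=\{v\in\mathcal{Y}:P(v)\in\widehat{\Gamma}_\alpha(f)\}$, the events $\{u_{n+1}\in\Gamma_\alpha(f_{n+1})\}$ and $\{P(u_{n+1})\in\widehat{\Gamma}_\alpha(f_{n+1})\}$ are literally the same event. Hence $\mathbb{P}[u_{n+1}\notin\Gamma_\alpha(f_{n+1})]=\mathbb{P}[P(u_{n+1})\notin\widehat{\Gamma}_\alpha(f_{n+1})]\le\alpha$ by the calibration hypothesis, which uses only exchangeability of $\{(f_i,u_i)\}_{i=1}^{n+1}$ and the validity of $\widehat{\Gamma}_\alpha$ in $\mathbb{R}^d$. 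At this level of generality $\delta=0$, and the bilipschitz and operator-stability hypotheses do no work yet.

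The actual content of the statement is what happens when one wants $\Gamma_\alpha$ to be a usable object in $\mathcal{Y}$ --- a norm ball around the neural-operator prediction --- rather than the possibly unbounded exact $P$-preimage. So the second step is to take the discrete set to be the quadrature-weighted ball $\widehat{\Gamma}_\alpha(f)=\{z:\|z-P(\mathcal{G}_\theta(f))\|_{w,2,d}\le\tau_\alpha\}$ and the functional set to be $B(f)=\{v:\|v-\mathcal{G}_\theta(f)\|_{\mathcal{Y}}\le\tau_\alpha/c_1\}$. By the lower bilipschitz inequality of Lemma~\ref{thrm:proj_stabiltiy}, $P^{-1}(\widehat{\Gamma}_\alpha(f))\subseteq B(f)$, so $\mathbb{P}[u_{n+1}\in B(f_{n+1})]\ge\mathbb{P}[P(u_{n+1})\in\widehat{\Gamma}_\alpha(f_{n+1})]\ge 1-\alpha$, again with no loss; this is essentially Theorem~\ref{thm:functional_coverage} restated in the PDE-operator setting.

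The genuine $\delta$ enters through the two places where the discrete calibration is not exactly the continuous object it stands in for, and the third step is to bound each. \textbf{(i) Finite-sample slack:} the split-CP threshold $\tau_\alpha$ is an order statistic of $n$ scores, so while the $\le\alpha$ miscoverage bound holds exactly, $\tau_\alpha$ itself concentrates around the population $(1-\alpha)$-quantile only at rate $\mathcal{O}(n^{-1/2})$ (the $\varepsilon_{\text{cal}}$ term in Section~\ref{sec:super}), so any comparison of $B(f)$ with its population counterpart carries an $\mathcal{O}(n^{-1/2})$ term. \textbf{(ii) Discretization slack:} the discrete score $\|P(\mathcal{G}_\theta(f))-P(u)\|_{w,2,d}$ is a Riemann sum for $\|\mathcal{G}_\theta(f)-u\|_{L^2(\Omega)}$, so by the weighted-norm convergence result (Theorem~4) it differs from the continuous $L^2$ error by $\mathcal{O}(\eta^k)$ on a grid of mesh size $\eta$, which is exactly the statement $c_1=1-\mathcal{O}(\eta^k)$, $c_2=1+\mathcal{O}(\eta^k)$. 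This is where operator stability is used: the Lipschitz bound $\|\mathcal{A}(f_1)-\mathcal{A}(f_2)\|_{\mathcal{Y}}\le L\|f_1-f_2\|_{\mathcal{X}}$, together with the $H^s$, $s>d/2$, regularity assumed for PDE solutions, confines the solution family to a bounded smooth set on which the quadrature constants are uniform in the draw, so $L$ appears as a multiplicative factor in front of the $\mathcal{O}(\eta^k)$ modulus. Collecting, $\delta\le C\,L\,\eta^k + C'n^{-1/2}$, which tends to $0$ as the mesh or basis in $P$ is refined and $n\to\infty$.

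The main obstacle I anticipate is the uniformity in step (ii): writing $c_1\to 1$ is only meaningful if the bilipschitz constants are uniform over the support of the data-generating distribution rather than chosen per function, and Lipschitz stability in the ambient $\mathcal{X}\to\mathcal{Y}$ norms does not by itself furnish the $H^s$ control the quadrature bound requires. I would discharge this either by taking the $H^s$, $s>d/2$, embedding for solutions as a standing hypothesis (as the paper already does for PDE solutions) and restricting attention to a bounded ball of such solutions, on which Theorem~4's rate is uniform, or by invoking elliptic/parabolic regularity of $\mathcal{A}$ to upgrade the stated Lipschitz stability to a smoothing estimate. Everything else --- the preimage identity, the inclusion from Lemma~\ref{thrm:proj_stabiltiy}, and the order-statistic bound for $\tau_\alpha$ --- should be routine.
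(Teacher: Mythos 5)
Your proposal is correct, and its skeleton --- reduce to the discrete split-CP guarantee, lift to a functional ball via the lower bilipschitz inequality of Lemma~\ref{thrm:proj_stabiltiy}, and attribute the asymptotics to $c_1 \to 1$ under grid refinement --- is the same as the paper's, which simply invokes the argument of Theorem~\ref{thm:functional_coverage} on the solution subspace $\mathcal{Y}_{\mathcal{A}}$. You depart from the paper in two places, and in both you are more complete. First, you observe that for the set $\Gamma_\alpha(f)=\{v\in\mathcal{Y}: P(v)\in\widehat{\Gamma}_\alpha(f)\}$ as literally defined in the theorem, the functional coverage event coincides with the discrete one, so the claimed bound holds with $\delta=0$ and no use of bilipschitzness or operator stability; the paper never makes this observation and in fact establishes coverage for a different set (the ball of radius $\tau_\alpha/c_1$ around $\hat{u}_{n+1}$), leaving the provenance of $\delta$ and the role of growing $n$ unexplained. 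Your decomposition $\delta \le C\,L\,\eta^k + C' n^{-1/2}$ supplies that missing accounting and matches the informal $\varepsilon_{\text{disc}}+\varepsilon_{\text{cal}}$ heuristic of Section~\ref{sec:super}. Second, you correctly flag that Lipschitz stability of $\mathcal{A}$ in the ambient $\mathcal{X}\to\mathcal{Y}$ norms does not by itself deliver the Sobolev regularity needed for quadrature constants uniform over the data distribution --- the paper asserts this implication without justification in its Step 1 --- and you propose the right repair (a standing $H^s$, $s>d/2$, hypothesis on a bounded solution set, or an elliptic-regularity smoothing estimate). Both points strengthen rather than contradict the paper's argument; as written, your version is the more rigorous of the two.
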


\begin{proof}
he key is to formalize the role of the PDE operator $\mathcal{A}$ in ensuring that the conditions for Theorem 1 are met.

\noindent\textbf{Step 1: Defining the Relevant Function Subspace.}
    The set of all possible PDE solutions forms a specific subset of the larger function space $\mathcal{Y}$. We define this subset as the range of the solution operator, $\mathcal{Y}_{\mathcal{A}} = \{ \mathcal{A}(f) \mid f \in \mathcal{X} \}$. The Lipschitz stability of $\mathcal{A}$ implies that solutions in $\mathcal{Y}_{\mathcal{A}}$ possess a certain regularity (e.g., they lie within a ball in a Sobolev space). This regularity is critical because it makes the bilipschitz assumption on the discretization map $P$ plausible.

\noindent\textbf{Step 2: Applying the Bilipschitz Assumption on the Subspace.}
    We assume that the discretization map $P: \mathcal{Y} \rightarrow \mathbb{R}^d$ is bilipschitz specifically on the subspace of solutions $\mathcal{Y}_{\mathcal{A}}$. That is, there exist constants $c_1, c_2 > 0$ such that for any two solutions $u, v \in \mathcal{Y}_{\mathcal{A}}$:
    \begin{equation}
        c_1 ||u - v||_\mathcal{Y} \le ||P(u) - P(v)||_{w,2,d} \le c_2 ||u - v||_\mathcal{Y}.
    \end{equation}
    This is a more targeted assumption than requiring the property to hold over all of $\mathcal{Y}$. The stability of $\mathcal{A}$ makes this assumption reasonable for standard discretization methods (e.g., sufficiently fine grids for smooth solutions).

\noindent\textbf{Step 3: Invoking Theorem 3.}
    The calibration data consists of pairs $(f_i, u_i)$ where $u_i = \mathcal{A}(f_i)$, so all true solutions $u_i$ belong to $\mathcal{Y}_{\mathcal{A}}$. The neural operator $\mathcal{G}_\theta$ is trained to approximate $\mathcal{A}$, so its predictions $\hat{u}_i = \mathcal{G}_\theta(f_i)$ are also expected to lie in or near this subspace.
    Since the bilipschitz condition holds for the relevant functions (the true solutions and their approximations), all conditions for Theorem 3 are met. We calibrate a threshold $\tau_\alpha$ in the discrete space as per Step 1 of the proof of Theorem 3. Then, following Steps 2-4 of that proof, we can construct a functional prediction set:
    \begin{equation}
        \Gamma_\alpha^{\text{func}}(f_{n+1}) = \{v \in \mathcal{Y} : ||\hat{u}_{n+1} - v||_\mathcal{Y} \le \tau_\alpha / c_1\},
    \end{equation}
    which is guaranteed to have coverage of at least $1-\alpha$:
    \begin{equation}
        \mathbb{P}(u_{n+1} \in \Gamma_\alpha^{\text{func}}(f_{n+1})) \ge 1 - \alpha,
    \end{equation}
    where $u_{n+1} = \mathcal{A}(f_{n+1})$. The asymptotic nature relates to $c_1 \to 1$ as the discretization is refined. This completes the proof.
\end{proof}

\begin{remark}
The Lipschitz stability assumption on $\mathcal{A}$ arises naturally in many PDE problems with wellposedness guarantees, where small perturbations in boundary conditions or source terms lead to proportionally bounded changes in the solution field. The theorem then assures that, for stable PDE operators and suitably refined discretization, the functional conformal predictor $\Gamma_\alpha\left(f_{n+1}\right)$ offers robust finite-sample coverage in the infinite-dimensional setting.
\end{remark}

\begin{theorem}[Convergence to the Continuous $L^2$ Norm]
Let $f$ be a Riemann integrable function on a bounded domain $\Omega \subset \mathbb{R}^2$. Let $P_d$ be a sequence of partitions of $\Omega$ into $d$ cells, indexed by $d \in \mathbb{N}$, such that the norm of the partition, $\|P_d\| = \max_{i=1,\dots,d} (\text{diam}(C_i))$, tends to zero as $d \to \infty$. Let $\|f\|_{w,2,d}^2$ be the squared quadrature-weighted norm computed on the partition $P_d$. Then, the sequence of discrete norms converges to the squared continuous $L^2$ norm:
\begin{equation}
\lim_{d \to \infty} \|f\|_{w,2,d}^2 = \|f\|_{L^2(\Omega)}^2
\end{equation}
\end{theorem}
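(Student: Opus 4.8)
The plan is to recognize the discrete quantity $\|f\|_{w,2,d}^2 = \sum_{\mathbf{j}\in\mathcal{J}} w_{\mathbf{j}}\, f(\mathbf{x}_{\mathbf{j}})^2$ as exactly a Riemann sum of the function $g := f^2$ associated with the tagged partition $(P_d,\{\mathbf{x}_{\mathbf{j}}\})$, where each weight $w_{\mathbf{j}}$ is the volume $\mathrm{vol}(C_{\mathbf{j}})$ of cell $C_{\mathbf{j}}$ and each evaluation point $\mathbf{x}_{\mathbf{j}}$ is a tag lying in $C_{\mathbf{j}}$. Once this identification is in place, the conclusion follows from the standard characterization of the Riemann integral on a Jordan-measurable domain: if $g$ is Riemann integrable on $\Omega$, then for every $\epsilon>0$ there is a $\delta>0$ such that every tagged partition of mesh below $\delta$ yields a Riemann sum within $\epsilon$ of $\int_\Omega g$. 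Since $\|P_d\|\to 0$, eventually $\|P_d\|<\delta$, which gives $\bigl|\|f\|_{w,2,d}^2-\|f\|_{L^2(\Omega)}^2\bigr|<\epsilon$ and hence the stated limit.

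The one ingredient requiring a short argument is that $g=f^2$ is itself Riemann integrable. First I would note that Riemann integrability of $f$ forces boundedness, say $|f|\le M$ on $\Omega$; then I would conclude integrability of $g$ by either of two routes. The elementary route: $t\mapsto t^2$ is Lipschitz on $[-M,M]$ with constant $2M$, so the oscillation of $g$ on any cell is at most $2M$ times the oscillation of $f$ there, whence the gap between upper and lower Darboux sums of $g$ is at most $2M$ times the corresponding gap for $f$, which can be made arbitrarily small. The measure-theoretic route: by Lebesgue's criterion $f$ is continuous almost everywhere, hence so is $g$, and $g$ is bounded by $M^2$, so $g$ is Riemann integrable. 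Either argument is routine.

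I would then assemble the proof in order: (i) show $f$ is bounded and $g=f^2$ is Riemann integrable on $\Omega$; (ii) write $\|f\|_{w,2,d}^2$ explicitly as the Riemann sum $S(g;P_d,\{\mathbf{x}_{\mathbf{j}}\})$, verifying that the weights are the cell volumes and sum to $\mathrm{vol}(\Omega)$; (iii) fix $\epsilon>0$ and extract the mesh threshold $\delta$ from the Riemann integrability of $g$; (iv) use $\|P_d\|\to 0$ to obtain $D$ with $\|P_d\|<\delta$ for all $d\ge D$; (v) conclude $\bigl|S(g;P_d,\cdot)-\int_\Omega f^2\bigr|<\epsilon$ for all $d\ge D$, i.e.\ $\lim_{d\to\infty}\|f\|_{w,2,d}^2=\|f\|_{L^2(\Omega)}^2$.

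The main obstacle — really the only place care is needed — is making the phrase \enquote{partition into $d$ cells} precise enough for the Riemann-sum machinery to apply: the cells must tile $\Omega$ with overlaps only on a set of Jordan measure zero, and if $\Omega$ is not itself a box one must either assume $\Omega$ is Jordan measurable and that the partition respects its boundary, or extend $f$ by zero to an enclosing rectangle and run the argument there. In the Cartesian, cell-centered setting of Figure~\ref{fig:quadrapture} this is a mild bookkeeping point rather than a genuine difficulty, but it is the step I would state most carefully, since a sloppy treatment of boundary cells is the only thing that could break the Riemann-sum identification.
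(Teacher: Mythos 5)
Your proposal is correct and takes essentially the same route as the paper's proof: both identify $\|f\|_{w,2,d}^2$ as a Riemann sum for $g=f^2$ and invoke the mesh-to-zero characterization of the Riemann integral. You are in fact slightly more careful than the paper, which asserts without argument that $f^2$ is Riemann integrable and glosses over the Jordan-measurability of the cells and boundary handling that you rightly flag.
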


\begin{proof}
\noindent We establish the equivalence in four steps, following the definition of the Riemann integral.

\noindent\textbf{Step 1: Formal Definitions.}
The squared continuous $L^2$ norm of a function $f: \Omega \to \mathbb{R}$ is given by the definite integral:
\begin{equation}
\|f\|_{L^2(\Omega)}^2 = \int_{\Omega} f(x)^2 \,dx
\end{equation}
A partition $P_d$ of the domain $\Omega$ consists of a set of $d$ non-overlapping cells $\{C_1, C_2, \dots, C_d\}$ such that $\cup_{i=1}^d C_i = \Omega$. For each cell $C_i$, we denote its area by $w_i = \text{Area}(C_i)$ and choose a sample point $x_i^* \in C_i$.
The squared quadrature-weighted discrete norm is defined on this partition as:
\begin{equation}
\|f\|_{w,2,d}^2 = \sum_{i=1}^d w_i f(x_i^*)^2
\end{equation}

\noindent\textbf{Step 2: Identification as a Riemann Sum.}
Let the function $g(x) = f(x)^2$. Since $f$ is Riemann integrable, $g$ is also Riemann integrable on $\Omega$. The expression for the squared quadrature-weighted norm,
\begin{equation}
\sum_{i=1}^d f(x_i^*)^2 w_i = \sum_{i=1}^d g(x_i^*) w_i
\end{equation}
is precisely the definition of a Riemann sum for the function $g(x)$ over the domain $\Omega$ with respect to the partition $P_d$ and the sample points $\{x_i^*\}$.

\noindent\textbf{Step 3: Convergence via the Definition of the Riemann Integral.}
The definite integral of a function $g$ over a domain $\Omega$ is defined as the limit of its Riemann sums as the norm of the partition (i.e., the maximum diameter of any cell in the partition) approaches zero. Formally,
\begin{equation}
\int_{\Omega} g(x) \,dx = \lim_{\|P_d\| \to 0} \sum_{i=1}^d g(x_i^*) w_i
\end{equation}
This limit exists and is independent of the choice of sample points $x_i^*$ because $g$ is Riemann integrable. As we refine our grid such that $d \to \infty$ and $\|P_d\| \to 0$, our discrete norm calculation becomes an increasingly accurate approximation of this integral.

\noindent\textbf{Step 4: Conclusion of Equivalence.}
By substituting $g(x) = f(x)^2$ and applying the definition of the Riemann integral from Step 3, we directly connect the limit of the discrete norm to the continuous norm:
\begin{equation}
\lim_{d \to \infty} \|f\|_{w,2,d}^2 = \lim_{\|P_d\| \to 0} \sum_{i=1}^d f(x_i^*)^2 w_i = \int_{\Omega} f(x)^2 \,dx = \|f\|_{L^2(\Omega)}^2
\end{equation}
This convergence proves that the quadrature-weighted discrete norm is fundamentally equivalent to the continuous $L^2$ norm in the limit. This property justifies its use as a reliable and geometrically sound nonconformity score for function-space conformal prediction.
\end{proof}

\section{PDEs \& Data Generation}

\subsection{Darcy Flow}
For the first case we consider a one-dimensional variant of Darcy flow governed by the steady-state elliptic problem
\begin{equation}
-\frac{d}{d x}\left(k(x) \frac{d u}{d x}\right)=0 \quad \text { on } x \in[0,1]
\end{equation}
subject to Dirichlet boundary conditions $u(0)=0$ and $u(1)=1$. The coefficient $k(x)$ represents a permeability field that varies in space. In order to emulate heterogeneous media, we draw $k(x)$ from a random field with mild smoothness, ensuring that it remains strictly positive. The essential objective is to approximate the mapping $k(\cdot) \mapsto u(\cdot)$ and then provide set-valued predictions $\Gamma_\alpha(\cdot)$ with a guaranteed coverage level $1-\alpha$.

For data generation, we discretize the domain $[0,1]$ into $1024$ uniform points and construct each random permeability $k(x)$ by summing a few random Fourier modes. This yields fields with different oscillatory patterns, constrained so that $k(x)$ remains between 0.01 and 10. For each sample, we solve the linear system that arises from the finite-difference approximation of the Darcy equation, thereby obtaining a ground-truth solution $u(x)$. We generated 100,000 such permeability-solution pairs and subdivided them into training, calibration, and test sets in the ratio $80 \%, 10 \%, 10 \%$.

\subsection{Poisson Equation}
In our second case we consider a two-dimensional Poisson problem, in which the governing equation is
\begin{equation}
-\nabla \cdot (\nabla u(x, y)) = f(x, y), \quad (x, y) \in [0,1]^2,
\end{equation}
subject to Dirichlet boundary conditions \( u(x, y) = 0 \) on \( \partial [0,1]^2 \).
The function \( f(x, y) \) represents a spatially varying source term that influences the solution \( u(x, y) \). We generate a family of such problems by sampling \( f \) from a random field with bounded support, then numerically solving for the corresponding solution \( u \). Our goal is to learn the mapping \( f(\cdot, \cdot) \mapsto u(\cdot, \cdot) \) and and again provided set-valued predictions $\Gamma_\alpha(\cdot)$.

To generate synthetic examples, we discretize the domain \([0,1]^2\) using three types of grids: (i) a standard uniform \(N \times N\) grid with evenly spaced coordinates, (ii) a non-uniform grid obtained by applying a cubic transformation to a uniform reference grid to cluster points toward the center, and (iii) a grid generated via a sine transformation to concentrate points near the domain boundaries. These coordinate mappings introduce structured resolution variation while preserving geometric continuity for the finite difference solver. We construct random forcing fields \(f\) by summing a small number of random Fourier modes. For each forcing sample, we solve the equation \(\Delta u = f\) with boundary condition \(u = 0\) using either a finite-difference or Jacobi iterative method, yielding the ground-truth solution \(u\). This process yields a dataset \(\{(f_i, u_i)\}\), which we partition into 5{,}000 training, 1{,}000 calibration, and 1{,}000 test instances.

\subsection{Unsteady Navier--Stokes Dynamics}
Our final case study considers the two-dimensional incompressible Navier--Stokes equations in vorticity form:
\begin{equation}
\partial_t \omega + \mathbf{u} \cdot \nabla \omega = \nu \Delta \omega + f, \quad (x, y) \in [0,1]^2, \ t \in [0, T],
\end{equation}
where \(\omega(x, y, t)\) denotes the vorticity, \(\nu > 0\) is the kinematic viscosity, and \(f(x, y)\) is a stationary external forcing term. The velocity field \(\mathbf{u}\) is recovered from \(\omega\) via the stream function \(\psi\), solving \(\Delta \psi = \omega\), followed by \(\mathbf{u} = (-\partial_y \psi, \partial_x \psi)\). We impose periodic boundary conditions in both spatial dimensions.

We generate the initial vorticity field \(\omega_0(x, y)\) by sampling from a two-dimensional Gaussian random field with spectral decay. The external forcing \(f(x, y)\) is chosen as a fixed sinusoidal function. We simulate the evolution of \(\omega(x, y, t)\) using a pseudo-spectral method with dealiasing and implicit treatment of the diffusion term. The solver records the vorticity field at uniformly spaced time steps, producing a spatio-temporal trajectory \(\omega(\cdot, \cdot, t)\) over a time horizon \(T = 50\) with \(200\) snapshots. We then generate 1,200 initial conditions and their corresponding time evolutions on a \(64 \times 64\) spatial grid. Next we split this into 1{,}000 training, 100 calibration, and 100 test instances. Because we don't provide a qualitative evaluation of this case in the paper, we refer the reader to Figure~\ref{fig:nva} for a visualization. 

\begin{figure}[ht]
    \centering
    \includegraphics[width=0.52\linewidth]{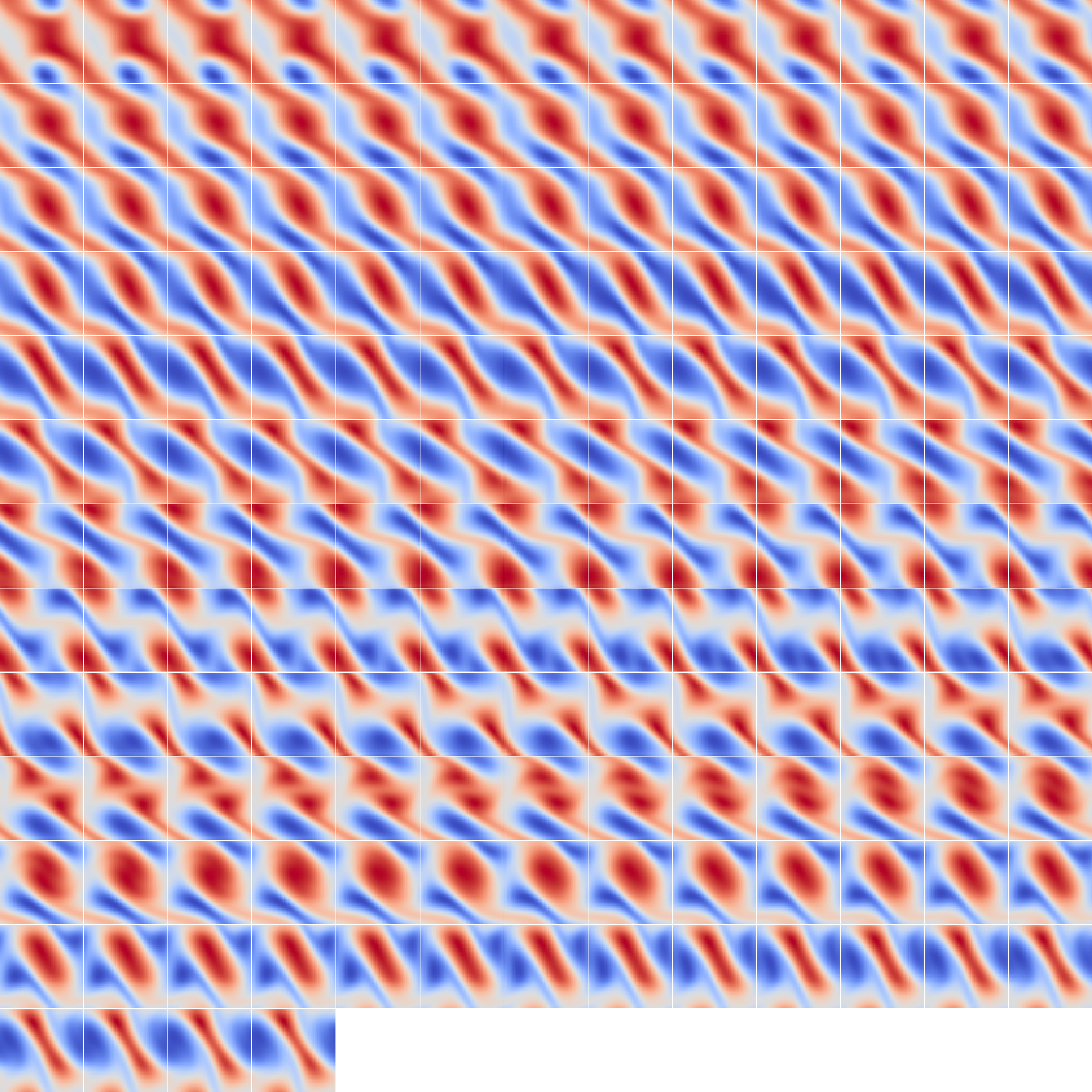}
    \caption{Unsteady Navier--Stokes Dynamics. Visualized is a single predicted trajectory from the evaluation set.}
    \label{fig:nva}
\end{figure}

\section{Discussion on Resolution Adjustment}
As shown in Figure~\ref{fig:tau_distribution}, only two dominant patterns are observed in the Poisson data—excluding the lower resolutions, which are dominated by $\varepsilon_{\text{disc}}$. In the Darcy case, a similar trend emerges, albeit with a slightly weaker spike. Thus, our regression approach proves effective for calibration transport across resolutions, especially for super-resolution tasks.
\begin{figure*}[ht]
    \centering
    \begin{minipage}[t]{0.48\textwidth}
        \centering
        \includegraphics[width=\linewidth]{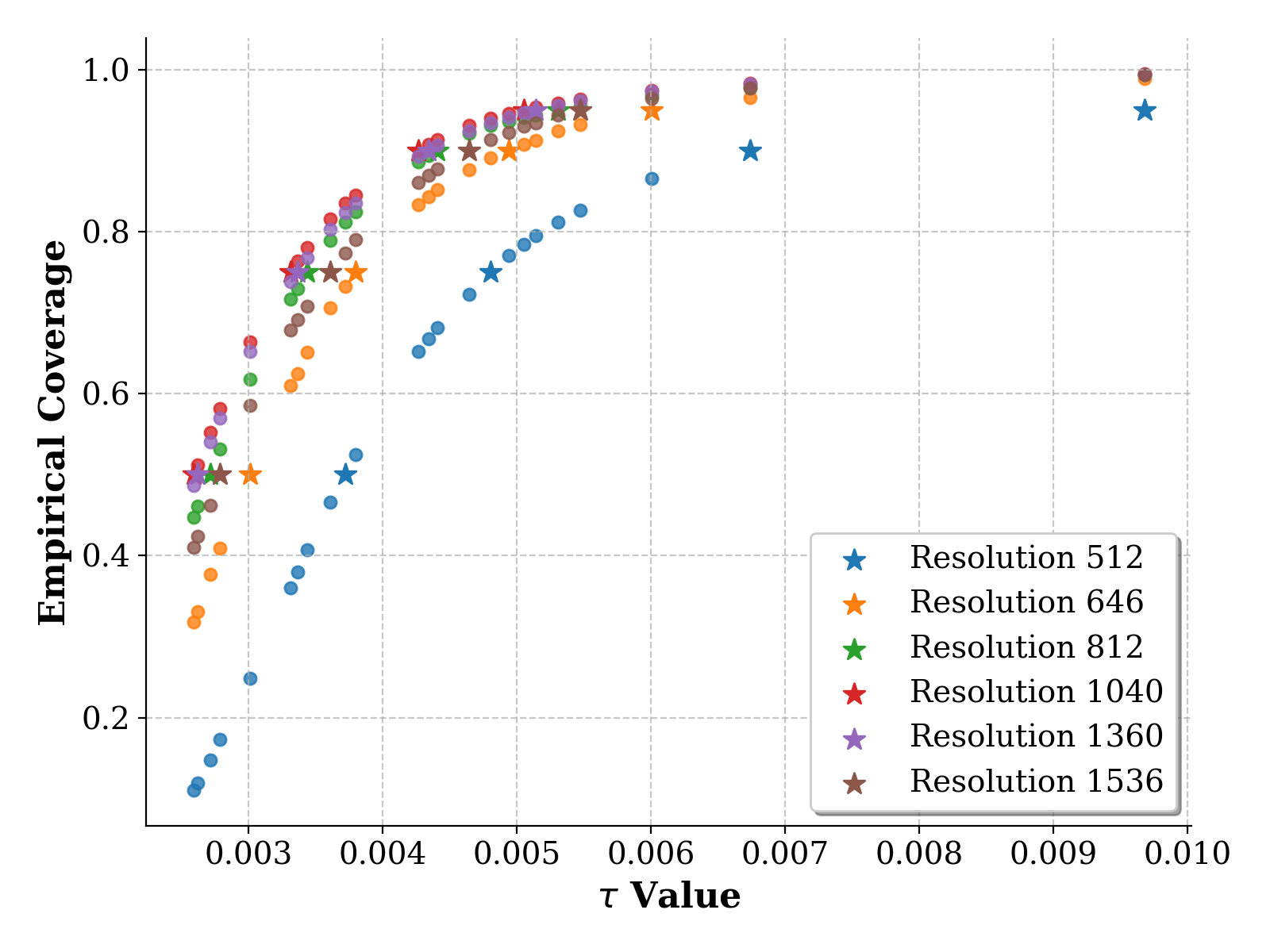}
        \caption{Coverage versus $\tau_\alpha$, grouped by resolution. We mark each resolution's own calibrated $\tau_\alpha$/coverage with a colored star; circles represent $\tau_\alpha$ values evaluated at non-native resolutions.}\label{fig:covergence_compare}
    \end{minipage}
    \hfill
    \begin{minipage}[t]{0.48\textwidth}
        \centering
        \includegraphics[width=\linewidth]{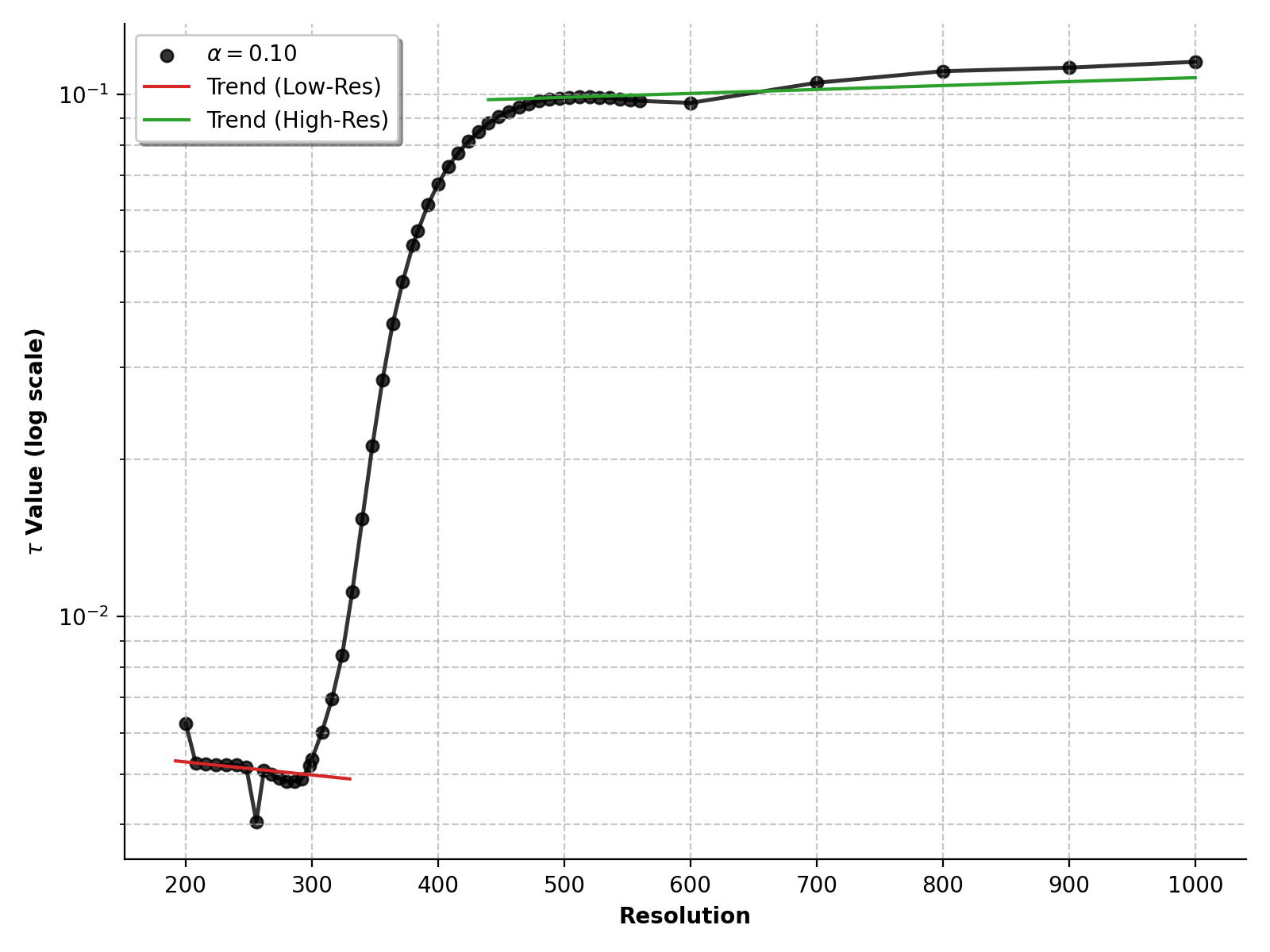}
        \caption{We plot the empirical distribution of $\tau_\alpha$ values across varying resolutions for the Poisson case study. We propose two small regressions to transport $\tau_\alpha$ within the most consistent reigns.}
        \label{fig:tau_distribution}
    \end{minipage}
\end{figure*}
In some cases, particularly for resolutions close to the calibration resolution, this transportation adjustment is not as necessary. Figure~\ref{fig:covergence_compare} shows the inter-evaluation of $\tau_\alpha$ values across a small subset of resolutions within the Darcy flow case. As we increase the resolution locally, the drop in significance—while still present—remains relatively minor compared to larger resolution shifts.  For example, consider the $\tau_\alpha$ value produced at resolution 1{,}040 for a significance level of $\alpha = 0.1$. As the resolution increases to 1{,}536, the coverage drops only slightly to 85\%. In contrast, when the resolution decreases by a similar amount to 512, the coverage drops more substantially to 65\%.s

\section{Experimental Setup}

All experiments were conducted on SPORC (Scheduled Processing On Research Computing), Rochester Institute of Technology’s High Performance Computing (HPC) cluster. For all training, evaluation, and calibration tasks, we used a single NVIDIA A100 GPU and two CPU cores. This configuration was sufficient for all neural operator models and associated conformal calibration routines. No distributed training or multi-GPU setups were required. Job submissions were managed through Slurm, and all experiments used fixed random seeds for reproducibility. We provide a summary of each methods computational budget in Table~\ref{tab:comp_bug}.
\begin{table*}[ht]
    \centering
    \caption{Computational Summary. Reported losses are dependent on training procedure (e.g., variational methods include the KL loss.) Each result is produced from an average of 5 runs.}
    \begin{tabular}{lccccc}
        \toprule
        \textbf{Model} & \textbf{Epochs} & \textbf{Dataset Size} & \textbf{\# Parameters} & \textbf{Loss} & \textbf{Training Time (d:hh:mm)} \\
        \midrule
        \multicolumn{4}{l}{\textbf{Darcy 1D}} \\
        \quad MC Dropout & 700 & 20,000 & 454K & 0.000006 & 0:00:34 \\
        \quad Variational & 700 & 20,000 & 480K & 0.000065 & 0:00:34 \\
        \midrule 
        \multicolumn{4}{l}{\textbf{Poisson 2D}} \\
        \quad Triplet & 500 & 5,000 & 19.7M & 0.005017 & 0:01:02 \\
        \quad Quantile & 500 & 5,000 & 19.7M & 0.005783 & 0:01:01 \\
        \midrule        
        \multicolumn{4}{l}{\textbf{Navier-Stokes 2D}} \\
        \quad Deterministic & 100 & 1,200 & 226.9M & 0.000004 & 3:18:28 \\
        \quad Variational & 100 & 1,200 & 227.0M & 0.000116 & 2:18:28 \\
        \bottomrule
    \end{tabular}
    \label{tab:comp_bug}
\end{table*}

\section{Super Resolution}
Figures~\ref{fig:darcy_super_ex} and~\ref{fig:poisson_super_ex} present qualitative visualizations of individual test instances evaluated far outside the training and calibration resolution. In both cases, the displayed instance corresponds to the resolution level to which the conformal calibration threshold $\tau_\alpha$ was transported using the empirical regression procedure described in Section~\ref{sec:adj}. These examples illustrate the structure of the predicted field, the associated conformal bounds, and the alignment between model uncertainty and true error, despite a significant resolution shift relative to the calibration set. The ability to apply calibrated uncertainty estimates under such resolution mismatch is critical for high-fidelity scientific applications and demonstrates the practical effectiveness of our resolution-agnostic calibration framework.
\begin{figure*}[ht]
    \centering
    \setlength{\tabcolsep}{4pt}
    \renewcommand{\arraystretch}{1.0}

    \begin{tabular}{@{}c@{}c@{}c@{}c@{}}
        % First row (MC Dropout)
        \raisebox{21mm}{\rotatebox[origin=c]{90}{\textbf{Variational}}} &
        \begin{minipage}[b]{0.31\textwidth}
            \centering
            \caption*{\hspace{4mm}(a) Prediction vs Ground Truth}
            \includegraphics[width=\textwidth]{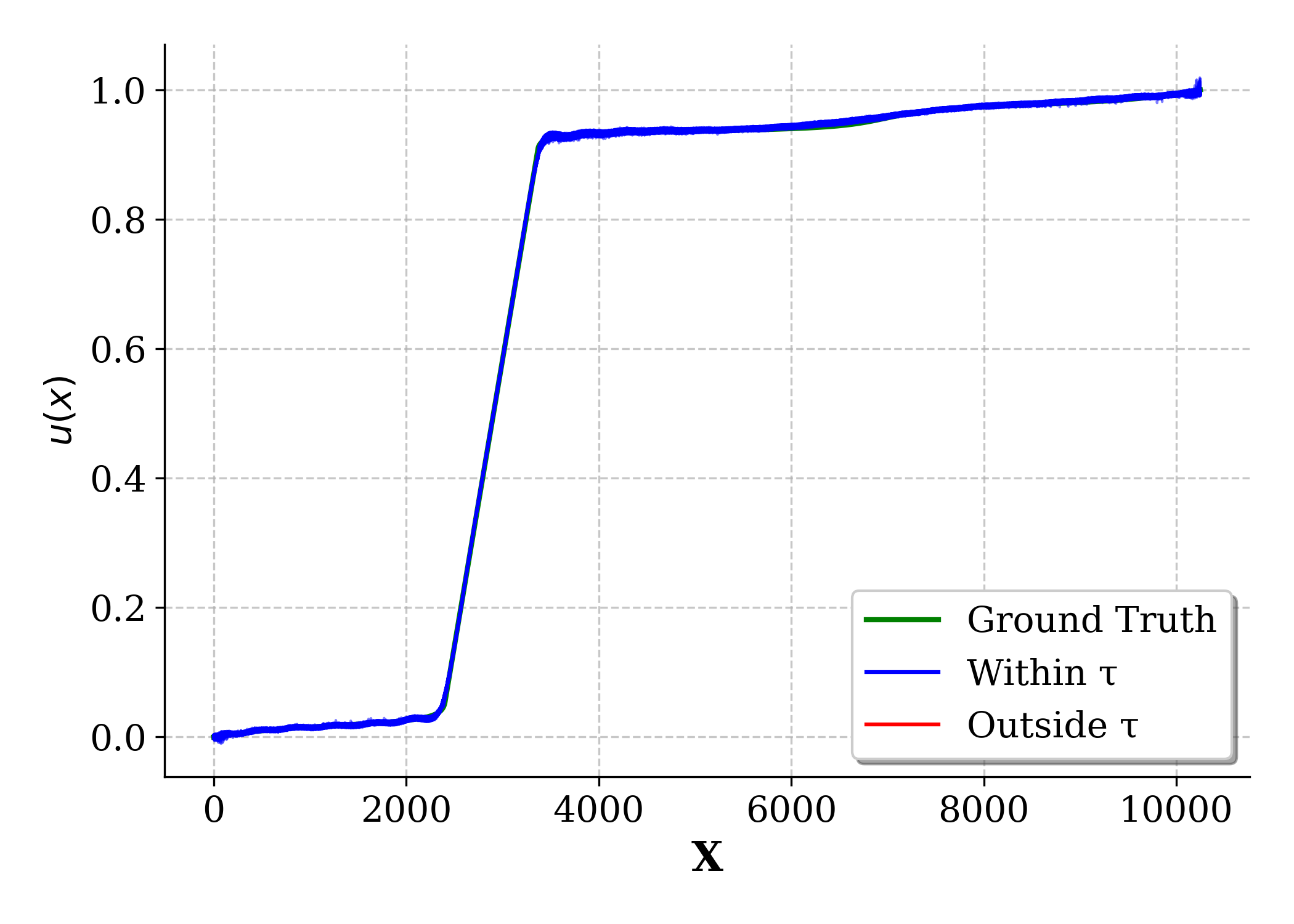}
        \end{minipage} &
        \begin{minipage}[b]{0.31\textwidth}
            \centering
            \caption*{\hspace{4mm}(b) Prediction Interval}
            \includegraphics[width=\textwidth]{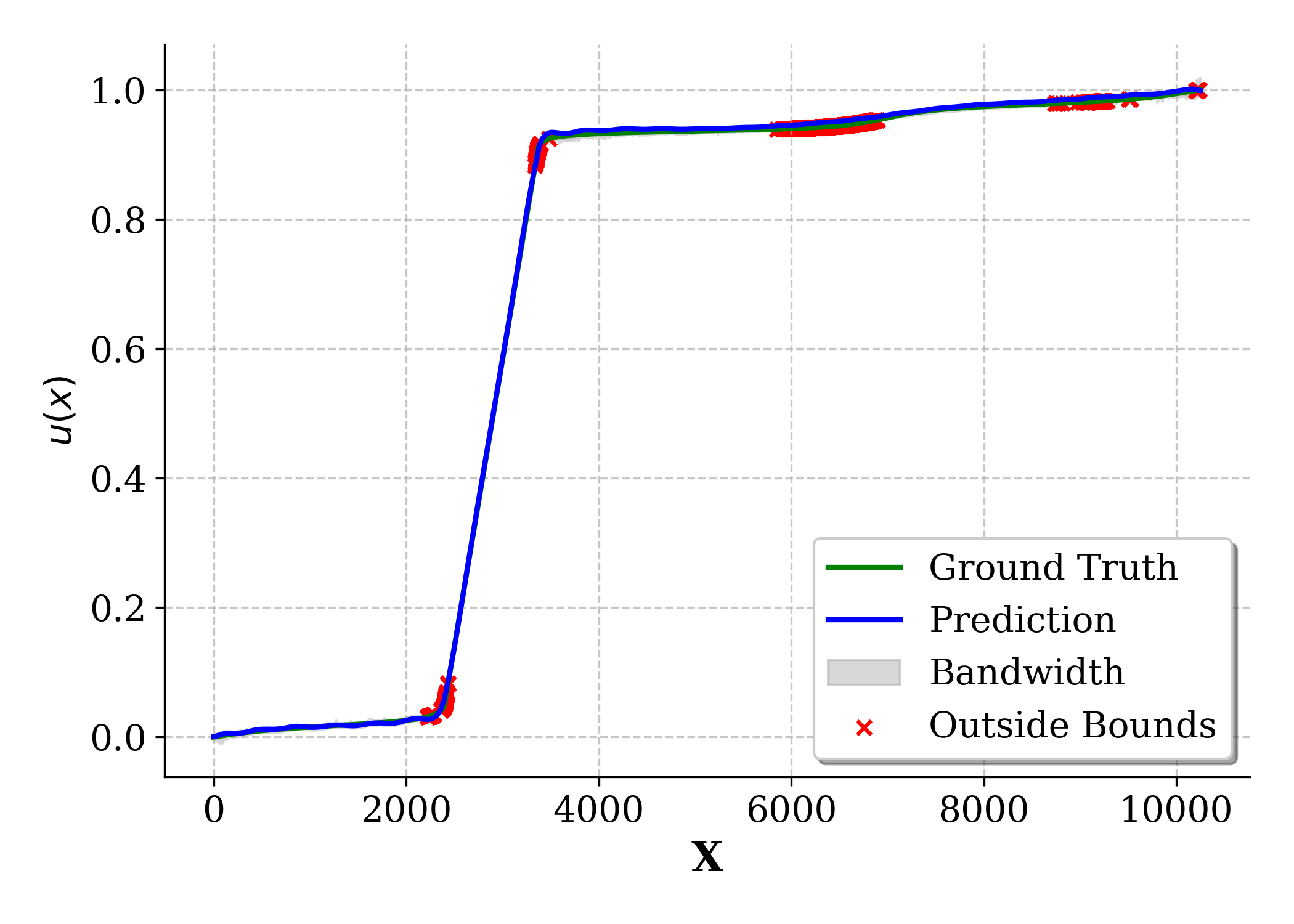}
        \end{minipage} &
        \begin{minipage}[b]{0.31\textwidth}
            \centering
            \caption*{\hspace{4mm}(c) Std. vs Prediction Error}
            \includegraphics[width=\textwidth]{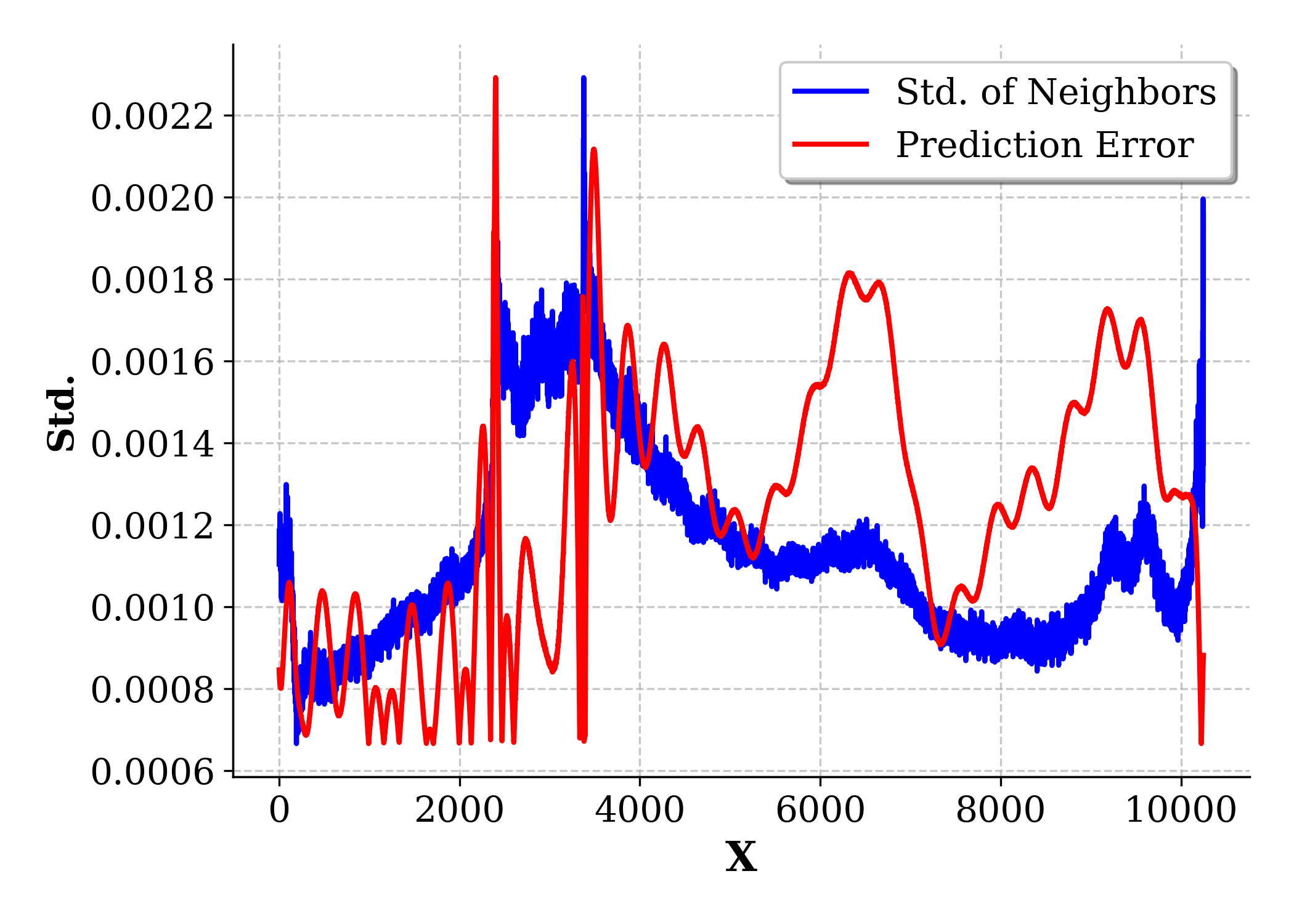}
        \end{minipage} \\[2mm]
    \end{tabular}
    \caption{Visualization of a Darcy flow test instance evaluated at a resolution significantly higher than that of the training and calibration data. This specific instance corresponds to the resolution level used for transporting the conformal calibration scalar $\tau_\alpha$ via the regression procedure. Subfigures show (a) the predicted solution alongside the ground truth, (b) the conformal prediction interval derived from sampled trajectories, and (c) the local standard deviation compared to the pointwise prediction error.}
    \label{fig:darcy_super_ex}
\end{figure*}
\begin{figure*}[ht]
    \centering
    \begin{minipage}[b]{0.32\textwidth}
        \centering
        \includegraphics[width=0.9\textwidth]{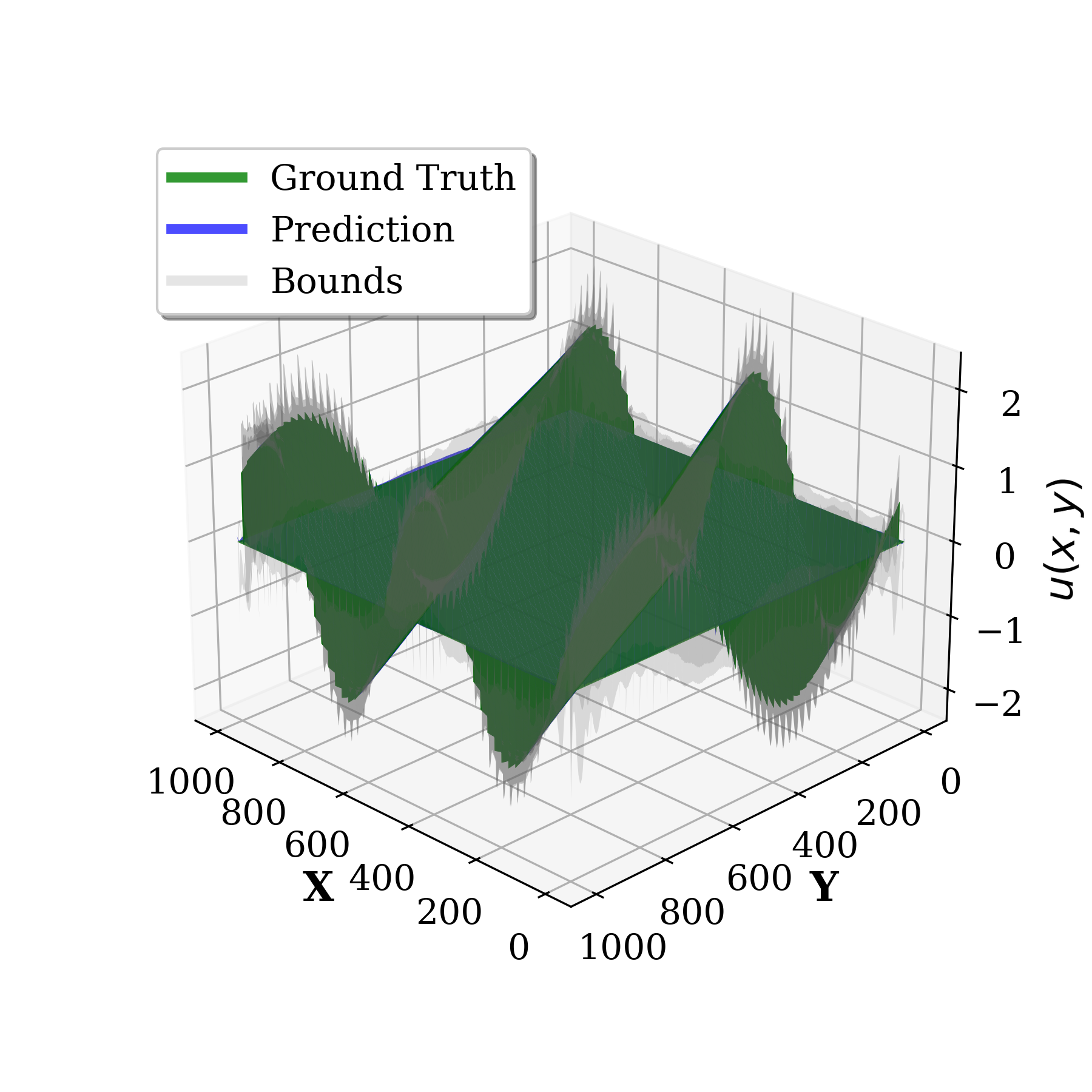}
        \caption*{\hspace{2mm}(a) Prediction vs Ground Truth}
    \end{minipage}
    \hfill
    \begin{minipage}[b]{0.32\textwidth}
        \centering
        \includegraphics[width=0.9\textwidth]{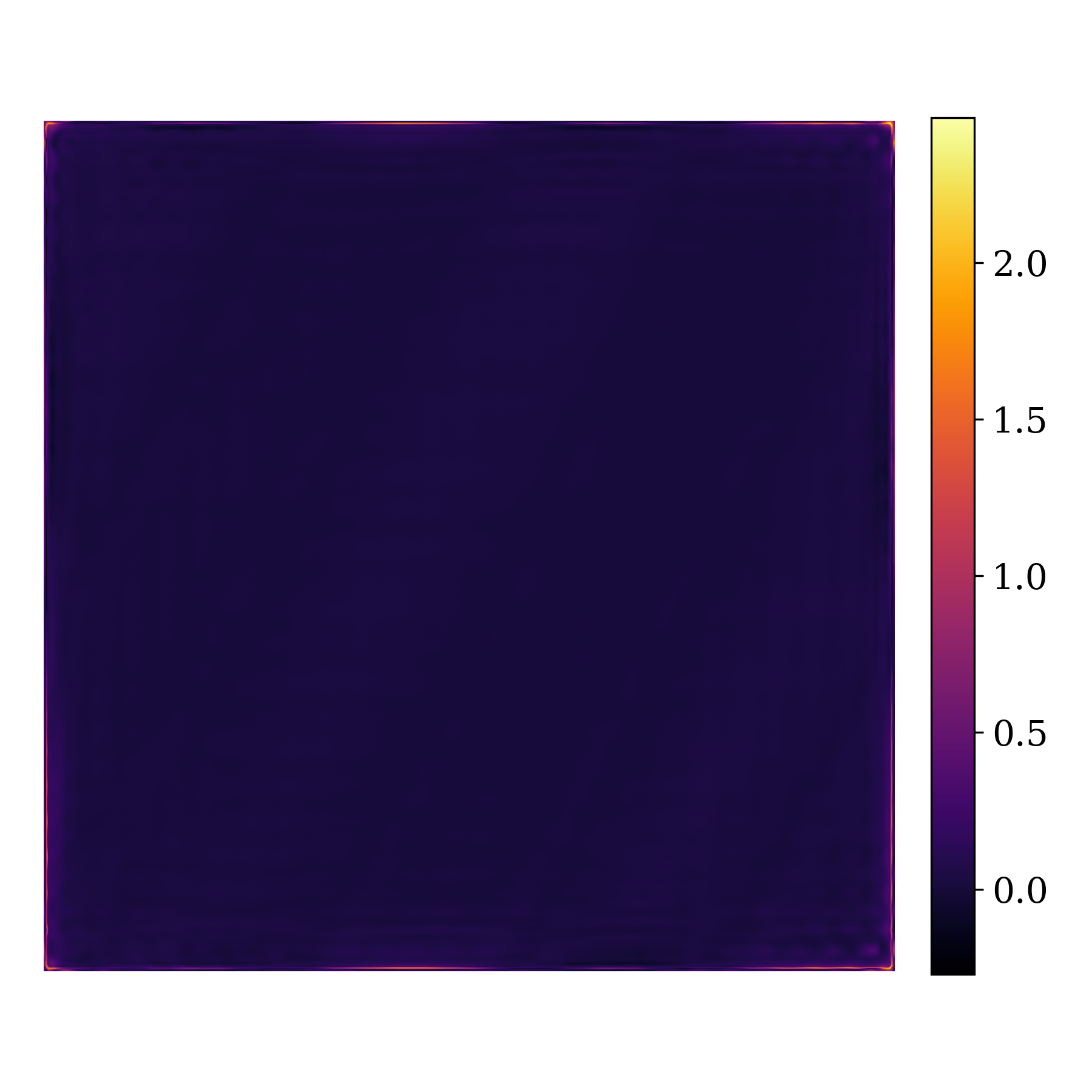}
        \caption*{\hspace{-6mm}(b) Std. of Neighbors}
    \end{minipage}
    \hfill
    \begin{minipage}[b]{0.32\textwidth}
        \centering
        \includegraphics[width=0.9\textwidth]{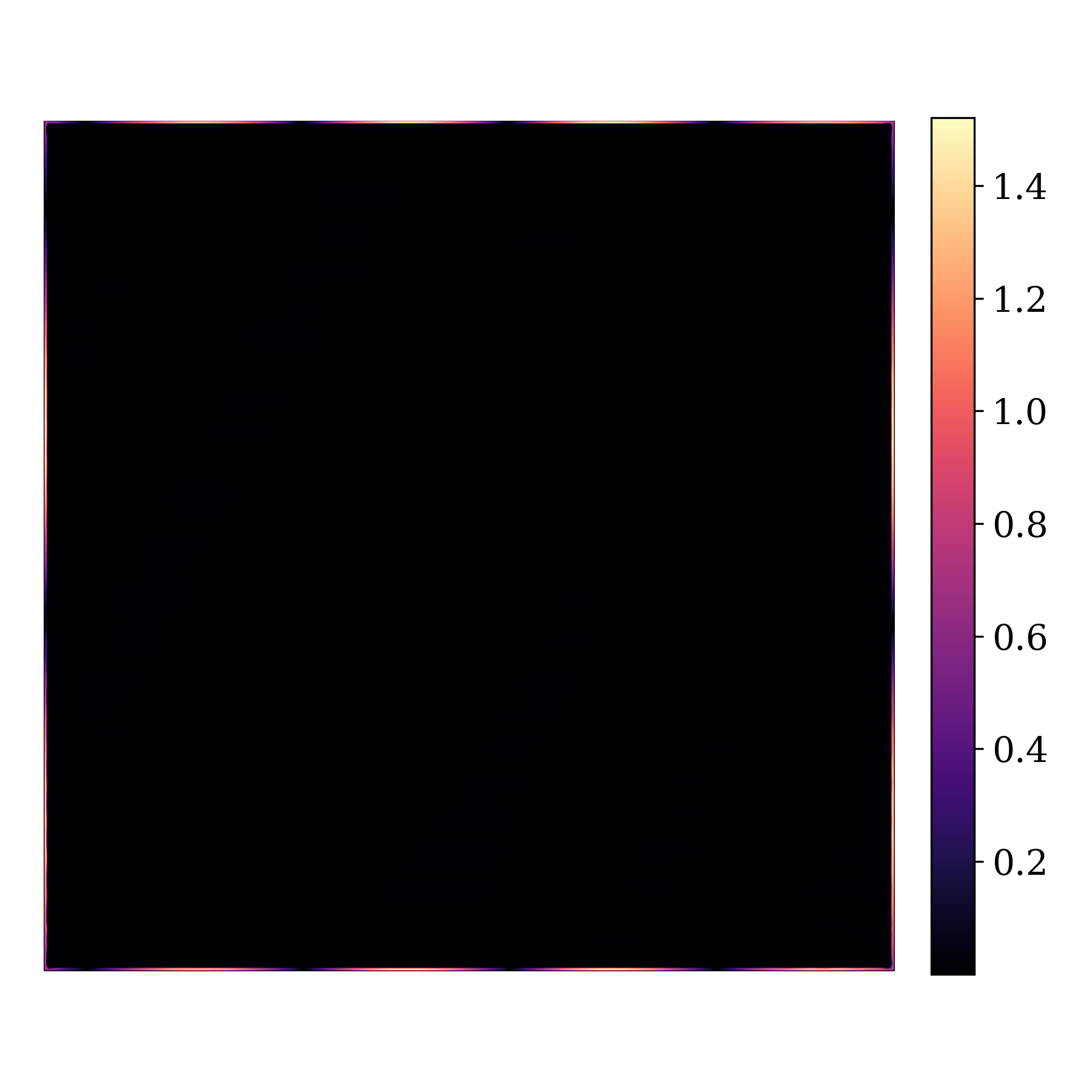}
        \caption*{\hspace{-6mm}(c) Prediction Error}
    \end{minipage}
    \caption{Visualization of a Poisson equation test instance at a super-resolved grid. This instance was used as the evaluation target for applying the transported conformal threshold $\tau_\alpha$. Shown are (a) the predicted solution versus the ground truth, (b) the local ensemble spread, and (c) the absolute prediction error. These visualizations demonstrate the model’s behavior under resolution shift beyond the training range.}
    \label{fig:poisson_super_ex}
\end{figure*}

\section{Grid Geometries}
Let $[0,1]^2$ denote the spatial domain, and let $n_x, n_y$ denote the number of discretization points along each axis. We construct grid coordinates by first defining a parametric variable $t \in [-1,1]$ sampled uniformly with $n_x$ and $n_y$ points in the $x$- and $y$-directions, respectively. The final grid coordinates $(x_i, y_j)$ are obtained via one of the following continuous mappings $T : [-1,1] \to [0,1]$ applied elementwise:
\begin{align}
\text{Uniform:} \quad & T_{\text{uniform}}(t) = \tfrac{1}{2}(t + 1), \\
\text{Clustered Center:} \quad & T_{\text{center}}(t) = \tfrac{1}{2}(t^3 + 1), \\
\text{Clustered Boundary:} \quad & T_{\text{boundary}}(t) = \tfrac{1}{2}(\sin(\tfrac{\pi}{2} t) + 1).
\end{align}
These mappings define the spatial concentration of grid points. The uniform mapping yields evenly spaced coordinates across the domain. The cubic mapping clusters points toward the domain center, due to its vanishing first derivative at $t = \pm 1$ and maximal slope at $t = 0$. Conversely, the sinusoidal mapping clusters points near the boundaries, as its derivative vanishes at $t = 0$ and grows toward $t = \pm 1$. In each case, the transformation ensures that grid coordinates remain in $[0,1]$ and preserve the geometric continuity of the domain. These geometries enable controlled evaluation of calibration robustness under structured discretization shift. We visualize each method in Figure~\ref{fig:grid_schemes}.

\begin{figure*}[ht]
    \centering
    \setlength{\tabcolsep}{4pt}
    \renewcommand{\arraystretch}{1.0}

    \begin{tabular}{@{}c@{}c@{}c@{}c@{}}
        % First row
        \raisebox{22mm}{\rotatebox[origin=c]{90}{\textbf{Uniform}}} &
        \begin{minipage}[b]{0.27\textwidth}
            \centering
            \caption*{(a) Weight Map}
            \includegraphics[width=\textwidth]{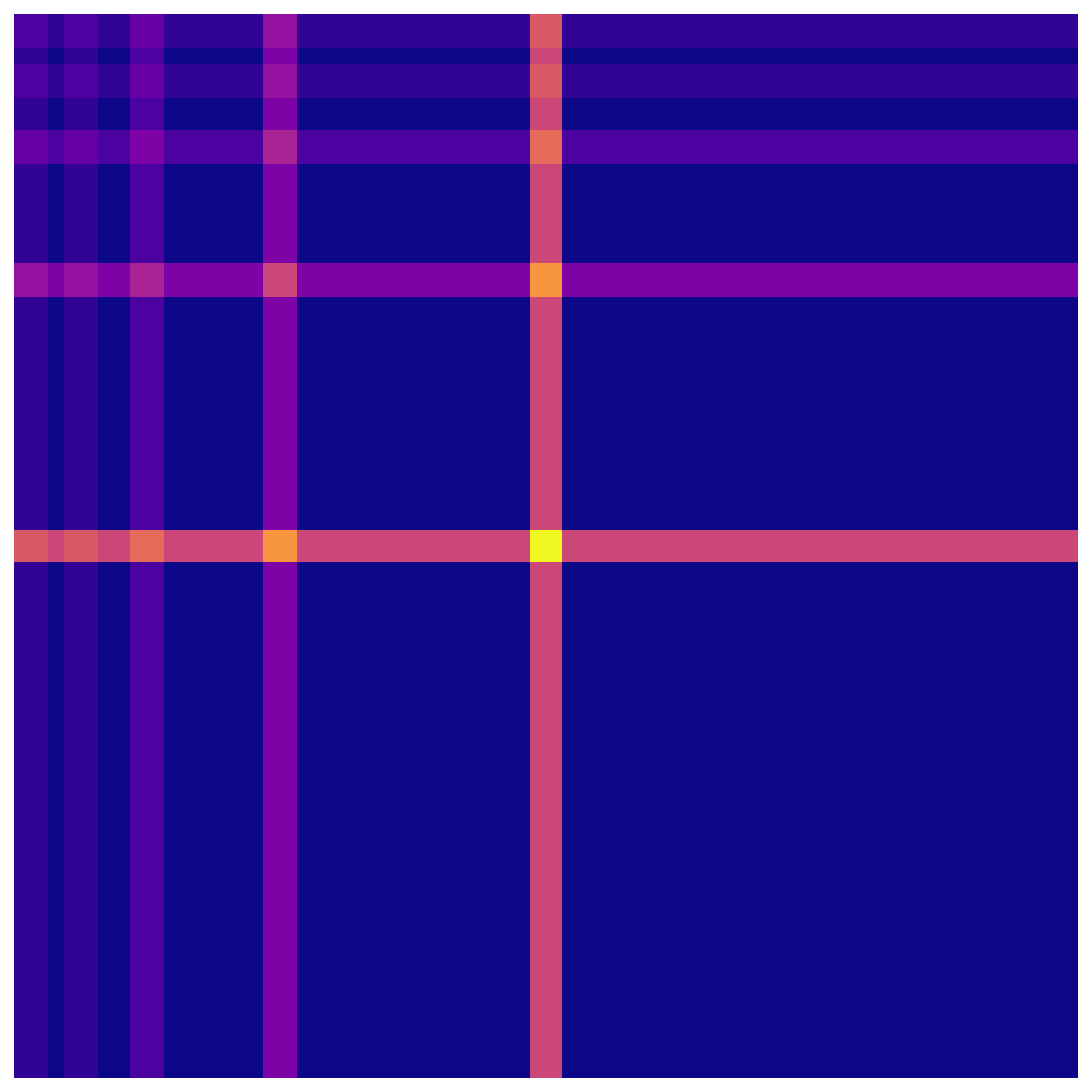}
        \end{minipage} &
        \begin{minipage}[b]{0.27\textwidth}
            \centering
            \caption*{(b) Horizontal Mesh}
            \includegraphics[width=\textwidth]{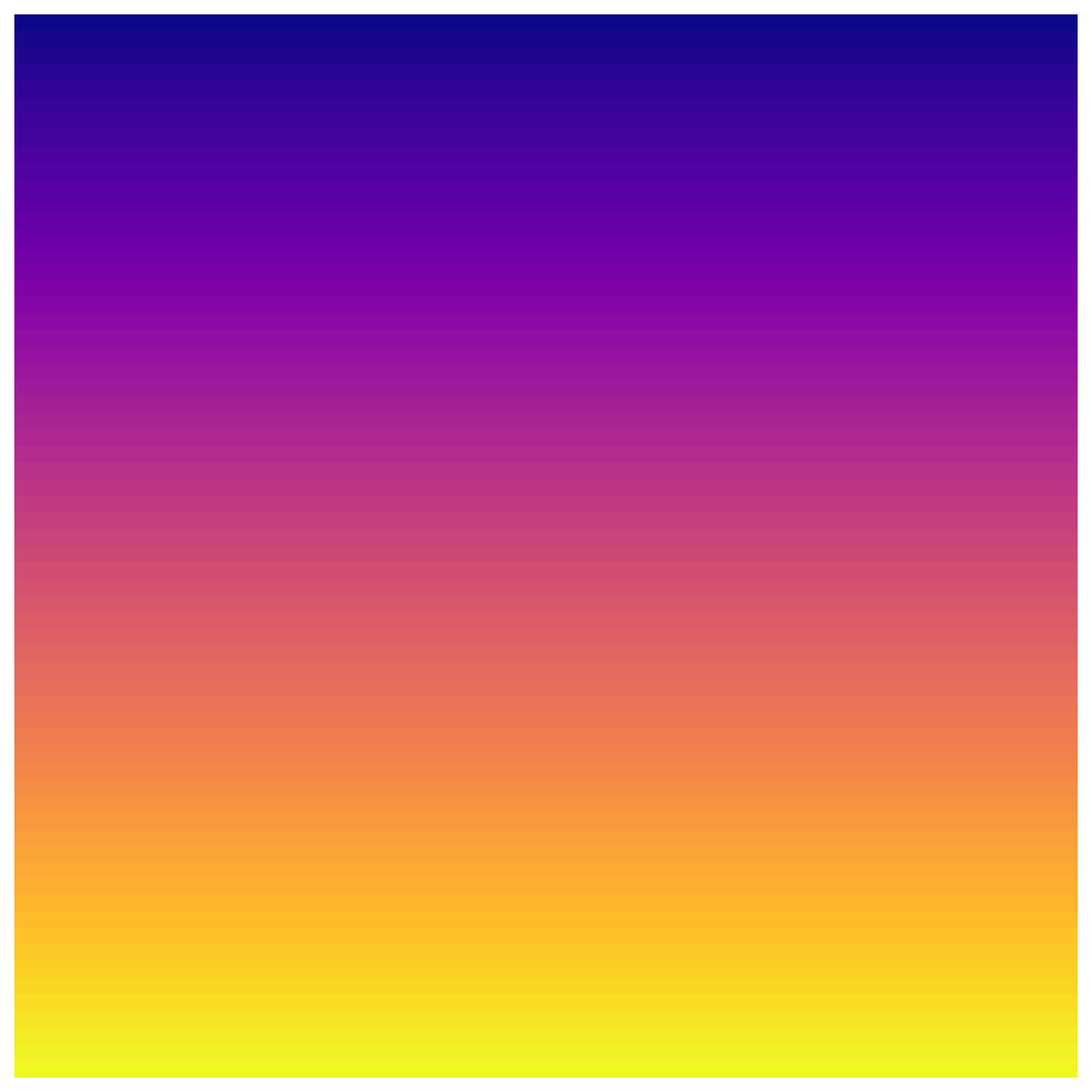}
        \end{minipage} &
        \begin{minipage}[b]{0.27\textwidth}
            \centering
            \caption*{(c) Vertical Mesh}
            \includegraphics[width=\textwidth]{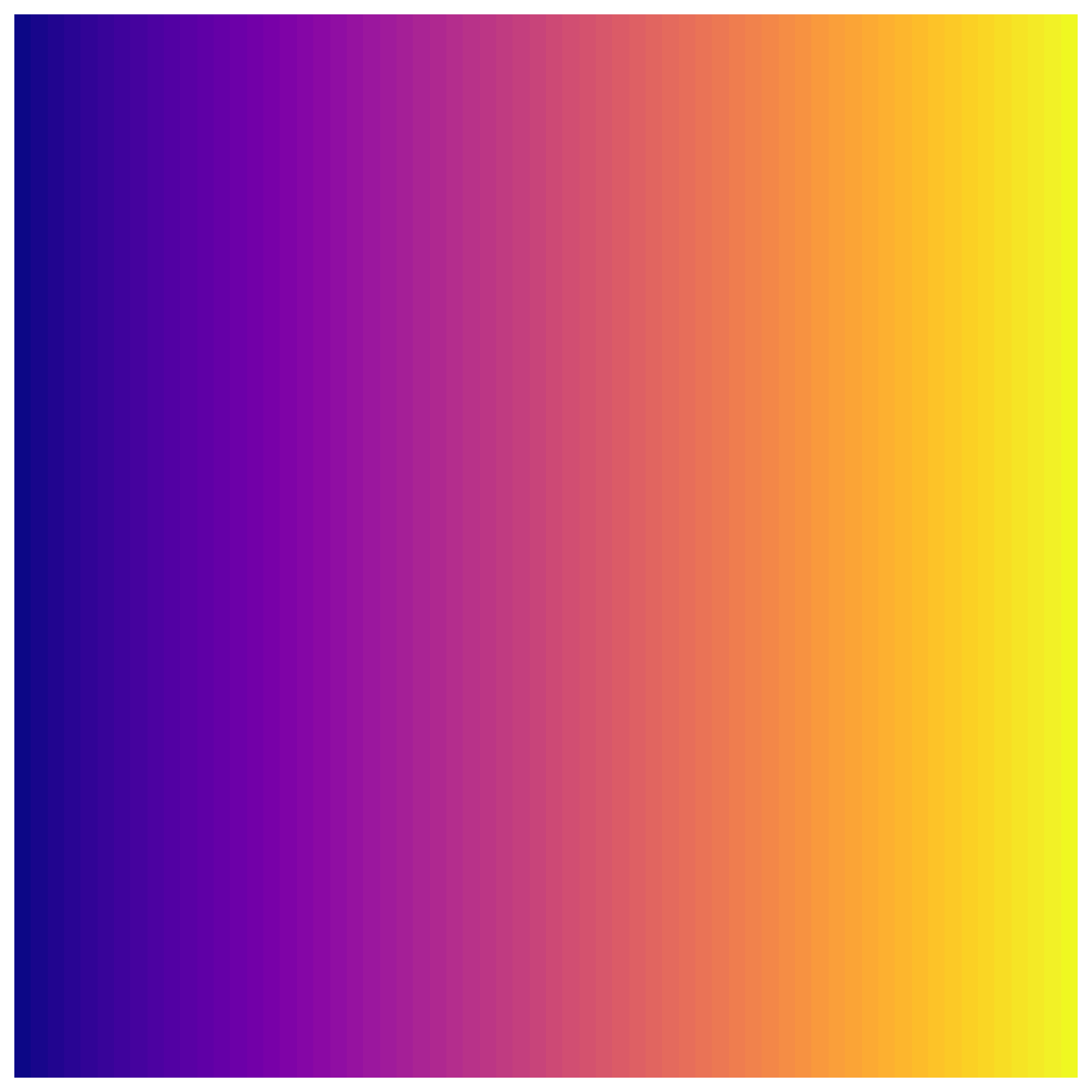}
        \end{minipage} \\[1mm]

        % Second row
        \raisebox{22mm}{\rotatebox[origin=c]{90}{\textbf{Clustered Center}}} &
        \begin{minipage}[b]{0.27\textwidth}
            \centering
            \includegraphics[width=\textwidth]{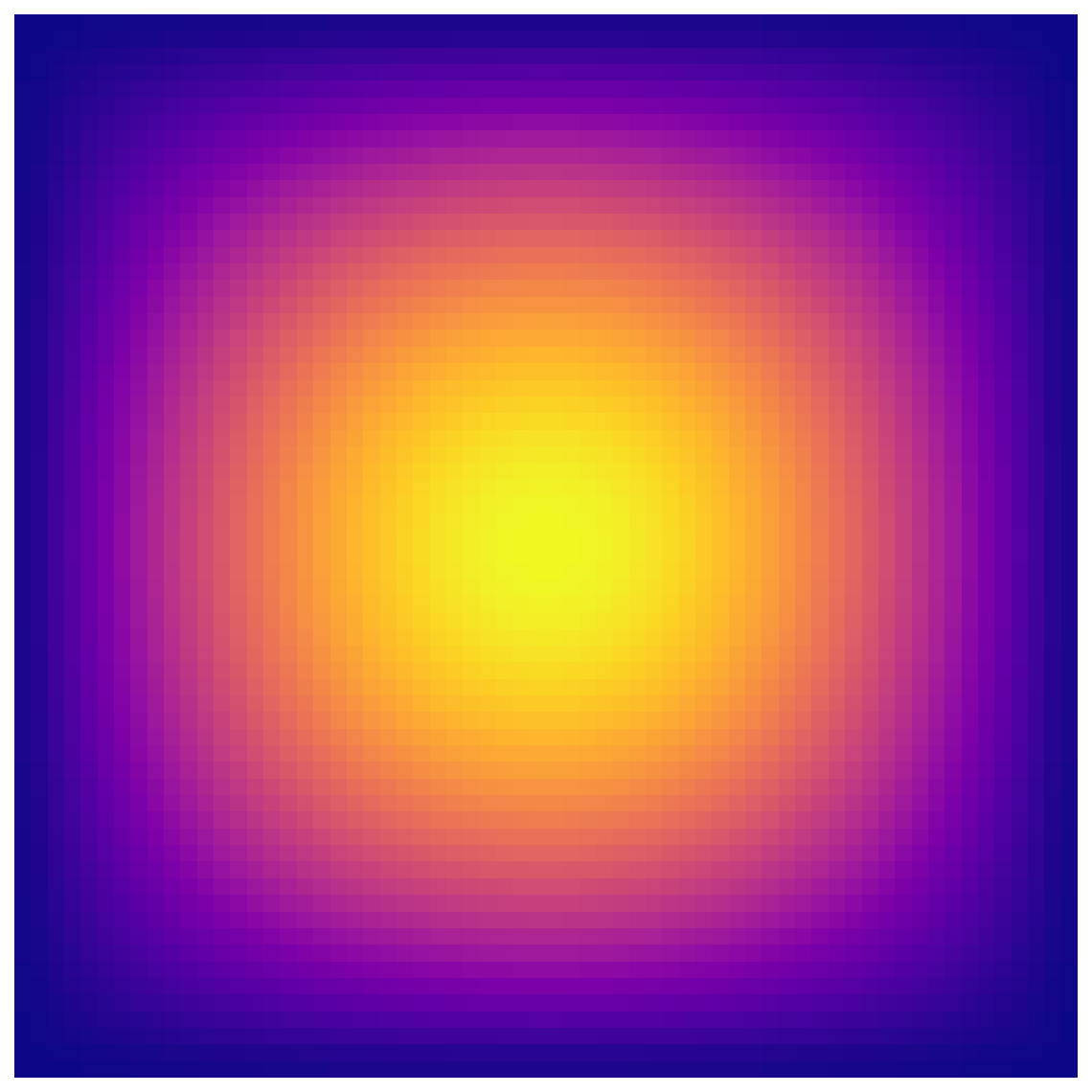}
        \end{minipage} &
        \begin{minipage}[b]{0.27\textwidth}
            \centering
            \includegraphics[width=\textwidth]{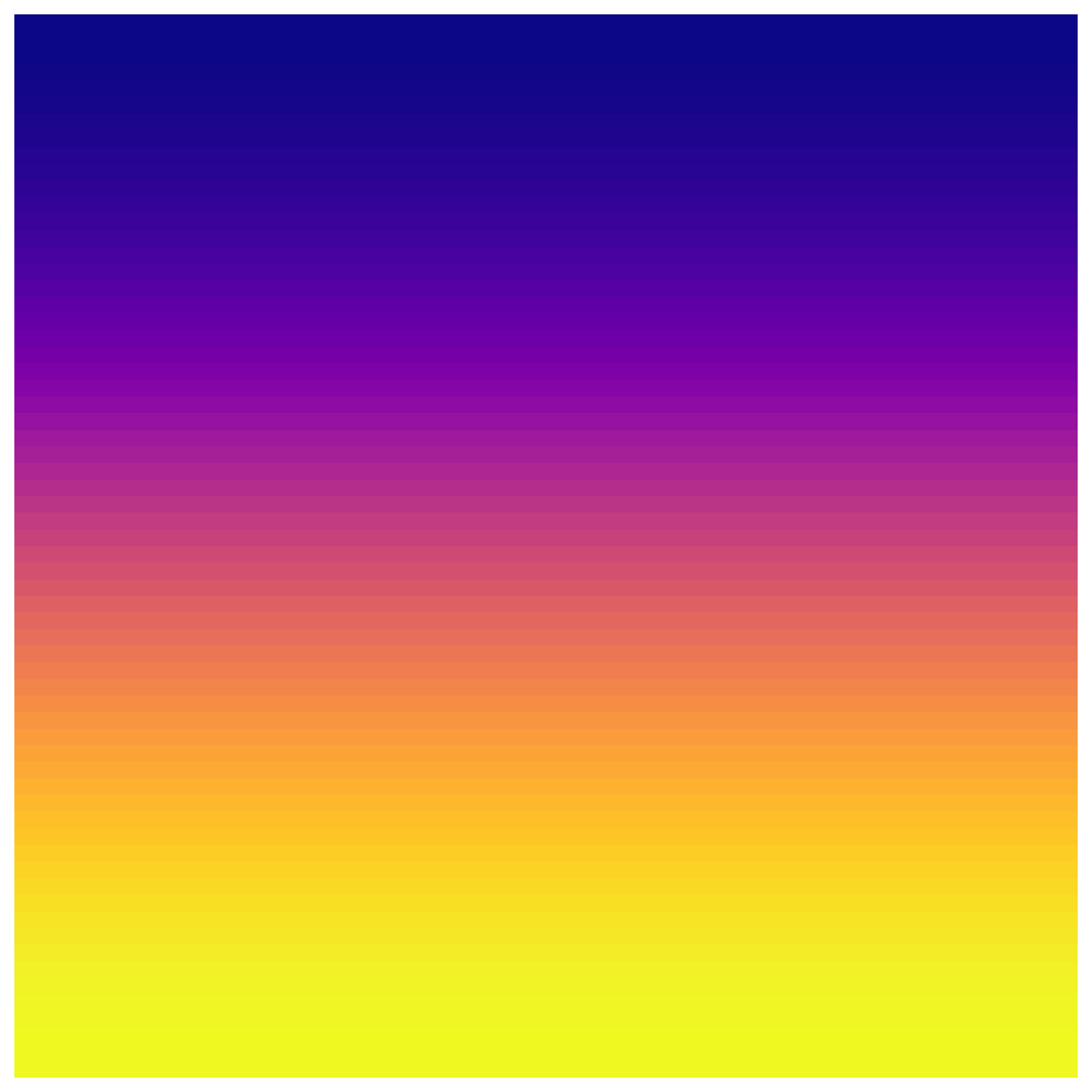}
        \end{minipage} &
        \begin{minipage}[b]{0.27\textwidth}
            \centering
            \includegraphics[width=\textwidth]{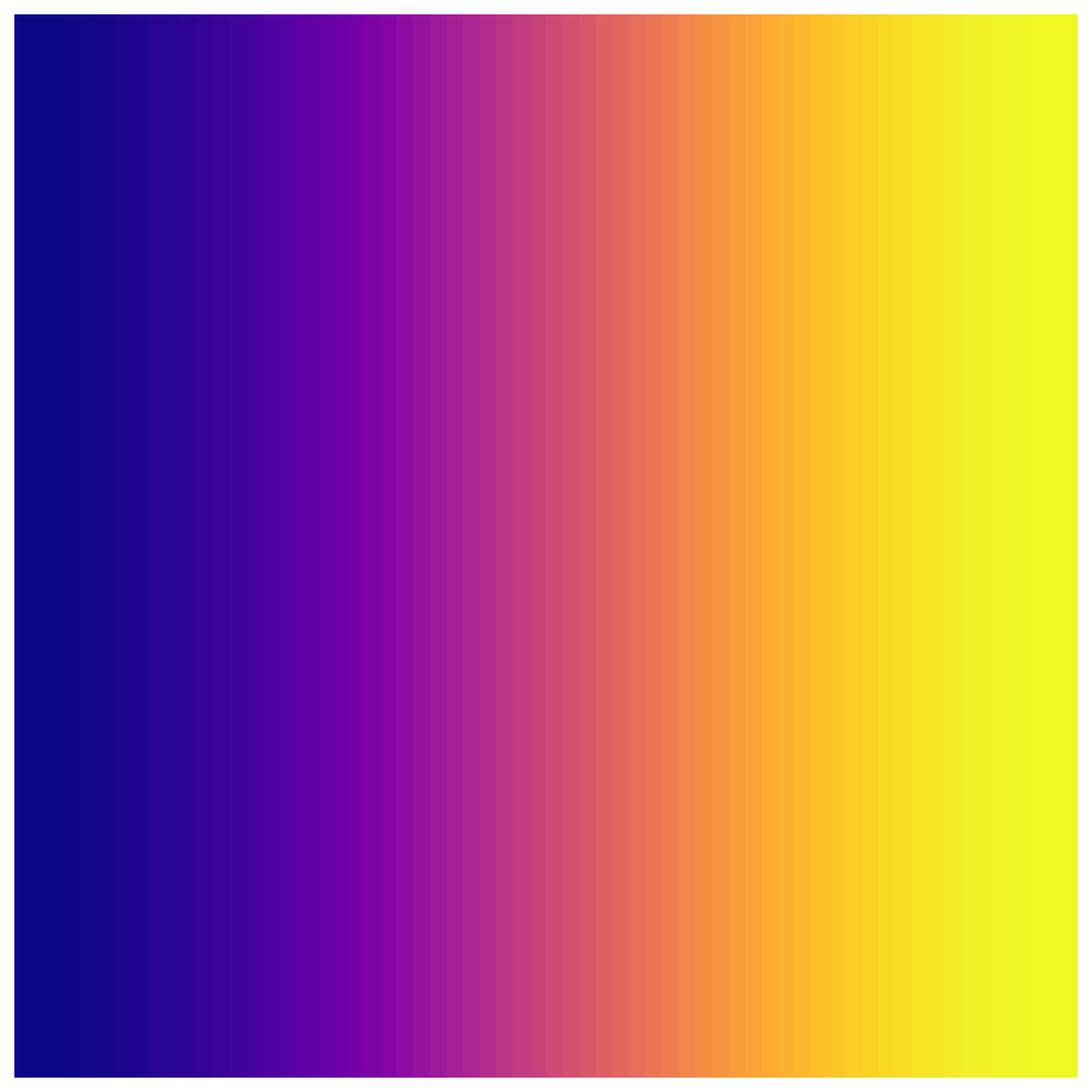}
        \end{minipage} \\[1mm]

        % Third row
        \raisebox{22mm}{\rotatebox[origin=c]{90}{\textbf{Clustered Boundary}}} &
        \begin{minipage}[b]{0.27\textwidth}
            \centering
            \includegraphics[width=\textwidth]{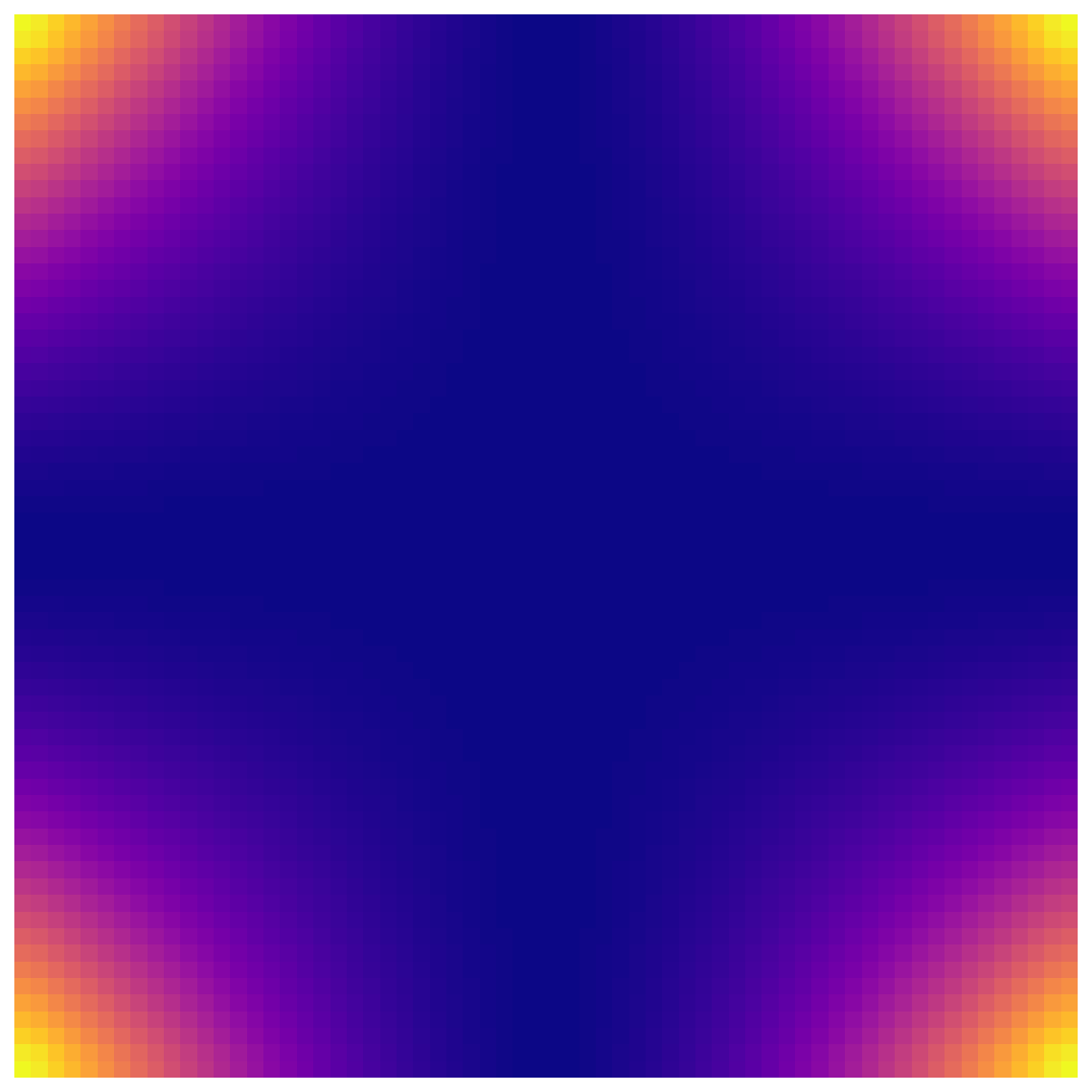}
        \end{minipage} &
        \begin{minipage}[b]{0.27\textwidth}
            \centering
            \includegraphics[width=\textwidth]{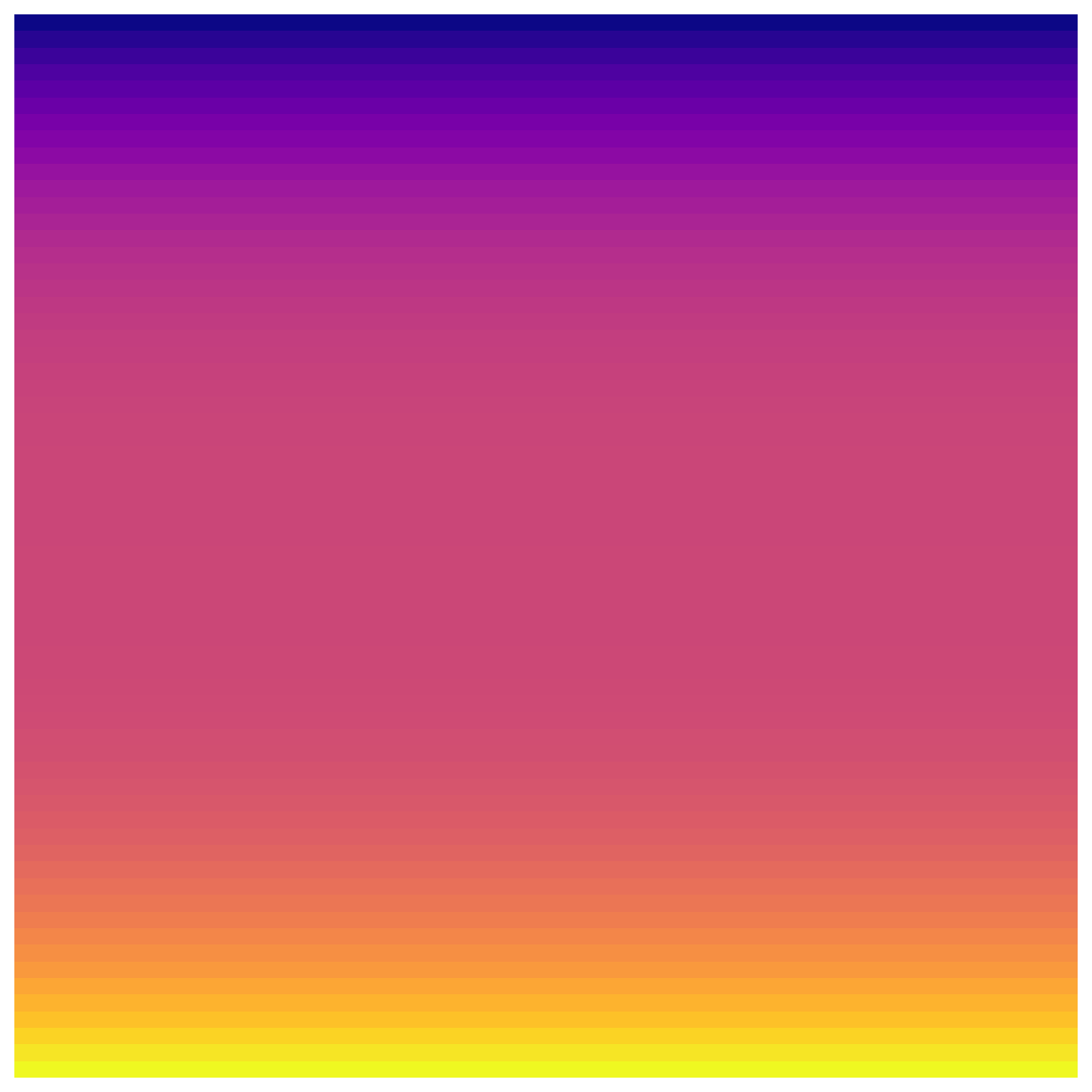}
        \end{minipage} &
        \begin{minipage}[b]{0.27\textwidth}
            \centering
            \includegraphics[width=\textwidth]{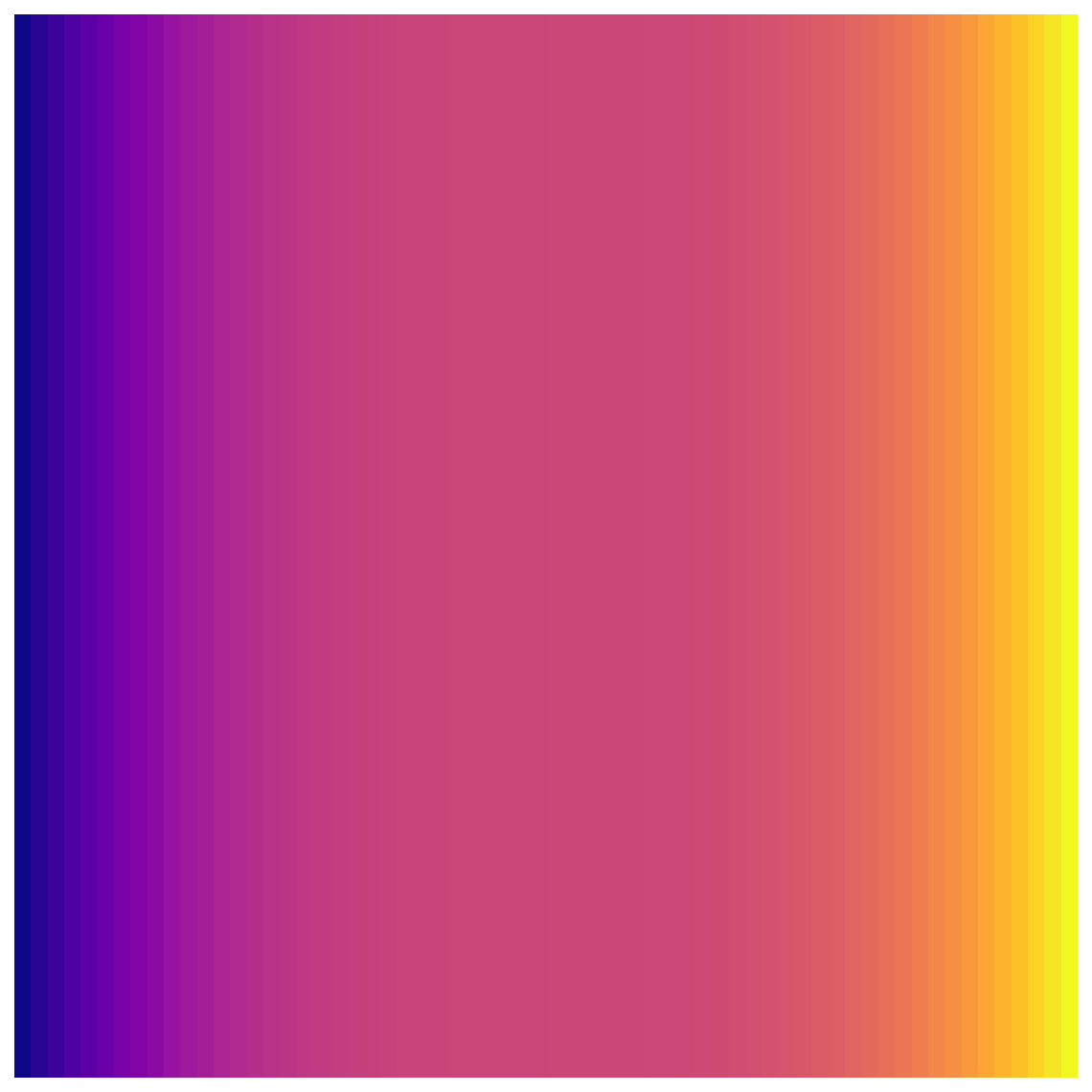}
        \end{minipage}
    \end{tabular}
    \caption{Visualization of the three grid geometries used in our experiments. Each row corresponds to a different grid mapping: uniform (top), clustered toward the center (middle), and clustered toward the boundary (bottom). Columns show the numerical quadrature weight map (left), horizontal coordinate mesh $x$ (center), and vertical coordinate mesh $y$ (right). The visible banding in the uniform case is a result of floating-point rounding errors and does not meaningfully affect the calibration procedure.}
    \label{fig:grid_schemes}
\end{figure*}

\section{Volume as a Nonconformity Metric}
The prediction set, $\Gamma_{\alpha}$, is defined as the set of all candidate functions, $v$, whose nonconformity score, $s(\hat{y}, v)$, is less than or equal to the calibrated threshold, $\tau_{\alpha}$.
\begin{equation}
\Gamma_{\alpha}(\hat{y}) = \{ v \in \mathcal{Y} \,|\, s(\hat{y}, v) \le \tau_{\alpha} \}
\end{equation}
When projected into the $d$-dimensional discretized space, the geometry of this set is determined by the norm used to calculate the volume metric.

\subsubsection*{The Ellipsoidal Prediction Set}
When using the weighted $L^2$ volume, the non-uniform weights, $w_k$, corresponding to the area of each grid cell, define a prediction set in the shape of a d-dimensional ellipsoid. The semi-axis, $a_k$, for each dimension is inversely proportional to the square root of its corresponding weight:
\begin{equation}
a_k = \frac{\tau_{\alpha}}{\sqrt{w_k}}
\end{equation}
The volume of this ellipsoid is given by:
\begin{equation}
V_{\text{ellipsoid}} = \frac{\pi^{d/2}}{\Gamma\left(\frac{d}{2} + 1\right)} \times \prod_{k=1}^{d} a_k = \frac{\pi^{d/2}}{\Gamma\left(\frac{d}{2} + 1\right)} \times \frac{(\tau_{\alpha})^d}{\sqrt{\prod_{k=1}^{d} w_k}}
\end{equation}

\subsubsection*{The Hyperspherical Prediction Set}
When using a standard (unweighted) $L^2$ volume, all weights are implicitly uniform ($w_k=1$). This simplifies the prediction set to a perfect d-dimensional hypersphere with a constant radius of $\tau_{\alpha}$. The volume formula simplifies accordingly:
\begin{equation}
V_{\text{sphere}} = \frac{\pi^{d/2}}{\Gamma\left(\frac{d}{2} + 1\right)} (\tau_{\alpha})^d
\end{equation}

\subsubsection*{Numerically Stable Log-Volume Score}
Due to the high dimensionality, direct computation of the volume can be numerically unstable. We therefore compute the negative log-volume, which yields a strictly positive and stable nonconformity score. A larger score corresponds to a smaller volume, indicating a "sharper" and more confident prediction. For the general ellipsoidal case, the score is:
\begin{equation}
s_{\text{volume}} = -\log(V) = \frac{1}{2}\sum_{k=1}^{d}\log(w_k) - d\log(\tau_{\alpha}) + \log\left(\Gamma\left(\frac{d}{2}+1\right)\right) - \frac{d}{2}\log(\pi)
\end{equation}

\subsection*{Conditional Coverage Comparison}
The following table compares the functional coverage of the two nonconformity scores across the three grid geometries used for the 2D Poisson equation, evaluated at a significance level of $\alpha=0.1$.

\begin{table}[!t]
    \centering
    \caption{Comparison of Volume and Coverage for Relative $L^2$ Norm for the spheroid.}
    \begin{tabular}{lcc}
        \toprule
        \textbf{Grid Geometry} & \textbf{Volume Score} & \textbf{Empirical Coverage} \\
        \midrule
        Uniform & 32117.191 & 0.902 \\
        Clustered Center & 30235.637 & 0.903 \\
        Clustered Boundary & 30415.73 & 0.893 \\
        \bottomrule
    \end{tabular}
    \label{tab:relative_norm_comparison}
\end{table}

\begin{table}[!th]
    \centering
    \caption{Comparison of Volume and Coverage for Weighted $L^2$ Norm on the ellipsoid.}
    \begin{tabular}{lcc}
        \toprule
        \textbf{Grid Geometry} & \textbf{Volume Score} & \textbf{Empirical Coverage} \\
        \midrule
        Uniform & 15146.912 & 0.902 \\
        Clustered Center & 9859.41 & 0.915 \\
        Clustered Boundary & 13076.615 & 0.897 \\
        \bottomrule
    \end{tabular}
    \label{tab:weighted_norm_comparison}
\end{table}

\begin{table*}[!th]
\centering
\renewcommand{\arraystretch}{1.0}
\setlength{\tabcolsep}{4pt}
\caption{Comparison of calibrated thresholds (\(\tau\)) across grid geometries at \(\alpha = 0.1\).}
\begin{minipage}{0.58\textwidth}
\centering
\textbf{(a) Volume}
\vspace{0.5em}
\begin{tabular}{lcccc}
\toprule
\textbf{Grid Type} & \textbf{R-Sphere} & \textbf{W-Ellipse} & \textbf{W-Sphere} & \textbf{R-Ellipse} \\
\midrule
Uniform & 32,117.19 & 15,146.41 & 32,117.19 & 15,146.91 \\
Clustered Center & 30,235.64 & 9,859.41 & 30,162.98 & 9,932.07 \\
Clustered Boundary & 30,415.73 & 13,076.62 & 31,156.65 & 12,335.70 \\
\midrule
\multicolumn{5}{l}{\textbf{Variation}} \\
Std. & 1,038.2 & 2,664.2 & \textbf{977.2} & 2,612.3 \\
Coeff. of Var. & 3.4\% & 21.0\% & \textbf{3.1\%} & 21.0\% \\
\bottomrule
\end{tabular}
\end{minipage}
\hfill
\begin{minipage}{0.38\textwidth}
\centering
\textbf{(b) Threshold}
\vspace{0.5em}
\begin{tabular}{lcc}
\toprule
\textbf{Grid Type} & \textbf{R-Norm} & \textbf{W-Norm} \\
\midrule
Uniform & 0.02810 & 0.02810 \\
Clustered Center & 0.02866 & 0.02821 \\
Clustered Boundary & 0.03280 & 0.03079 \\
\midrule
\multicolumn{3}{l}{\textbf{Variation}} \\
Std. & 0.00257 & \textbf{0.00148} \\
Coeff. of Variation & 8.6\% & \textbf{5.1\%} \\
\bottomrule
\end{tabular}
\end{minipage}

\label{tab:tau_side_by_side_app}
\end{table*}

\end{document}